\newtheorem{theorem}{Theorem}
\title{Exploiting Label Skews in Federated Learning with Model Concatenation}
\author{
    Yiqun Diao\textsuperscript{\rm 1},
    Qinbin Li\textsuperscript{\rm 2},
    Bingsheng He\textsuperscript{\rm 1}
}
\begin{document}

\maketitle

\begin{abstract}
Federated Learning (FL) has emerged as a promising solution to perform deep learning on different data owners without exchanging raw data. However, non-IID data has been a key challenge in FL, which could significantly degrade the accuracy of the final model. Among different non-IID types, label skews have been challenging and common in image classification and other tasks. Instead of averaging the local models in most previous studies, we propose FedConcat, a simple and effective approach that concatenates these local models as the base of the global model to effectively aggregate the local knowledge. To reduce the size of the global model, we adopt the clustering technique to group the clients by their label distributions and collaboratively train a model inside each cluster. We theoretically analyze the advantage of concatenation over averaging by analyzing the information bottleneck of deep neural networks. Experimental results demonstrate that FedConcat achieves significantly higher accuracy than previous state-of-the-art FL methods in various heterogeneous label skew distribution settings and meanwhile has lower communication costs. Our code is publicly available at https://github.com/sjtudyq/FedConcat.
\end{abstract}

\section{Introduction}
\label{sec:intro}
A good machine learning model usually needs a large high-quality dataset to train. However, due to privacy concern and regulations such as GDPR \cite{voigt2017eu}, sometimes it is not allowed to collect original data for centralized training. Federated learning (FL) \cite{kairouz2019advances,li2019flsurvey,litian2019survey,yang2019federated} is proposed to let data owners collaboratively train a better machine learning model without exposing raw data. It has become a hot research topic \cite{li2020practical,dai2020federated,he2020group,lifedprox,karimireddy2019scaffold,liu2020secure,wu2020privacy}. FL has many potential practical applications~\cite{bonawitz2019towards,hard2018federated,kaissis2020secure}. For example, different hospitals can collectively train a FL model for diagnosing diseases through medical imaging, while protecting the privacy of individual patients. 

A typical framework of FL is FedAvg \cite{mcmahan2016communication}, where the clients train and send their local models to the server, and the server averages the local models to update the global model in each round. It has been shown that data heterogeneity is a challenging problem in FL, since non-IID data distributions among FL clients can degrade the FL model performance and slow down model convergence \cite{karimireddy2019scaffold,Li2020On,hsu2019measuring, li2021federated}. According to \citet{li2021federated}, non-IID data includes label skews, feature skews and quantity skews. In this paper, we focus on label skews (i.e., the label distributions of different clients are different), which is popular in reality (e.g., disease distributions vary across different areas). 

Researchers have put some promising effort to address the above label skew challenge. For example, FedProx \cite{lifedprox} uses the $L_2$ distance between the local model and the global model to regularize the local training. MOON \cite{li2021model} regularizes the local training using the similarity between representations of the local model and the global model. FedRS \cite{li2021fedrs} restricts the updates of unseen classes during local training. FedLC \cite{zhang2022federated} further calibrates logits to reduce the updates of minority classes. The key idea of existing studies is usually to reduce the drift produced in local training \cite{lifedprox,li2021model,karimireddy2019scaffold, li2021fedrs, zhang2022federated} or design a better federated averaging scheme in the server \cite{wang2020tackling, Wang2020Federated}. Those algorithms are based on the averaging framework. They attempt to address the label skew problem by mitigating its side effect in federated averaging. However, existing methods cannot achieve satisfactory performance. In label skews, the averaging methods may not make much sense, as each party may have very different models to predict different classes. Especially under extreme label skews where each client has quite different classes (e.g., face recognition), since the local optima are far from each other, averaging these local models leads to significant accuracy degradation. Even worse, it is challenging to quantify how label skews influence the model due to the diversity of label skews in practice. 

In this paper, we think out of the model-averaging scheme, and propose to use model concatenation as the aggregation method. Since each local model is good at classifying samples of several classes due to label skews, we propose to concatenate the features learned by the local models to combine the knowledge from the local models. For example, in the label skew setting, one client has sufficient data on cats with little data on dogs, while another client has sufficient data on dogs with little data on cats. Then, each client can train a local model which is good at predicting one class. Intuitively, concatenating those models can gather all key information, which can help train a good classifier for all classes among clients. This seemingly simple idea fundamentally changes the way of existing methods regarding label skews as an issue to avoid or mitigate. 

With this idea, we propose a novel FL algorithm to address label skews named FedConcat. First, the server divides clients into a few different clusters according to their label distributions. To address the privacy concern of uploading label distribution information, we develop an effective method to infer label distribution directly from the model. Second, FedAvg is conducted among each cluster to learn a good model for each kind of label distribution. Third, the server concatenates encoders of models of all clusters (i.e. neural networks except the last layer). Finally, with the parameters of the concatenated encoders fixed, the server and the clients jointly train a classifier on top of it using FedAvg. We theoretically justify that concatenation keeps richer mutual information than averaging in the feature space by applying the information bottleneck theory. 

Among each cluster, clients have similar label distributions. The label skew problem is alleviated inside the cluster, so FedAvg is competent to train a good model for each cluster with slight label skews. Since the concatenated encoders have already extracted good features, the task of training a linear classifier in the final stage becomes simpler. Therefore, FedAvg can achieve good accuracy for the simplified task. Moreover, through clustering, we can control the size of global model by adjusting the number of clusters.

We conduct extensive experiments with various label skew settings. Our experimental results show that FedConcat can significantly improve the accuracy compared with the other state-of-the-art FL algorithms including FedAvg \cite{mcmahan2016communication}, FedProx \cite{lifedprox}, MOON \cite{li2021model}, FedRS \cite{li2021fedrs} and FedLC \cite{zhang2022federated}. The improvement is more significant under extreme label skews. Besides, FedConcat can achieve better accuracy with much smaller communication and computation costs compared with baselines.

{Our contributions can be summarized as follow:}
\begin{itemize}
    \item Instead of averaging, we propose a new aggregation method in FL by concatenating the local models. Moreover, we apply clustering technique to alleviate label skew and control the size of global model.
    \item We theoretically show that concatenation preserves more information than averaging from the information bottleneck perspective, which guarantees the effectiveness of our approach.
    \item We conduct extensive experiments to show the effectiveness and communication efficiency of FedConcat. Under various label skew settings of a popular FL benchmark \cite{li2021federated}, FedConcat can outperform baselines averagely by 4\% on CIFAR-10, by 8\% on CIFAR-100, by 2\% on Tiny-ImageNet, and by 1\% on FMNIST and SVHN datasets.
\end{itemize}

\section{Background and Related Work}

Denote $D^i=(X^i,Y^i)$ the local dataset of client $i$. Label skews mean that $P(Y^i)$ differs among clients. According to \citet{li2021federated}, label skews can lead to significant accuracy degradation of the global model. It is also prevalent in real-world scenarios. For example, the disease distributions differ in different regions, which leads to label skews when training a global automatic disease diagnosis system. 

Previous studies like FedAvg \cite{mcmahan2016communication} average all models submitted by clients. However, under the non-IID data distribution cases, each client trains a good local model towards its local optimum. While the local optima may be far from each other, simply averaging the local models may produce a global model that is also far from the global optimum. There are many existing studies aiming to solve the non-IID data distribution problem based on FedAvg \cite{mcmahan2016communication}. 

A popular way is to improve local training so that the local model is not too far from the global optimum. For example, FedProx \cite{lifedprox} adds a regularization term which measures the distance between the local model and the global model. MOON \cite{li2021model} shares a similar motivation, regularizing by a contrastive loss to measure the distance between representations of the local model and the global model. Both methods add one more term to the loss function and require extra computations than FedAvg. SCAFFOLD \cite{karimireddy2019scaffold} adjusts the local gradient by keeping a correction term for each client, therefore its communication cost doubles. \citet{wang2021addressing} propose to monitor the class imbalance of each client based on uploaded gradient together with a small public dataset. Then they mitigate the imbalance by their Ratio Loss. FedRS \cite{li2021fedrs} proposes to restrict the updates of missing classes by down-scaling their logits, however it only deals with missing classes. To further deal with minority classes, FedLC \cite{zhang2022federated} proposes to calibrate logits based on the label statistics of local training data. FedAlign \cite{mendieta2022local} proposes to add regularization terms during local training to learn well-generalized representations. FedOV \cite{diao2023towards} introduces the ``unknown'' class and trains open-set classifiers in local training for a better ensemble. More FL works and clustering techniques are discussed in Appendix A.1 and A.2.

\section{Our Method: FedConcat}

\subsection{Problem Statement}
Federated learning aims to train a global model on multiple clients without exposing their raw data. Denote $D^i$ the local dataset of client $i$. Suppose there are $K$ clients, and the local loss function for each client is $\mathcal{L}(\cdot,\cdot)$. Formally, our goal is to train a global model $f$ that minimizes the following objective.

\begin{equation}
    L = \frac{\sum_{i=1}^{K} |D^i|\cdot \mathbb{E}_{(X^i,Y^i)\sim D^i}[\mathcal{L}(f(X^i),Y^i)]}{\sum_{i=1}^{K} |D^i|}
    \label{eq:goal}
\end{equation}

{Like existing studies \citep{mcmahan2016communication, lifedprox, li2021model, karimireddy2019scaffold, wang2021addressing, li2021fedrs, zhang2022federated}, in this study, we assume that transmitting only models achieves a basic level of privacy protection in FL, compared with transferring the raw data. One can consider more advanced protection such as differential privacy \citep{dwork2011differential} to avoid inference attacks against the client models. Our method can be easily integrated with those privacy protection, which is not the focus of this paper and will be interesting future work. }

\subsection{Motivation}
\paragraph{Pitfalls of existing methods in label skews}
Under label skews, the local models can be much different as they are trained on different classes. Therefore it hardly makes sense to average each parameter of these models with quite different tasks. As an example, we train FedAvg on two clients of CIFAR-10 under label skews. The first client only has samples of class 0 and 2, while the second client only has samples of class 1 and 9. For both clients, we train 10 local epochs per round. We show the accuracy of local models and averaged global model of two rounds in Figure \ref{fig:track}. As we can see, the accuracy of local classes increases during local training, while the averaging operation leads to significant accuracy degradation. This example illustrates the problem of averaging local models under extreme label skews. 

\begin{figure}[ht]
    \centering
    \includegraphics[width=\columnwidth]{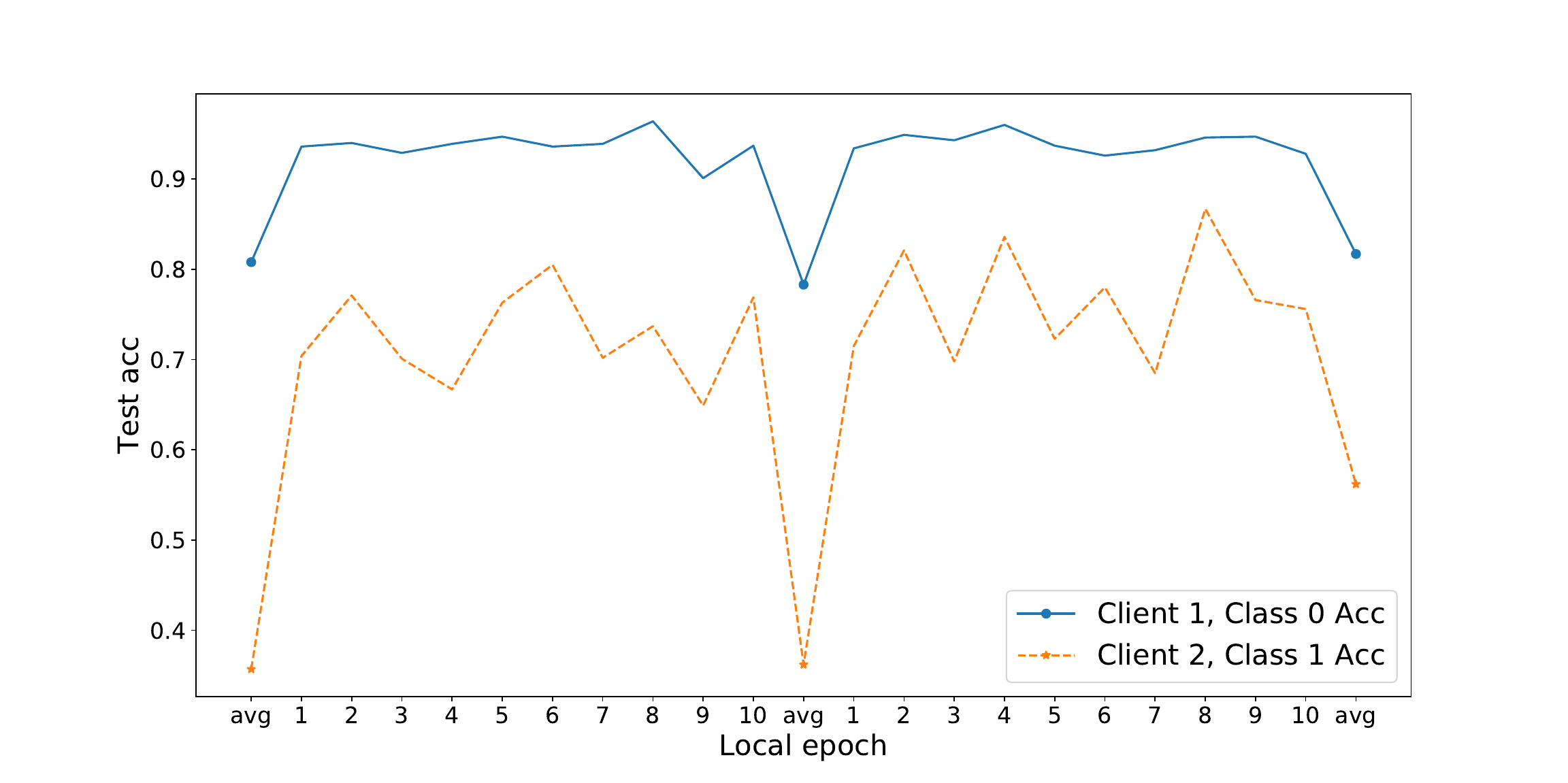}
    \caption{Accuracy of local models and averaged model on two clients under label skews. }
    \label{fig:track}
\end{figure}

\paragraph{An alternative view of label skews}
Let us view the neural network as a feature extractor (all the layers in the network except the last layer) and a classifier (the last layer). Since each client's model is well-fitted in its own dataset, we already have quite a few locally well-trained feature extractors. {Intuitively, concatenating the features from different local extractors can provide a better feature representation for label skews.} Thus we propose the idea of concatenating feature extractors and training a global classifier. 

If we concatenate the models of all clients, our final model size can grow much large if there are many clients, and the overhead of training the global classifier is much more expensive. {In practice, although label skews are prevalent, some parties may have similar label distributions. For example, hospitals in the same region may encounter similar types of diseases. } Therefore, we adopt the clustering method before training. By clustering all clients into a few groups via their label distributions, we can control the size of global model. Inside each group, since grouped clients have similar label distributions, the trained model can capture this kind of data well.

In brief, we tackle the label skew problem by generating solutions for each group individually. Next, we combine those solutions together to get a better global model with smaller communication cost. 

\begin{figure}[ht]
    \centering
    \includegraphics[width=1\columnwidth]{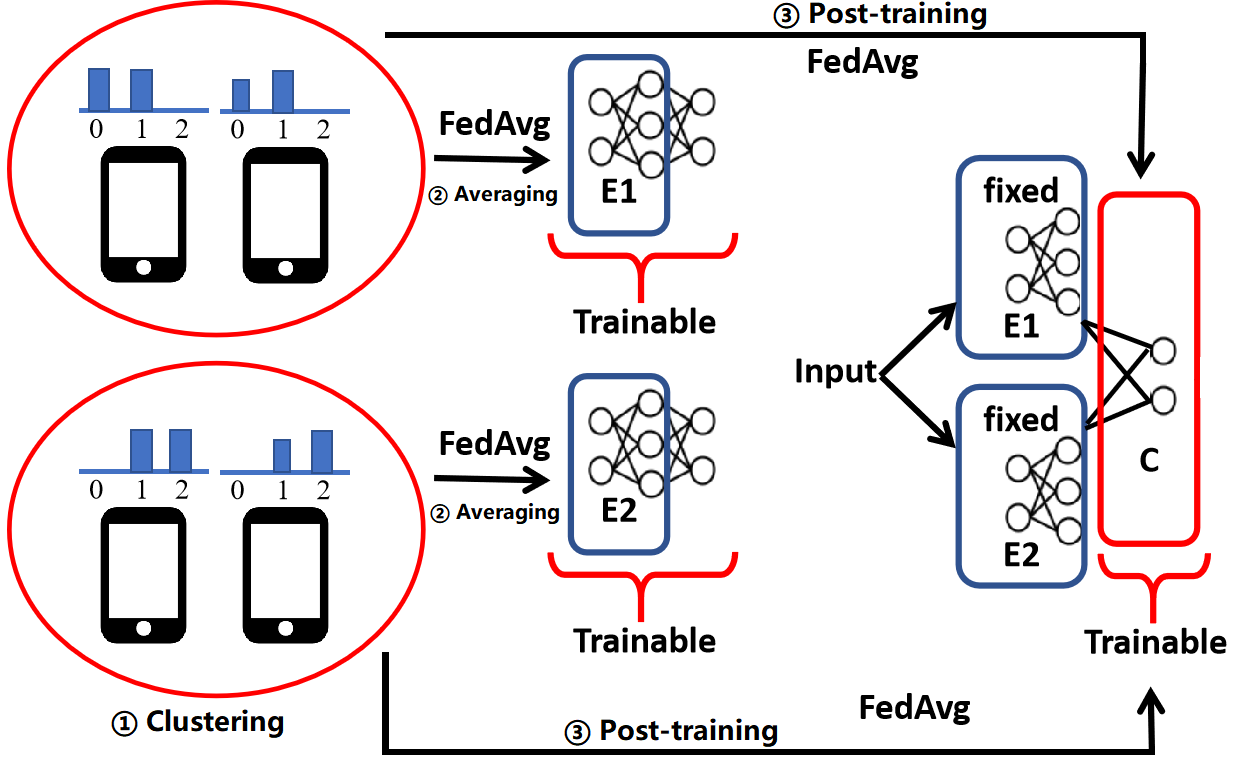}
    \caption{The workflow of FedConcat. (1) Clustering stage: clients are clustered based on label distributions; (2) Averaging stage: each cluster trains a model using FedAvg; (3) Post-training stage: all well-trained feature extractors (E1, E2) are concatenated. All clients train a global classifier (C) collectively with feature extractors fixed. For FedConcat-ID, label distributions are inferred in the clustering stage.}
    \label{fig:alg}
\end{figure}

\subsection{Proposed Algorithm}
Our framework is illustrated in Figure \ref{fig:alg}. It has three stages: clustering, averaging and post-training. First, clients with similar label distributions are grouped into same cluster. Then, each cluster performs FL to train a model that fits well inside the cluster. Finally, the server collects feature extractors of all clusters with their parameters fixed, and train a global classifier among all clients. The overall algorithm is shown in Algorithm \ref{alg:new}. In the following, we elaborate those stages in detail. 

\paragraph{Stage 1-A: Clustering with label distributions}
In order to alleviate the label imbalance problem, we perform clustering based on label distributions, so that each cluster hosts clients with similar label distributions. Formally, for client $i$, suppose there are $N_{i,j}$ samples of class $j$, and there are a total of $N_i = \sum_j N_{i,j}$ samples. Its label distribution is defined as vector
\begin{equation}
    P_i(y)=(\frac{N_{i,1}}{N_i},\frac{N_{i,2}}{N_i},...,\frac{N_{i,m}}{N_i} ) ,
    \label{eq:lb}
\end{equation}
where there are $m$ classes globally. In this paper, we use K-means algorithm \cite{Lloyd1982LeastSQ} to perform clustering. For the hyper-parameter $K$, one can utilize elbow method to select the best value. We use K-means as it is simple, popular and sufficiently good for our study. We have also tried other clustering methods, which are presented in Section \ref{sec:main_cfl}. {With clustering, we can control the number of different models generated by the clients, which helps to reduce the model size in our later concatenation.}

\paragraph{Stage 1-B: Clustering without label distributions}
If clients are unable to upload label distributions due to privacy concerns, we propose to utilize the uploaded local models of the first round to infer the approximate label distribution of each client. {In this way, we only upload trained models like FedAvg, which does not cause any extra privacy leakage.}

{During local training, if a class appears more frequently, the model is prone to output a higher probability for that class. Many works \cite{johnson2019survey, bahng2020learning} have observed that predictions of deep learning model are biased towards the majority classes of the training set.} Intuitively, if we put a large batch of random inputs into the client model, the average prediction can indicate the label distribution of training data. Thus, we generate random data (i.e., images that each pixel is randomly generated from range zero to one) and input these random data into each client model. Then, we calculate the average prediction probability for each class as the inferred label distribution of each client. Formally, denote the model of client $i$ as $f_i$. We randomly generate $r$ inputs $X_1,...,X_r$, the inferred distribution of client $i$
\begin{equation}
P_{i}^{ID}(y)=\frac{1}{r} \sum_{j=1}^r \sigma(f_i(X_j)),
    \label{eq:id}
\end{equation}
where $\sigma$ is the softmax function.

{We refer to this variant as FedConcat with Inferred Distribution (FedConcat-ID). A neural network classifier can be viewed as a function $p(Y|X)$ learned on its training data. In an ideal scenario, if the inputs $X$ are independent of $Y$, the equation $p(Y)=p(Y|X)$ holds true. The underlying intuition of Eq. \eqref{eq:id} is to employ uninformative inputs to approximate $p(Y)$. }

\paragraph{Stage 2: Averaging}
Within each cluster, we use FedAvg \cite{mcmahan2016communication} to train a model that fits well for such cluster. Inside a cluster, since the label distributions of the clients are similar, we expect the global model to have a good performance on the dominant classes of the cluster.

\paragraph{Stage 3: Post-training}
Now that we have $K$ models, we stack their encoders (all layers but the last layer) as the global feature extractor. Then we broadcast the global feature extractor to all clients for one time, and ask clients to jointly train a classifier using FedAvg, with the global feature extractor fixed. Since the encoder training is stopped, we can calculate the features of raw data in a forward pass for only one time. For other training rounds, we can directly feed features into the linear classifier to train it. Therefore in this stage, our major computation and communication happens only for the linear classifier.

\begin{algorithm}[ht]
\SetNoFillComment
\LinesNumbered
\SetArgSty{textnormal}
\KwIn{number of clients $N$, number of clusters $K$, number of training rounds of the encoder $T_e$, number of training rounds of the classifier $T_c$}
\KwOut{the final model $w$}

\If {FedConcat}{
$S_1, S_2, ..., S_K \leftarrow Kmeans({P_i(y)}_{i=1}^N)$ {// Perform K-means based on label distributions}
}
\If {FedConcat-ID}{
Initialize global model $f_g$

\For {$i=1,2,...,N$ in parallel} {
$f_i \leftarrow TrainLocal(f_{g})$ // Send model to each client for local training}

$S_1, S_2, ..., S_K \leftarrow Kmeans({P_{i}^{ID}(y)}_{i=1}^N)$ {// Infer label distributions by Eq. \eqref{eq:id} and perform K-means}

}

Initialize encoder $E_i$ and classifier $C_i$ for each cluster

\For {$t=1,2,..., T_e$}{
\For {$i=1,2,...,K$}{
$E_i, C_i \leftarrow FedAvg(\{E_i, C_i\}, S_i)$ // {Run FedAvg to train encoder and classifier for each cluster}
}
}

$E=\{E_1,E_2,...,E_{K}\}$ 

Initialize global classifier $C$

\For {$t=1,2, ..., T_c$}{
$C \leftarrow FedAvg(C, \bigcup_{i=1}^{K}S_i)$ // {Fix $E$ and run FedAvg on all clients to train $C$}
}

\Return final model $w=\{E, C\}$

\caption{FedConcat and FedConcat-ID}
\label{alg:new}
\end{algorithm}

\section{Theoretical Analysis and Discussion}

\subsection{Analyzing FedConcat by Information Bottleneck}
To answer the question why concatenating encoders works for extreme label skews, we refer to the information bottleneck theory \cite{shwartz2017opening}. Suppose a deep neural network $f$ is trained on dataset $D$. Denote a random train sample $(X,Y) \sim D$ where $X$ are input variables and $Y$ are desired outputs. Suppose the extracted features (representation before the last fully-connected layer) is $Z$, the neural network learns an encoder which minimizes 
\begin{equation}
   \mathbf{E}_{(X,Y)\sim D} [I(X;Z)-\beta I(Z;Y)],
\end{equation}
where $I(\cdot;\cdot)$ denotes the mutual information between two variables and $\beta$ is a positive trade-off parameter related to the task. 

In brief, the encoder of deep neural network aims to remember the features related to the target outputs (maximizing $I(Z;Y)$), while forgetting the information of inputs unrelated to the target outputs (minimizing $I(X;Z)$). 

Consider the label skews in federated learning. Suppose there are two clients with local datasets $D^1$ and $D^2$ respectively. Their locally trained encoders are $f_{e1}$ and $f_{e2}$. Denote the FedConcat encoder as $f_e(\cdot)=\{f_{e1}(\cdot), f_{e2}(\cdot)\}$ and the FedAvg encoder as $f_{avg}$. We have the following theorem.

\begin{theorem}
    $I(f_{avg}(X);Y) < I(f_{e}(X);Y)$, $\forall (X,Y)\sim D^1 \cup D^2$.
\end{theorem}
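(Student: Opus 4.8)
The plan is to exhibit the averaged representation $f_{avg}(X)$ as a deterministic, non-injective function of the richer concatenated representation $f_e(X)=\{f_{e1}(X),f_{e2}(X)\}$, and then invoke the data-processing inequality in its chain-rule form: for any deterministic $g$ one has $I(g(Z);Y)\le I(Z;Y)$, with the gap equal to a conditional mutual information. The whole argument then reduces to (i) realizing $f_{avg}(X)$ as such a function of $f_e(X)$ and (ii) showing that the component it discards is genuinely informative about $Y$ on the mixture $D^1\cup D^2$.

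First I would make the link between averaging and concatenation explicit at the feature level. Writing the averaged encoder's output as a combination of the two local features — in the representative affine regime, $f_{avg}(X)=\tfrac12\bigl(f_{e1}(X)+f_{e2}(X)\bigr)$ — shows that $f_{avg}(X)=g(f_e(X))$ for the deterministic map $g(a,b)=\tfrac12(a+b)$. Since $f_{avg}(X)$ is then a function of $f_e(X)$, appending it to $f_e(X)$ adds nothing, and the chain rule gives
\begin{equation}
I(f_e(X);Y)=I(f_{avg}(X);Y)+I\bigl(f_e(X);Y\mid f_{avg}(X)\bigr).
\end{equation}
Because conditional mutual information is nonnegative, this already yields the non-strict inequality $I(f_{avg}(X);Y)\le I(f_e(X);Y)$ for every $(X,Y)\sim D^1\cup D^2$.

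The hard part is promoting $\le$ to the strict $<$ of the statement, and this is where the label-skew hypothesis must enter. Note that strictness cannot hold in general: if the two clients learned identical encoders, $f_{e1}=f_{e2}$, then $g$ discards nothing and the two quantities coincide. The argument must therefore show that the residual $I\bigl(f_e(X);Y\mid f_{avg}(X)\bigr)$ is strictly positive, i.e. that the difference direction $f_{e1}(X)-f_{e2}(X)$ collapsed by $g$ still carries information about $Y$ beyond the sum. I would argue this from the information-bottleneck characterization of the local encoders: under label skew the two clients maximize $I(Z;Y)$ against divergent label distributions $P(Y^1)\ne P(Y^2)$, so on $D^1\cup D^2$ the classes in which the two encoders specialize are discriminated precisely by the discarded difference and blurred by $f_{avg}(X)$. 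Making the $\beta$ trade-off and the genuine divergence of the two encoders precise — so that the discarded direction provably fails to be conditionally independent of $Y$ given $f_{avg}(X)$ — is the main obstacle and the one step that truly needs the skew assumption.

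Finally I would assemble the pieces: the chain-rule identity supplies the decomposition, nonnegativity of conditional mutual information gives $\le$, and strict positivity of the residual — justified by the complementarity of the label-skewed encoders — upgrades it to the claimed strict inequality uniformly on $D^1\cup D^2$.
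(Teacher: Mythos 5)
Your argument hinges on realizing the averaged representation as a deterministic function of the concatenated one, $f_{avg}(X)=g(f_e(X))$ with $g(a,b)=\tfrac12(a+b)$, and then applying the data-processing inequality. That premise is the gap: FedAvg averages \emph{parameters}, not functions. For any encoder containing a nonlinearity, the network whose weights are the average of $w_1$ and $w_2$ does not compute $\tfrac12\bigl(f_{e1}(X)+f_{e2}(X)\bigr)$, and in general $f_{avg}(X)$ is not a deterministic function of $\bigl(f_{e1}(X),f_{e2}(X)\bigr)$ at all: two inputs on which both local encoders agree can be mapped to different points by the weight-averaged network, so no map $g$ with $f_{avg}(X)=g(f_e(X))$ exists and the chain-rule decomposition never gets off the ground. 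Indeed, the fact that weight averaging is far from function averaging (ensembling) under label skew is exactly the phenomenon the theorem is meant to capture, so restricting to the ``representative affine regime'' assumes away the setting of the claim. Separately, even granting that regime, your argument delivers only the non-strict inequality; the strict inequality --- which is the entire content of the theorem --- is left as an acknowledged obstacle, so the proposal is incomplete on that front as well.

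For comparison, the paper's proof never relates $f_{avg}$ to $f_e$ functionally. It uses two ingredients: (i) monotonicity of concatenation, $I(f_{e}(X);Y) \geq \max\{I(f_{e1}(X);Y), I(f_{e2}(X);Y)\}$, which is the legitimate DPI-type step (each $f_{ei}(X)$ is a coordinate projection of $f_e(X)$); and (ii) the premise that under label skew the averaged model degrades below each local optimum on its own data, i.e. $I(f_{avg}(X);Y) < I(f_{e1}(X);Y)$ for $(X,Y)\sim D^1$ and symmetrically for $D^2$. Chaining (ii) with (i) yields the strict inequality on all of $D^1 \cup D^2$. Note that (ii) is itself an assumption motivated by the empirical degradation of averaged models rather than a derived fact, so the paper's argument is also heuristic at its core; but it places the unproven step where it is at least empirically supported, whereas your route rests on a functional identity that is false for the nonlinear networks the theorem concerns.
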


\begin{proof}
According to the information bottleneck theory, the local model of the first client minimizes
\begin{equation}
   \mathbf{E}_{(X^1,Y^1)\sim D^1} [I(X^1;f_{e1}(X^1))-\beta_1 I(f_{e1}(X^1);Y^1)].
\end{equation}

Similarly, the second client's local model minimizes
\begin{equation}
   \mathbf{E}_{(X^2,Y^2)\sim D^2} [I(X^2;f_{e2}(X^2))-\beta_2 I(f_{e2}(X^2);Y^2)].
\end{equation}

For a good global encoder $f_e$, it should minimize
\begin{equation}
   \mathbf{E}_{(X,Y)\sim D^1 \cup D^2} [I(X;f_{e}(X))-\beta I(f_{e}(X);Y)].
\end{equation}

For the mutual information between representation and target, no matter whether $(X,Y) \sim D^1$ or $(X,Y) \sim D^2$, we have 
\begin{equation}
    I(f_{e}(X);Y) \geq \max\{I(f_{e1}(X);Y), I(f_{e2}(X);Y)\},
\end{equation}
which means the representation of concatenated encoders are more related to the global targets than single locally optimized encoder. 

For the part of forgetting task-unrelated information, we have
\begin{equation}
    I(f_{e}(X);X) \geq \max\{I(f_{e1}(X);X), I(f_{e2}(X);X)\},
\end{equation}
which is a disadvantage according to information bottleneck theory. However, under extreme label skews, the encoder of a client can extract little information related to data of another client. According to experimental results in \citet{shwartz2017opening}, when deep neural network reaches convergence, the mutual information between last layer representation and raw input (i.e. $I(f_e(X);X)$) becomes very small, as compared to $I(f_e(X);Y)$. Therefore, we regard $I(f_e(X);Y)$ as the main part. We also justify such hypothesis by experiments, which are included in Appendix C.

For the averaging solution, under label skews, the averaged global model becomes very different from local optima. Thus, for $(X,Y) \sim D^1$, we have 
\begin{equation}
    I(f_{avg}(X);Y) < I(f_{e1}(X);Y) \leq I(f_{e}(X);Y).
    \label{eq:10}
\end{equation}

Similarly, for $(X,Y) \sim D^2$, we have
\begin{equation}
    I(f_{avg}(X);Y) < I(f_{e2}(X);Y) \leq I(f_{e}(X);Y).
    \label{eq:11}
\end{equation}

Combining Eq. \eqref{eq:10} and \eqref{eq:11}, for $(X,Y)\sim D^1 \cup D^2$, we have 
$I(f_{avg}(X);Y) < I(f_{e}(X);Y)$.

\end{proof}

\paragraph{Implication} Under label skews, the averaged encoder contains less mutual information about the labels compared with the concatenation of well-trained local encoders. This explains why the averaged model suffers from accuracy decay compared with local models, as shown in Figure \ref{fig:track}.

\subsection{Privacy}
FedConcat needs client label distribution information, therefore it is applicable when local label distribution is not sensitive. For scenarios that also consider label distribution privacy, users can adopt FedConcat-ID, which only transfers the models and provides the same privacy level as FedAvg. The inference attack towards client model is a complex topic and defense mechanisms against them fall outside this paper's scope. It is an interesting topic to explore more robust measures to prevent such breaches in future works.

\subsection{Communication}

Suppose the model size is $w$, and its last classifier layer size is $cw$ ($c<1$). For $T_e$ encoder rounds, each client's communication cost is $2T_ew$. Downloading the concatenated model costs $Kw$. Next, for classifier rounds each client costs $2T_cKcw$. The total cost of FedConcat is $2wN(T_e+K/2+cKT_c)$. For FedAvg with $T$ rounds, the cost is $2wNT$. Suppose $T=T_e+T_c$ (i.e., we train the same communication rounds for FedAvg and FedConcat). Given the same model size $w$ and the number of clients $N$, communication overhead can be saved by choosing small $c,K$, i.e. limiting the classifier size and number of cluster. Experimental results of accuracy with respect to communication cost are shown in Appendix D.2. {Our approach achieves higher accuracy and more stable convergence given the same communication cost with FedAvg and other baselines.}

\section{Experiments}
\label{sec:exp}

We have conducted extensive experiments to evaluate our method. Through comprehensive experiments, we find that our techniques consistently outperform baseline methods, delivering superior accuracy and more stable convergence under various label skews. Importantly, our methods remain effective in scenarios characterized by partial client participation, large models, and an increased number of clients. The introduced label inference and clustering components are both straightforward and effective. Due to the space limit, we put the following experiments in Appendix.

\begin{itemize}
    \item D.2: training curves with communication costs.
    \item D.3: training curves with computation costs.
    \item D.4: compared to baselines on the concatenated model.
    \item D.5: varying the number of clusters.
    \item D.6: more results on varying the clustering strategies.
    \item D.7: client partial participation settings. 
    \item D.8: more results and analysis on large models.
    \item D.9: more results on varying the number of clients.
    \item D.10: analyzing FedConcat-ID label inference module.
\end{itemize}

\begin{table*}[ht]
\centering
\caption{Experimental results of our methods compared with baselines with same communication cost. The model of baseline algorithms is the model of one cluster in FedConcat. {We repeat experiments with three different random seeds.}}
\label{tbl:main}
\resizebox{2\columnwidth}{!}{
\begin{tabular}{|c|c|c|c|c|c|c||c|c||c|}
\hline
Dataset & Partition & FedAvg & FedProx & MOON & FedRS & FedLC & FedConcat & FedConcat-ID & Centralized \\ \hline
\multirow{4}{*}{CIFAR-10} & $\#C=2$ & 53.6\%$\pm$0.8\% & 53.1\%$\pm$0.6\% & 53.4\%$\pm$1.3\% & 53.8\%$\pm$0.8\% & 49.8\%$\pm$0.6\% & \textbf{56.9\%$\pm$0.2\%}& \textbf{56.5\%$\pm$2.6\%} & \multirow{4}{*}{70.2\%$\pm$0.9\%} \\ \cline{2-9}
& $\#C=3$ & 57.6\%$\pm$0.6\% & 57.4\%$\pm$0.8\% & 58.6\%$\pm$1.8\% & 59.1\%$\pm$1.4\% & 58.1\%$\pm$1.4\% & \textbf{62.0\%$\pm$0.6\%}& \textbf{61.8\%$\pm$0.8\%}& \\ \cline{2-9}
& $p_k \sim Dir(0.1)$ & 53.0\%$\pm$1.0\% & 52.8\%$\pm$1.0\% & 53.1\%$\pm$3.5\% & 54.8\%$\pm$0.3\% & 53.7\%$\pm$0.8\% & \textbf{57.7\%$\pm$0.4\%}& \textbf{56.9\%$\pm$1.4\%}& \\ \cline{2-9}
& $p_k \sim Dir(0.5)$ & 59.9\%$\pm$0.5\% & 59.9\%$\pm$0.6\% & 61.2\%$\pm$1.8\% & 61.5\%$\pm$0.8\% & 61.5\%$\pm$1.1\% & \textbf{64.2\%$\pm$0.7\%}& \textbf{63.7\%$\pm$0.8\%}& \\ \hline
\multirow{4}{*}{SVHN} & $\#C=2$ & 82.8\%$\pm$0.5\% & 82.6\%$\pm$0.6\% & 83.0\%$\pm$0.1\% & 79.5\%$\pm$1.2\% & 75.7\%$\pm$2.3\% & \textbf{83.4\%$\pm$1.4\%}& \textbf{83.2\%$\pm$1.9\%}& \multirow{4}{*}{86.2\%$\pm$0.9\%}  \\ \cline{2-9}
& $\#C=3$ & 85.2\%$\pm$0.4\% & 85.2\%$\pm$0.6\% & 84.7\%$\pm$0.4\% & 85.7\%$\pm$0.4\% & 84.8\%$\pm$0.4\% & \textbf{86.0\%$\pm$0.9\%}& \textbf{86.1\%$\pm$0.5\%}& \\ \cline{2-9}
& $p_k \sim Dir(0.1)$ & \textbf{84.0\%$\pm$1.3\%} & 83.9\%$\pm$1.2\% & 83.7\%$\pm$1.1\% & 80.9\%$\pm$0.9\% & 78.8\%$\pm$0.8\% & 83.2\%$\pm$0.9\%& 82.9\%$\pm$0.3\% & \\ \cline{2-9}
& $p_k \sim Dir(0.5)$ & 87.2\%$\pm$0.2\% & 87.2\%$\pm$0.2\% & 87.2\%$\pm$0.2\% & 87.1\%$\pm$0.1\% & 87.1\%$\pm$0.5\% & \textbf{87.5\%$\pm$0.1\%}& \textbf{87.9\%$\pm$0.3\%}& \\ \hline
\multirow{4}{*}{FMNIST} & $\#C=2$ & 79.0\%$\pm$4.7\% & 81.8\%$\pm$2.8\% & 81.4\%$\pm$1.4\% & 78.3\%$\pm$1.8\% & 77.7\%$\pm$2.9\% & \textbf{84.4\%$\pm$0.6\%}& \textbf{83.0\%$\pm$2.0\%}& \multirow{4}{*}{88.4\%$\pm$0.2\%} \\ \cline{2-9}
& $\#C=3$ & 84.7\%$\pm$1.4\% & 85.7\%$\pm$0.3\% & 84.6\%$\pm$1.9\% & 85.8\%$\pm$0.8\% & 86.0\%$\pm$0.3\% & \textbf{87.1\%$\pm$0.2\%}& \textbf{86.6\%$\pm$0.1\%}& \\ \cline{2-9}
& $p_k \sim Dir(0.1)$ & 85.1\%$\pm$0.5\% & \textbf{85.2\%$\pm$0.8\%} & 85.0\%$\pm$0.1\% & 82.5\%$\pm$0.5\% & 82.1\%$\pm$1.6\% & 84.5\%$\pm$0.1\%& 85.0\%$\pm$0.4\% & \\ \cline{2-9}
& $p_k \sim Dir(0.5)$ & 87.5\%$\pm$0.3\% & 87.4\%$\pm$0.4\% & 87.4\%$\pm$0.5\% & 87.5\%$\pm$0.5\% & 87.5\%$\pm$0.4\% & \textbf{87.7\%$\pm$0.1\%}& \textbf{87.5\%$\pm$0.2\%}& \\ \hline
\end{tabular}
 }
\end{table*}

\subsection{Experiment Setups}
\paragraph{Datasets} 
Our experiments engage CIFAR-10 \citep{krizhevsky2009learning}, FMNIST \citep{xiao2017fashion}, SVHN \citep{netzer2011reading}, CIFAR-100 \citep{krizhevsky2009learning}, and Tiny-ImageNet datasets \citep{wu2017tiny} to evaluate our algorithm. The partition strategy from \citet{li2021federated} generates various non-IID settings, with a focus on label skews, given their significant accuracy degradation \cite{li2021federated}. In experiments, $\#C=k$ represents clients with $k$ unique labels, while $p_k \sim Dir(\beta)$ denotes the Dirichlet distribution sampled proportion of each class samples assigned to each client. By default, we divide whole dataset into 40 clients.

\paragraph{Baselines}
Our method is compared with well-known, open-sourced FL methods including FedAvg \cite{mcmahan2016communication}, FedProx \cite{lifedprox}, MOON \cite{li2021model}, FedRS \cite{li2021fedrs}, and FedLC \cite{zhang2022federated} which cater to label skews. The baseline settings replicate those from \citet{li2021federated}, running 50 rounds with each client training 10 local epochs per round, batch size 64, and learning rate 0.01 using SGD optimizer with weight decay $10^{-5}$.

\paragraph{Models}
To investigate diverse scenarios with different clients' capacities, we experiment with three different neural networks: simple CNN, VGG-9, and ResNet-50. By default, we use simple CNN. Appendix D.1 provides more details on the setups.

\subsection{Effectiveness}
\label{sec:main_exp}
We evaluate the performance of FedConcat and FedConcat-ID against other baselines. By default, our configuration includes a division of the 40 clients into $K=5$ clusters, and 200 rounds allocated for training the classifier. In order to equate the communication cost of FedConcat to that of 50 rounds of FedAvg, we set the encoder training to 31 rounds. For the FMNIST dataset, due to its image size differing from CIFAR-10 and SVHN, we record the test accuracy at classifier round 173 to maintain similar communication costs. 

The results in Table \ref{tbl:main} illustrate that FedConcat consistently outperforms the other five FL algorithms in most scenarios. Specifically, in the challenging CIFAR-10 dataset, both FedConcat and FedConcat-ID offer an average improvement of about 4\%. When considering partition types, notable improvements are evident in the more complex $\#C=2$ and $\#C=3$ partitions. For the Dirichlet-based label distributions of SVHN and FMNIST datasets, since the label skews are slight, the accuracy degradation of baseline algorithms from centralized training (200 epochs) is small. In such scenarios, our methods exhibit comparable accuracy with the baselines.

\subsection{Scalability}
\label{sec:scalab}
In this section, we evaluate the scalability of FedConcat. We keep the number of clusters $K=5$. During each round, a random selection of 50\% of the clients is sampled to participate in FL training. The results, as illustrated in Table \ref{tbl:scala_main}, confirm that both FedConcat and FedConcat-ID continue to outperform baseline algorithms with 100 or 200 clients and partial participation settings. 

\begin{table}[ht]
\centering
\caption{Scalability of FedConcat and FedConcat-ID compared with baselines on CIFAR-10 dataset. }
\label{tbl:scala_main}
\resizebox{\columnwidth}{!}{
\begin{tabular}{|c|c|c|c|c|c|c||c|c|}
\hline
\#Clients & {Partition} & {FedAvg} & {FedProx} & {MOON} & {FedRS} & {FedLC} & {FedConcat} & {FedConcat-ID} \\ \hline
\multirow{4}{*}{100} & $\#C=2$ & 44.4\% & 43.7\% & 44.4\% & \textbf{54.4\%} & 52.6\% & 51.0\% & 53.2\% \\ \cline{2-9}
& $\#C=3$ & 55.4\% & 54.6\% & 55.2\% & 56.5\% & 54.8\% & \textbf{59.0\%}& \textbf{59.3\%} \\ \cline{2-9}
& $p_k \sim Dir(0.1)$ & 45.0\% & 44.8\% & 45.3\% & 50.5\% & 50.0\% & 49.1\%& \textbf{50.7\%} \\ \cline{2-9}
& $p_k \sim Dir(0.5)$ & 58.1\% & 58.3\% & 57.7\% & 59.0\% & 58.1\% & \textbf{63.3\%}& \textbf{62.2\%}  \\ \hline

\multirow{4}{*}{200} & $\#C=2$ & 39.7\% & 40.9\% & 40.3\% & 46.2\% & 48.6\% & \textbf{48.9\%} & 44.6\%  \\ \cline{2-9}
& $\#C=3$ & 47.7\% & 48.2\% & 47.8\% & 51.5\% & 51.0\% & \textbf{53.1\%}& \textbf{51.8\%} \\ \cline{2-9}
& $p_k \sim Dir(0.1)$ & 40.5\% & 41.2\% & 40.9\% & 43.6\% & 43.7\% & \textbf{47.6\%}& \textbf{46.7\%}  \\ \cline{2-9}
& $p_k \sim Dir(0.5)$ & 53.9\% & 53.5\% & 53.9\% & 54.4\% & 53.7\% & \textbf{56.7\%}& \textbf{56.8\%}  \\ \hline

\end{tabular}
 }
\end{table}

\subsection{Experiments on Larger Model}

\begin{table*}[ht]
\centering
\caption{Experimental results on CIFAR-100 and Tiny-ImageNet with ResNet-50. We tune the weight decay among $\{0.00001,0.001,0.002,0.005\}$ for all algorithms. We present the average and standard deviation of the last 10 rounds.}
\label{tbl:resnet}

\resizebox{2\columnwidth}{!}{
\begin{tabular}{|c|c|c|c|c|c|c||c|c||c|}
\hline
Dataset & Partition & FedAvg & FedProx & MOON & FedRS & FedLC & FedConcat & FedConcat-ID & Centralized \\ \hline
\multirow{4}{*}{CIFAR-100} & $\#C=2$ & 8.4\%$\pm$1.2\% & 8.1\%$\pm$1.3\% & 8.0\%$\pm$1.0\% & 7.0\%$\pm$1.5\% & 3.8\%$\pm$0.4\%  & \textbf{18.5\%$\pm$0.7\%} & \textbf{16.4\%$\pm$0.8\%} & \multirow{4}{*}{70.5\%$\pm$0.1\%} \\ \cline{2-9}
& $\#C=3$ & 21.6\%$\pm$2.8\% & 18.7\%$\pm$6.0\% & 19.0\%$\pm$2.4\% & 19.2\%$\pm$1.7\% & 12.6\%$\pm$0.7\% & \textbf{34.4\%$\pm$1.1\%} & \textbf{32.9\%$\pm$0.6\%} &  \\ \cline{2-9}
& $p_k \sim Dir(0.1)$ & 52.4\%$\pm$4.9\% & 51.5\%$\pm$8.1\% & 55.2\%$\pm$3.3\% & 51.8\%$\pm$2.8\% & 50.5\%$\pm$2.9\% & \textbf{61.2\%$\pm$0.4\%} & \textbf{62.1\%$\pm$0.5\%} &  \\ \cline{2-9}
& $p_k \sim Dir(0.5)$ & 62.0\%$\pm$1.2\% & 61.2\%$\pm$1.3\% & 61.9\%$\pm$0.9\% & 61.9\%$\pm$1.2\% & 61.4\%$\pm$1.3\% & \textbf{66.3\%$\pm$0.1\%} & \textbf{65.6\%$\pm$0.1\%} &  \\ \hline

\multirow{4}{*}{Tiny-ImageNet} &  $\#C=2$ & 3.1\%$\pm$0.4\% & 2.7\%$\pm$0.3\% & 3.0\%$\pm$0.4\% & 3.1\%$\pm$0.1\% & 2.0\%$\pm$0.1\%  & \textbf{4.3\%$\pm$0.1\%} & \textbf{4.3\%$\pm$0.2\%} & \multirow{4}{*}{49.9\%$\pm$0.2\%}    \\ \cline{2-9}
& $\#C=3$ & 4.9\%$\pm$2.4\% & 5.1\%$\pm$1.0\% & 6.3\%$\pm$1.7\% & 3.3\%$\pm$0.4\% & 1.7\%$\pm$0.1\%  & \textbf{11.7\%$\pm$0.3\%} & \textbf{9.6\%$\pm$0.2\%} &  \\ \cline{2-9}
& $p_k \sim Dir(0.1)$ & 40.8\%$\pm$0.5\% & 40.8\%$\pm$0.6\% &  40.6\%$\pm$0.8\% & 39.7\%$\pm$0.7\% & 39.9\%$\pm$0.8\%  & \textbf{43.1\%$\pm$0.2\%} & \textbf{42.6\%$\pm$0.2\%} &  \\ \cline{2-9}
& $p_k \sim Dir(0.5)$ & 44.0\%$\pm$0.6\% & 44.2\%$\pm$0.7\% &  44.1\%$\pm$0.5\% & 43.6\%$\pm$0.9\% & 43.9\%$\pm$0.5\%  & \textbf{44.3\%$\pm$0.1\%} & 43.8\%$\pm$0.2\% &   \\ \hline

\end{tabular}
 }
\end{table*}

In this section, we conduct experiments on larger models, more clients and more complicated tasks, i.e. training ResNet-50 on CIFAR-100 and Tiny-ImageNet. There are 200 clients, and in each round a random selection of 20\% of the clients participate in the training. For baseline algorithms, we train 500 communication rounds. For FedConcat and FedConcat-ID, we train 480 encoder rounds and 500 classifier rounds to match the communication cost. Since ResNet-50 has huge memory and computation overhead, we set the number of clusters as 2 to constrain memory and computation costs.

New problems arise when training FedConcat with ResNet-50 on CIFAR-100 and Tiny-ImageNet. Firstly, local cluster models tend to overfit since each local cluster witnesses fewer data compared with training on all clients. Secondly, the cluster sizes become quite unbalanced since the label distribution points become more sparse in the high dimensional (100-D or 200-D) space. Some points may be so far from others that they are allocated into a very small cluster. Thirdly, the training process of the final classifier layer is more difficult to converge since there are many more hidden neurons in ResNet-50 than simple CNN. 

To address these problems, we increase the weight decay factor to tackle overfitting. Client members of the majority cluster are relocated to force each cluster to be balanced. At the beginning of the post-training stage, the global classifier is initialized with parameters of cluster classifiers to speed up convergence. We discuss these adaptations in detail and conduct ablation studies in Appendix D.8. 

As shown in Table \ref{tbl:resnet}, by tackling these issues, our methods achieve higher accuracy than baselines by an average of 8\% on CIFAR-100 and 2\% on Tiny-ImageNet.

\subsection{Effect of Clustering}
\label{sec:large-model}
If we concatenate all models from all clients without clustering, when there are a large number of clients, our global model can become very large. Large final model leads to heavy communication cost when training the global classifier. During inference, it also costs much computation time. Moreover, concatenating all models from all clients can suffer from unstable convergence and low test accuracy, since each client can have limited training samples. 

An experiment on CIFAR-10 is illustrated in Figure \ref{fig:wcluster}, where we show the test accuracy at each classifier round after training 100 encoder rounds. No clustering means each client trains a model for server to concatenate, where each client model is only trained with few samples and prone to overfit. Since those models are not well-trained, concatenating their encoders can hardly extract good features. From Figure \ref{fig:wcluster}, we can observe that clustering not only reduces communication cost, but also effectively improves model quality with more samples. 

\begin{figure}[ht]
    \centering
    \includegraphics[width=0.9\columnwidth]{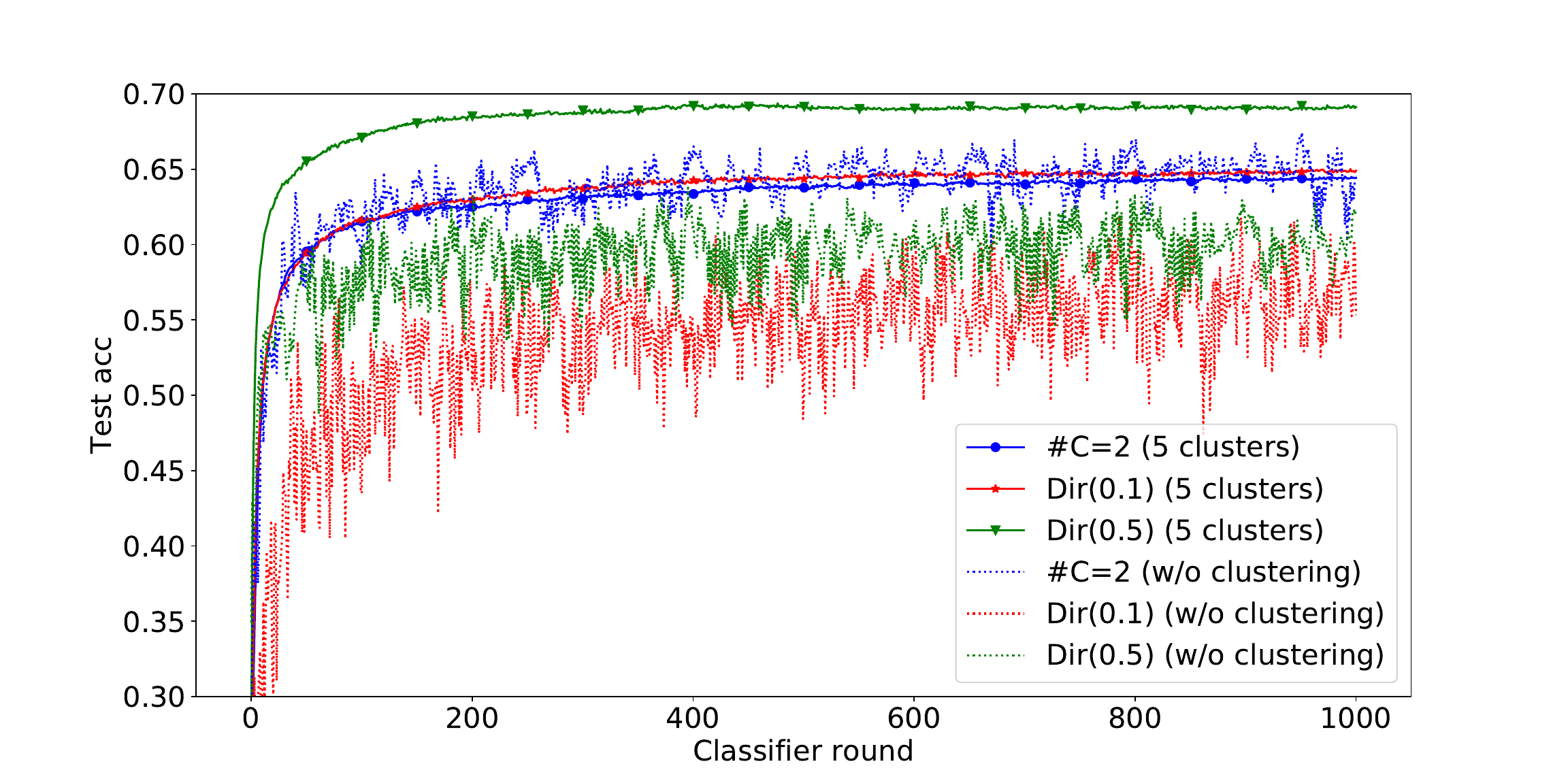}
    \caption{Training curves with clustering versus without clustering on CIFAR-10 (40 clients). }
    \label{fig:wcluster}
\end{figure}

FedConcat concatenates a cluster of models into a large model. If we directly train such a large model with other FL algorithms, does FedConcat still have advantage? 
To answer this question, we train baseline algorithms on the model with equivalent size to the final concatenated model of FedConcat. The training curves on CIFAR-10 dataset are shown in Figure \ref{fig:large}, which illustates that FedConcat still keeps its advantage when compared with the concatenated model.

\begin{figure}[ht]
    \centering
    \includegraphics[width=0.9\columnwidth]{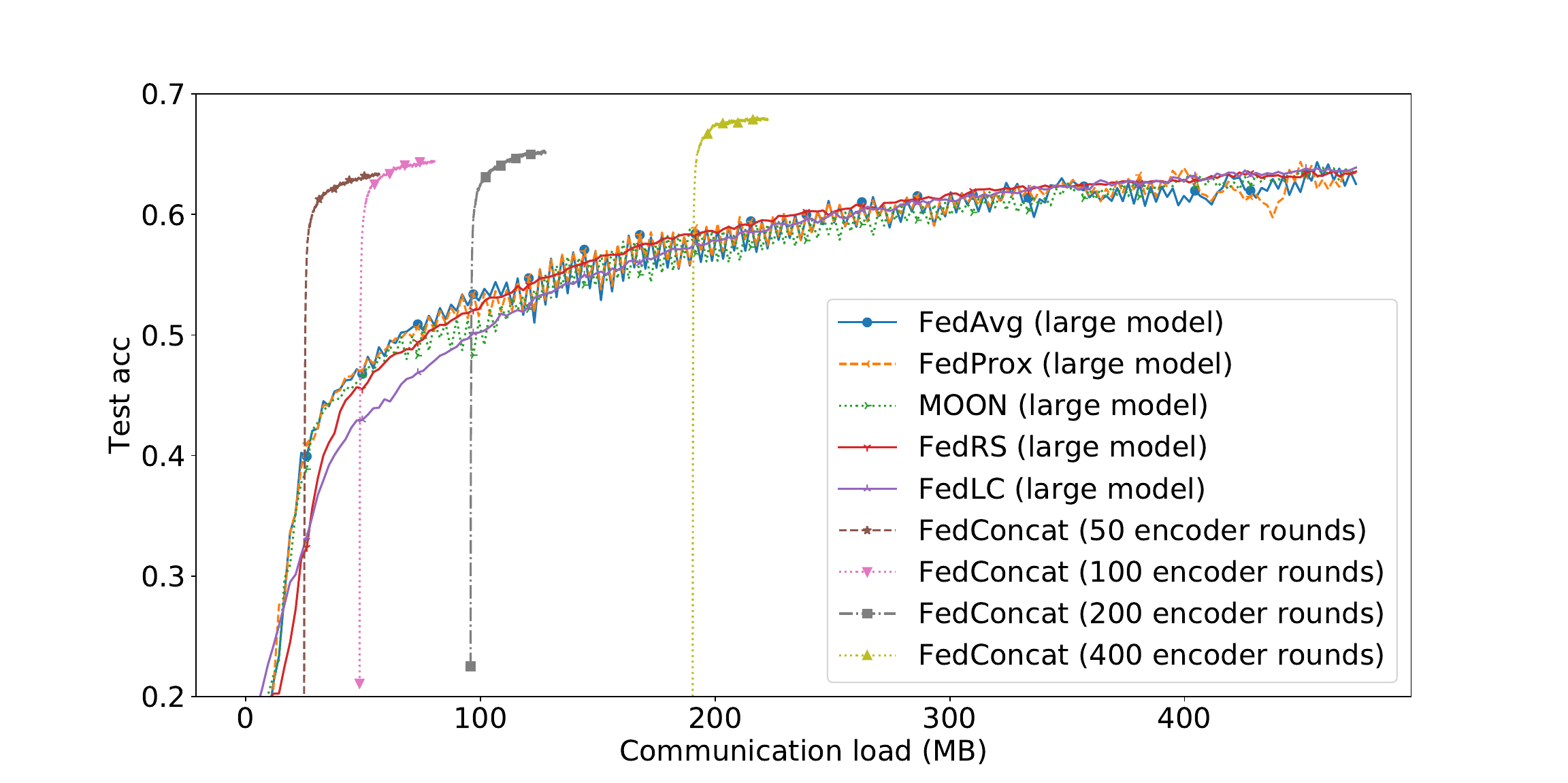}
    \ \ \ \ \ 
    \caption{Comparing with baselines on the final global model of FedConcat on CIFAR-10 (40 clients, $\#C=2$).}
    \label{fig:large}
\end{figure}

\subsection{Comparing with Other Clustered FL}
\label{sec:main_cfl}
{In this section, we employ other clustered FL algorithms during our clustering stage. We conduct experiments with three clustering-based methods including IFCA \cite{Ghosh2020AnEF}, recently proposed FedSoft \cite{ruan2022fedsoft} and FeSEM \cite{long2023multi}. We adhere to the same experimental settings as specified in Section \ref{sec:main_exp}. The results for the CIFAR-10 dataset are presented in Table \ref{tbl:cfl}. It can be observed that both FedConcat and FedConcat-ID outperform other clustering strategies. FeSEM incorporates an additional proximal loss term during local training, which results in an extra computational burden similar to FedProx. Both IFCA and FedSoft entail multiple times of communication cost as all cluster models are transferred to clients in each round. Thus, the clustering strategies of FedConcat and FedConcat-ID prove to be both effective and efficient. }

\begin{table}[htbp]
\centering
\caption{Experimental results of FedConcat and FedConcat-ID compared with other clustering strategies on CIFAR-10.}
\label{tbl:cfl}
\resizebox{\columnwidth}{!}{
\begin{tabular}{|c|c|c|c||c|c|}
\hline
Partition & IFCA & FedSoft & FeSEM & FedConcat & FedConcat-ID \\ \hline
$\#C=2$ & 54.5\%$\pm$0.9\% & 49.5\%$\pm$0.6\% & 54.0\%$\pm$1.5\% & \textbf{56.9\%$\pm$0.2\%}& \textbf{56.5\%$\pm$2.6\%} \\ \hline
$\#C=3$ & 60.0\%$\pm$0.9\%  & 56.1\%$\pm$0.9\% & 60.4\%$\pm$2.4\% & \textbf{62.0\%$\pm$0.6\%}& \textbf{61.8\%$\pm$0.8\%} \\ \hline
$p_k \sim Dir(0.1)$ & 49.5\%$\pm$5.7\%  & 51.8\%$\pm$1.2\% & 56.1\%$\pm$1.7\% & \textbf{57.7\%$\pm$0.4\%}& \textbf{56.9\%$\pm$1.4\%} \\ \hline
$p_k \sim Dir(0.5)$ & 63.3\%$\pm$0.2\% & 59.5\%$\pm$1.5\% & 63.1\%$\pm$0.8\% & \textbf{64.2\%$\pm$0.7\%}& \textbf{63.7\%$\pm$0.8\%} \\ \hline
\end{tabular}
 }
\end{table}

\section{Conclusion}
\label{sec:con}

In this paper, we propose to alleviate the accuracy decay induced by label skews in FL through concatenation. We show that in most cases, our methods can significantly outperform various state-of-the-art FL algorithms with smaller communication costs.
FedConcat can alleviate accuracy decay because it divides the hard problem (training a model among all clients under extreme label skews) into various easy problems (training one model within each cluster under alleviated label skews). Then it collects clues of easy problems (i.e., extracted features) to solve the hard original problem. Our approach brings new insights to the FL community to look for other aggregation approaches instead of averaging.

\section*{Acknowledgments}
This research is supported by the National Research Foundation, Singapore under its AI Singapore Programme (AISG Award No: AISG2-RP-2020-018). Any opinions, findings and conclusions or recommendations expressed in this material are those of the authors and do not reflect the views of National Research Foundation, Singapore. 

\bibliography{aaai24}

\newpage
\appendix
\section{Discussion}

\subsection{More Related Works on FL}
\label{sec:more_work}
\paragraph{Improving model averaging} FedNova \cite{wang2020tackling} considers the heterogeneity in computational resources and uses weighted average for local steps, so that the global model is not biased towards clients with more local steps. FedFTG \cite{zhang2022fine} generates ``hard samples'' maximizing the discrepancy between local models and global model to finetune the global model. 

\paragraph{Improving data quality} FedRobust \cite{reisizadeh2020robust} utilizes adversarial learning and generates adversarial samples by gradient ascent to improve generalization. One can also generate fake samples by FedGAN \cite{Rasouli2020FedGANFG} to alleviate data heterogeneity and assist training. 

\paragraph{Improving client selection} Active federated learning \cite{Goetz2019ActiveFL} selects those clients with higher loss. Another track of works \cite{Song2019ProfitAF, Wang2020APA} select clients based on their Shapley value. Proper selection strategy can discard gradients uploaded by clients with low-quality data, and finally speed up convergence. FedCor \cite{tang2022fedcor} proposes to utilize the correlation between expected client loss changes to select clients.

\subsection{Clustering Techniques in FL} 
\label{sec:disc_cfl}
{Clustering is also a promising way to alleviate the non-IID data issue. CFL \cite{sattler2020clustered} uses the cosine similarity of uploaded gradient for clustering. FlexCFL \cite{duan2021flexible} clusters with approximated gradient similarity after applying SVD to improve efficiency. FeSEM \cite{long2023multi} applies EM algorithm to adaptively form clusters based on the distance between local models and cluster centers. FedSoft \cite{ruan2022fedsoft} extends to soft clustering where each client belongs to a mixture of clusters. IFCA \cite{Ghosh2020AnEF} proposes that each client trains multiple local models and adaptively adjusts cluster membership, by selecting the lowest loss model. FL+HC \cite{Briggs2020FederatedLW} proposes hierarchical clustering based on each client's local model update. FLT \cite{Rad2021FederatedLW} utilizes a public pre-trained encoder to extract key features of local data for clustering. After clustering, each client trains a specialized model for its own cluster. These methods are for personalized FL, where the goal is to train a good personalized local model for each client and the metric is to evaluate each local model on local test data (seen classes for each client). }

{From the perspective of speeding up convergence, FedCluster \cite{chen2020fedcluster} updates global model for multiple times by different clusters, utilizing a random clustering approach. However, due to the sequential updating of the global model by each cluster, simultaneous training of different clusters is not feasible, resulting in potentially long waiting time. }

{FedConcat pipeline consists of clustering, concatenation and post-training. Different from previous methods, our intuition of applying clustering is to combine the unique features of different clusters with different label distributions. We employ clustering techniques to identify similar clients and constrain the size of the final model.} 

{A comparison of various clustering strategies is detailed in Appendix \ref{sec:exp_cluster_FL}. In this paper, our focus is on typical FL settings, where the objective is to train an effective global model that encapsulates the collective knowledge from clients. The evaluation criterion is the global model's performance on the entire test dataset encompassing all classes. Therefore, we compare the global model accuracy of clustered FL algorithms utilizing our concatenation framework. }

\subsection{Clients with Only One Label}
FedConcat, as well as aforementioned benchmark FL algorithms, does not work for clients with only one label, i.e. $\#C=1$ partition in \citet{li2021federated}. According to experiments, if a neural network is trained on some clients with only one label for a few epochs, its loss becomes zero and such network consistently predicts that label regardless of input. This can be explained by the information bottleneck theory \cite{Tishby2015DeepLA}. Since there is only one label, the network can forget everything related to the feature and directly output that label. Thus many FL algorithms cannot work well for such extreme non-IID case. 

There are some research \cite{felix2020federated, Hosseini2020FederatedLO, diao2023towards} addressing $\#C=1$ scenario. With our framework, one may concatenate some local feature extractors (e.g. anomaly detection models) and collaboratively train a classifier. As long as one can train good feature extractors, our method can be applied to concatenate those extracted features to classify all labels. 



\subsection{Non-IID Data, IID Data and Model Selection}
{Our algorithm can bring more improvement when data is non-IID, while there is little improvement when data is IID. Based on a popular FL non-IID benchmark framework \cite{li2021federated}, we find out that no baseline algorithm can outperforms others in comprehensive label skews, which is also the problem found by the benchmark. Our FedConcat framework can achieve state-of-the-art performance in most label skew settings, suggesting it to be a promising solution for addressing the non-IID problem inherent in FL.}

If label distribution is IID among clients, the models submitted by each cluster only bring similar features, thus concatenating them offers little help. In fact, \citet{li2021federated} show that FedAvg is already good under IID data distribution setting. 

Considering the IID scenario, a future research direction of this algorithm is to select the best number of clusters $K$. 
This is an adaptive model selection problem based on data distribution. 

Further, our method allows model selection flexibility among clusters. Each cluster does not have to train the same model. Clusters with powerful computing devices and more data can train larger models for more rounds. Therefore, computing power and data quantity can also be taken into consideration during clustering.

\section{Differential Privacy (DP)}
\label{sec:dp}
In this section, we list the definitions and techniques for differential privacy \cite{dwork2011differential}.

\paragraph{Definition 1.} ($\epsilon$-DP) For $\epsilon>0$, a randomized function $f$
provides $\epsilon$-differential privacy if, for any datasets $D,D'$ that have only one single record different, for any possible output $O$,
\begin{equation}
    Pr[f(D)\in O] \leq e^{\epsilon} \cdot Pr[f(D')\in O]
\end{equation}

\paragraph{Definition 2.} (Sensitivity) Suppose $f$ is a function and $D,D'$ have only one record different. The sensitivity of $f$ is defined as 
\begin{equation}
    \Delta f= \max_{D,D'} \|f(D)-f(D')\|_1
\end{equation}

Here one record different means a database has one more record than another. For label distribution of a client, adding one more data to a non-empty database, can at most increase 0.5 for the probability of such class, i.e., the distribution changes from (1,0) to (0.5,0.5). Therefore, the sensitivity of label distribution function is 1.   

We utilize the Laplace mechanism \cite{dwork2014algorithmic} to achieve the $\epsilon-$DP.

\paragraph{Laplace Mechanism} For function $f:\mathcal{D}\rightarrow R^d$, function 
\begin{equation}
    F(D)=f(D)+Lap(0,\Delta f/\epsilon)
\end{equation}
provides $\epsilon$-DP, where $Lap(0,\Delta f/\epsilon)$ is sampled from Laplace distribution. 

\paragraph{Protecting label distributions with DP}
One possible concern is that the uploaded label distributions for clustering may leak additional information about training data. A possible solution is to apply differential privacy (DP) \cite{dwork2011differential,dwork2014algorithmic} to protect the uploaded auxiliary label distribution. DP is a rigorous and popular privacy metric, which guarantees that the output does not change with a high probability even though an input data record changes. Specifically, since the sensitivity of the label distribution function is 1, we add Laplacian noises with $\lambda=\frac{1}{\epsilon}$. We set $\epsilon=2.5$ that provides modest privacy guarantees. Results are shown in Table \ref{tbl:dp}. In most cases, DP-FedConcat can still outperform FedAvg, which shows that it is promising to combine our approach with DP to enhance the privacy. Compared with DP-FedConcat, FedConcat-ID adopts a more strict privacy setting without uploading label distributions.  

\begin{table}[!]
\centering
\caption{Experimental results of FedConcat adding DP with $\epsilon=2.5$. The model size of baseline algorithms is the same as the model of one cluster in FedConcat. {Bold means FedConcat adding DP can still outperform FedAvg. We repeat all experiments for three times with random seed 0,1,2.}}
\label{tbl:dp}
\resizebox{1.0\columnwidth}{!}{
\begin{tabular}{|c|c|c|c|}
\hline
Dataset & Partition  & DP-FedConcat & FedAvg  \\ \hline
\multirow{4}{*}{CIFAR-10} & $\#C=2$  & \textbf{54.7\%$\pm$0.6\%} & 53.6\%$\pm$0.8\%  \\ \cline{2-4}
& $\#C=3$  & \textbf{59.4\%$\pm$2.0\%} & 57.6\%$\pm$0.6\%  \\ \cline{2-4}
& $p_k \sim Dir(0.1)$ & \textbf{57.1\%$\pm$0.5\%} & 53.0\%$\pm$1.0\%  \\ \cline{2-4}
& $p_k \sim Dir(0.5)$ & \textbf{62.8\%$\pm$0.9\%} & 59.9\%$\pm$0.5\% \\ \hline
\multirow{4}{*}{SVHN} & $\#C=2$ & \textbf{83.9\%$\pm$1.4\%} & 82.8\%$\pm$0.5\%  \\ \cline{2-4}
& $\#C=3$ & \textbf{86.6\%$\pm$0.3\%} & 85.2\%$\pm$0.4\% \\ \cline{2-4}
& $p_k \sim Dir(0.1)$ & \textbf{84.0\%$\pm$0.2\%} & \textbf{84.0\%$\pm$1.3\%} \\ \cline{2-4}
& $p_k \sim Dir(0.5)$ & \textbf{87.9\%$\pm$0.2\%} & 87.2\%$\pm$0.2\% \\ \hline
\multirow{4}{*}{FMNIST} & $\#C=2$ & \textbf{82.5\%$\pm$2.4\%} & 79.0\%$\pm$4.7\% \\ \cline{2-4}
& $\#C=3$ & \textbf{85.9\%$\pm$0.5\%} & 84.7\%$\pm$1.4\% \\ \cline{2-4}
& $p_k \sim Dir(0.1)$& 82.8\%$\pm$3.3\% & \textbf{85.1\%$\pm$0.5\%} \\ \cline{2-4}
& $p_k \sim Dir(0.5)$& 87.2\%$\pm$0.1\% & \textbf{87.5\%$\pm$0.3\%} \\ \hline
\end{tabular}
}
\end{table}

\section{Further Analysis why Concatenating Encoders Works}
\label{sec:theory_justify}
\paragraph{Preliminary of mutual information} 
The mutual information between random variables $X$ and $Y$ is defined as 
\begin{equation}
    I(X;Y)=\sum_{x,y}p(x,y)\log \frac{p(x,y)}{p(x)p(y)}.
\end{equation}
It equals the information gain about $X$ by knowing $Y$. We mainly use the property
\begin{equation}
    I(X;Y_1,Y_2) \geq \max \{I(X;Y_1), I(X;Y_2)\}.
\end{equation}
By knowing both $Y_1$ and $Y_2$, the information gain about $X$ is no less than only knowing either $Y_1$ or $Y_2$.

\paragraph{Experiment setups} We also conduct some experiments to justify our theoretical analysis. For clear demonstration, we experiment on the first two clients of $\#C=2$ setting in FMNIST (random seed = 1). The first client only has classes 0 and 2, while the second client only has classes 1 and 9.

\paragraph{Exchanging encoders} To investigate how much related information one encoder carries to another client's task, we exchange the trained encoders of two clients, and record the classification loss (after training a linear classifier) and reconstruction loss (after training a decoder). We train 50 local epochs to ensure each local model has converged. The classification loss estimates $I(f_e(X);Y)$ and the reconstruction loss estimates $I(f_e(X);X)$. Smaller loss indicates more mutual information. 

\begin{table}[ht]
\centering
\caption{Comparing mutual information with and without exchanging encoders.}
\label{tbl:exc}
\newcommand{\y}{\ding{51}}
\newcommand{\n}{\ding{55}}
 \resizebox{\columnwidth}{!}{
\begin{tabular}{|c|c|c|c|}
\hline

\multirow{2}{*}{Encoder} & Exchange & Classification & Reconstruction \\
& or not & loss & loss \\ \hline
\multirow{2}{*}{$f_{e1}$} & \n & \textbf{0.0235} & \textbf{0.0646} \\ \cline{2-4}
& \y & 0.2345 & 0.0657 \\ \hline
\multirow{2}{*}{$f_{e2}$} & \n & \textbf{0.0007} & \textbf{0.0796} \\ \cline{2-4}
& \y & 0.0292 & 0.0816 \\ \hline

\end{tabular}
 }
\end{table}

As shown in Table \ref{tbl:exc}, exchanging encoders leads to larger loss, which indicates less mutual information. However, comparing the classification loss and reconstruction loss, we find that exchanging encoders can significantly increase classification loss by 10 times or even 40 times, while the increase in reconstruction loss is relatively small (only 2\%). It indicates that $I(f_e(X);Y)$ is much more sensitive to training task than $I(f_e(X);X)$. In other words, maximizing $I(f_e(X);Y)$ seems more important than minimizing $I(f_e(X);X)$. Concatenating encoders combines the strengths of local models since $I(f_e(X);Y)$ contains no less information than local models. Although the mutual information of representation and raw input $I(f_e(X);X)$ also increases, it does not matter much as $I(f_e(X);X)$ is much less sensitive. 

\paragraph{Testing encoders on combined datasets} Here we test the effectiveness of local encoders and concatenated encoder for global task. Both encoders are trained on client local dataset. Then we fix the encoder (or concatenated encoder) and train a linear classifier as well as a decoder on centralized training dataset, i.e. combining both client's local data. Results are shown in Table \ref{tbl:comb}. From classification loss, the concatenated encoder is much smaller, which verifies the improvement of mutual information $I(f_e(X);Y)$ by combining strengths of local models. For reconstruction loss, we can see that these rows are quite similar, which further justifies that $I(f_e(X);X)$ is much less sensitive and the concatenation of encoders can hardly increase $I(f_e(X);X)$. {Notice that $f_e$ has much smaller classification loss than $f_{avg}$, which verifies the effectiveness of concatenation compared with averaging under extreme label skews.}

\begin{table}[h]
\centering
\caption{Comparing encoders on global task. $f_e$ means the concatenation of $f_{e1}$ and $f_{e2}$, while $f_{avg}$ means the averaging of $f_{e1}$ and $f_{e2}$.}
\label{tbl:comb}
 \resizebox{0.7\columnwidth}{!}{
\begin{tabular}{|c|c|c|}
\hline

\multirow{2}{*}{Encoder} & Classification & Reconstruction \\
& loss & loss \\ \hline
$f_{e1}$ & 0.0810 & \textbf{0.0817} \\ \hline
$f_{e2}$ & 0.2456 & 0.0823 \\ \hline
$f_{avg}$ & 0.1072 & 0.0850 \\ \hline
$f_e$ & \textbf{0.0493} & 0.0819  \\ \hline

\end{tabular}
 }
\end{table}

\paragraph{Visualization of learned representation} We visualize the representation of single encoder and concatenated encoders in Figure \ref{fig:tsne}. As we can see, the concatenated encoder $f_e$ generates better representation than single local encoder $f_{e1}$ and $f_{e2}$, which is consistent with its low classification loss in Table \ref{tbl:comb}. 

We also visualize the representation of encoder using FedAvg in Figure \ref{fig:tsneavg}. Following the default setting, in each round clients train 10 local epochs on local dataset. Compared with the representation of concatenated encoder in Figure \ref{concat}, we find that one-round FedAvg encoder has much worse representation. After 3-5 rounds, FedAvg can get encoder comparable with FedConcat of only one round. This verifies that FedConcat can reduce the communication cost in extreme label skews. The concatenation operation is more efficient than averaging, in terms of keeping important features related to target task.

\begin{figure*}[]
    \centering
    \subfloat[$f_e$\label{concat}]{\includegraphics[width=0.33\textwidth]{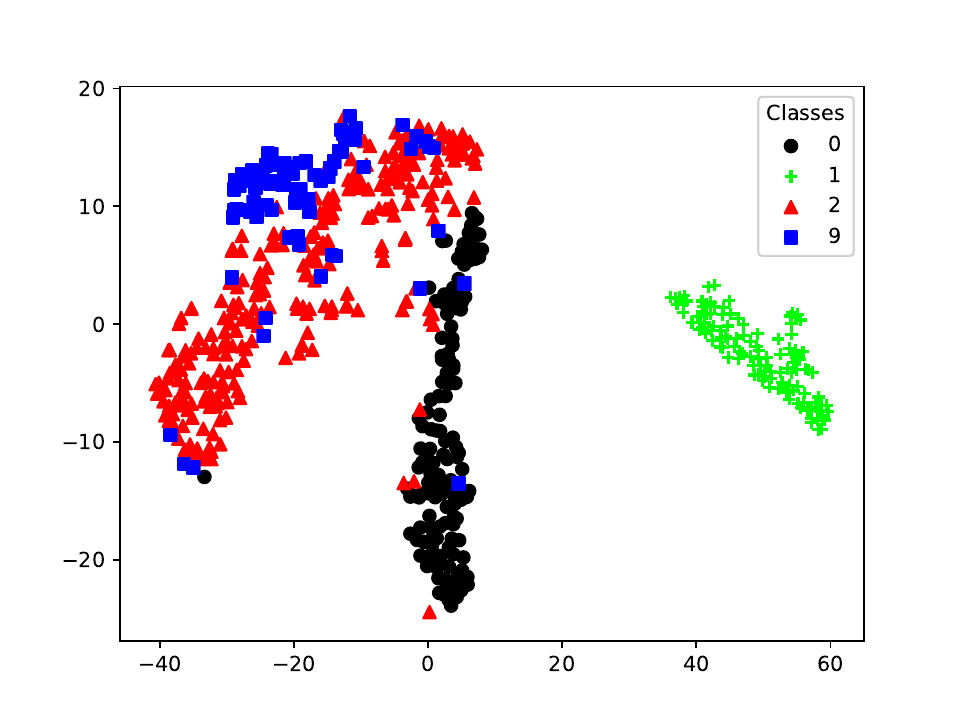}}
    \subfloat[$f_{e1}$]{\includegraphics[width=0.33\textwidth]{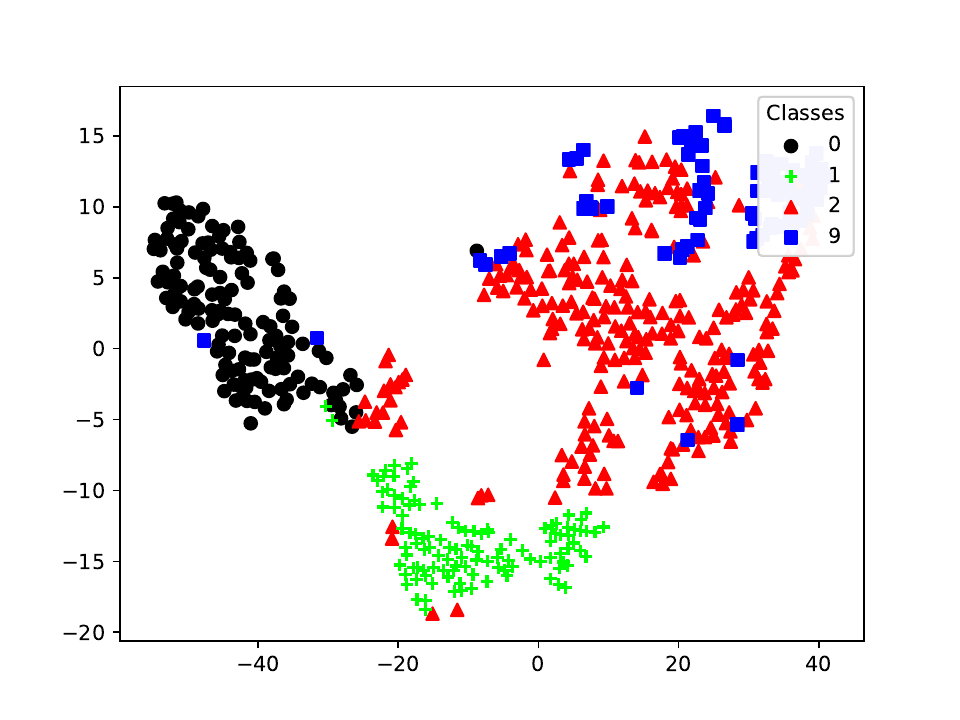}}
    \subfloat[$f_{e2}$]{\includegraphics[width=0.33\textwidth]{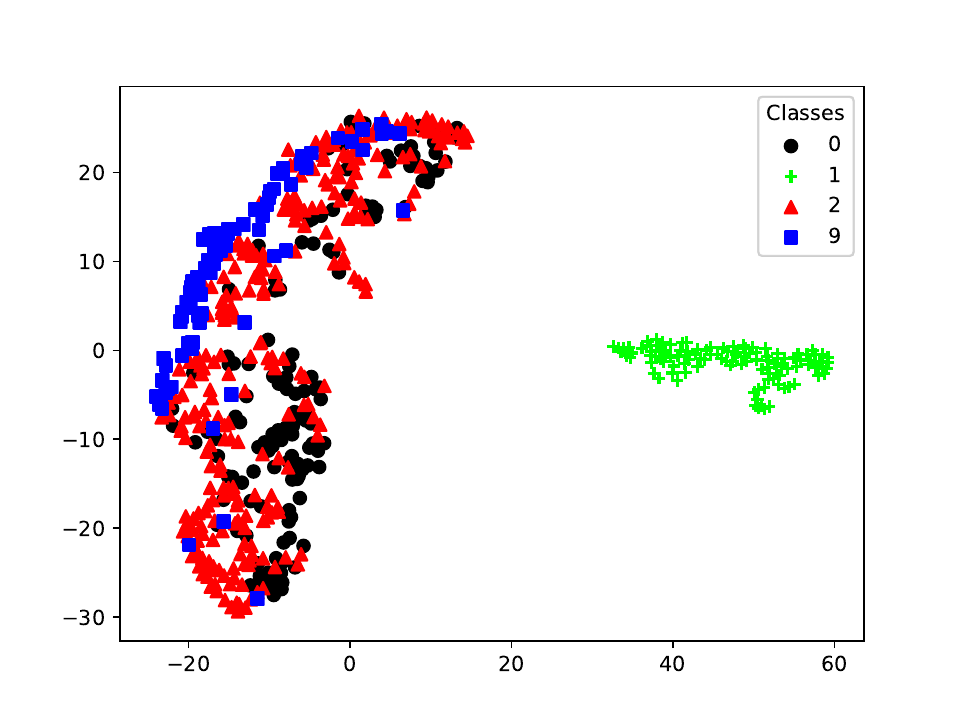}}

    \caption{Representation of different encoders on combined datasets.}
    \label{fig:tsne}
\end{figure*}

\begin{figure*}[]
    \centering
    \subfloat[After 1 round.]{\includegraphics[width=0.33\textwidth]{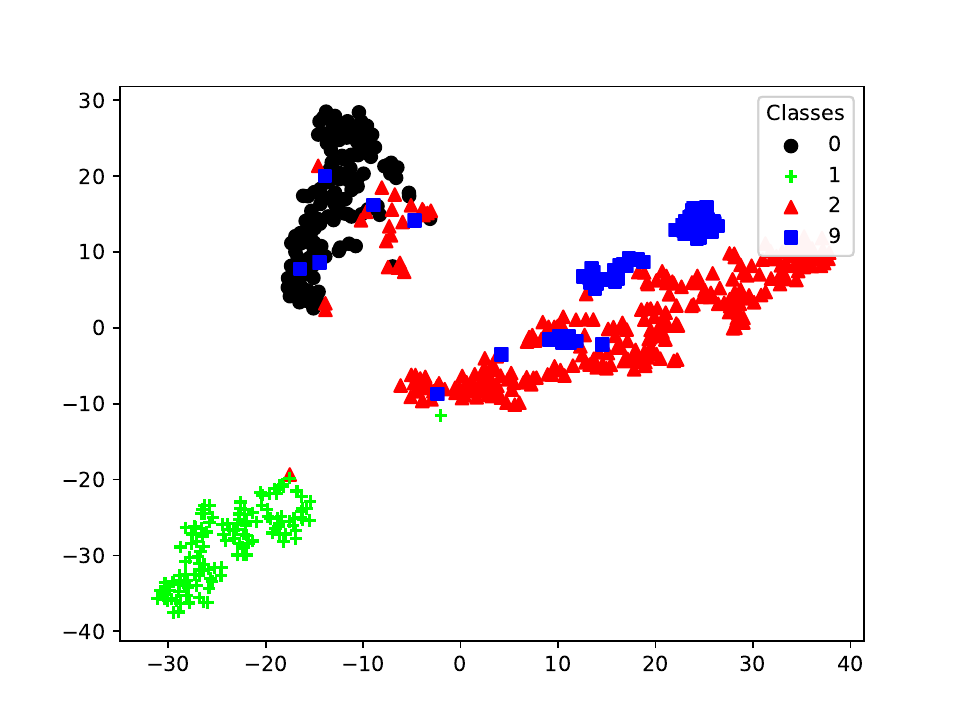}}
    \subfloat[After 3 rounds.]{\includegraphics[width=0.33\textwidth]{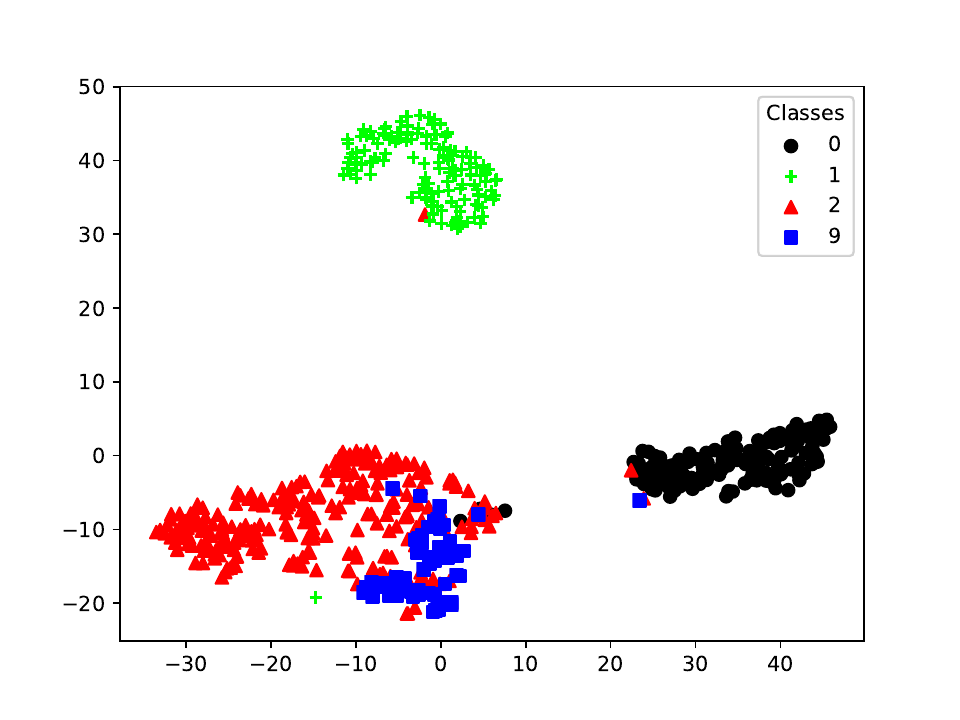}}
    \subfloat[After 5 rounds.]{\includegraphics[width=0.33\textwidth]{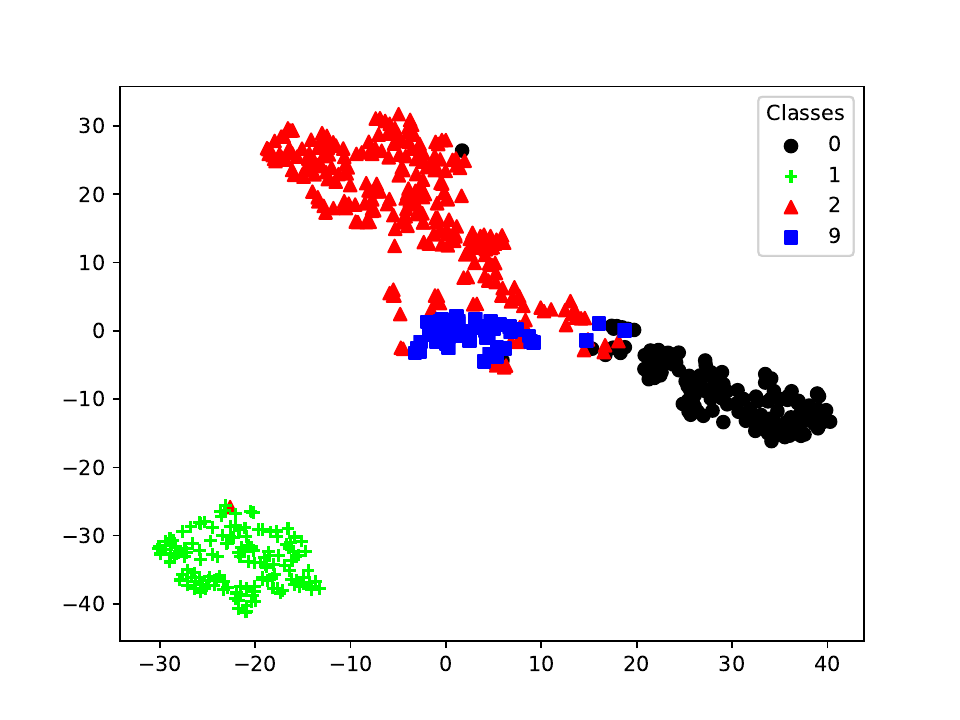}}

    \caption{Representation of FedAvg encoders on combined datasets.}
    \label{fig:tsneavg}
\end{figure*}


\section{Additional Experiments}
In this Appendix, we first introduce experimental setup details in Appendix \ref{sec:more_setup}. Then we analyze the training curves with respect to communication cost in Appendix \ref{sec:more_curve} and computation cost in \ref{sec:anal_computation}. We experiment with baseline algorithms on concatenated models in Appendix \ref{sec:more_concatenated_model}, cluster number in Appendix \ref{sec:exp_num_cluster}, FL clustering strategies in Appendix \ref{sec:exp_cluster_FL}, partial participation in Appendix \ref{sec:exp_partial}, large model in Appendix \ref{sec:exp_resnet} and scalability in Appendix \ref{sec:more_scalab}. Finally we analyze the label inference and training curves of FedConcat-ID in Appendix \ref{sec:more_exp_concat_id}. 

\subsection{More Details on Setups}
\label{sec:more_setup}
The statistics about the used datasets are summarized in Table \ref{tbl:datasets}. 

\begin{table}[ht]
\centering
\caption{Datasets used in our experiments.}
\vspace{-5pt}
\label{tbl:datasets}
\begin{tabular}{|c|c|c|c|}
\hline
Datasets & \#Training & \#Test & \#Classes \\ \hline
CIFAR-10 & 50,000 & 10,000  & 10 \\ \hline
SVHN & 73,257 & 26,032 & 10 \\ \hline
FMNIST & 60,000 & 10,000  & 10 \\ \hline
CIFAR-100 & 50,000 & 10,000 & 100 \\ \hline
Tiny-ImageNet & 100,000 & 10,000  & 200 \\ \hline
\end{tabular}
\end{table}

In our implementation of FedConcat-ID, since we experiment on image datasets and all pixels are pre-processed in the range of $[0,1]$, we generate 10,000 random images with each pixel uniformly randomly chosen in $[0,1]$ to infer label distribution by Eq. \eqref{eq:id}. 

By default, our model is the same 5-layer simple CNN model as in \citet{mcmahan2016communication,li2021model,li2021federated}. By default, the number of clients is set to 40 and the number of clusters is set to 5. We set local learning rate as 0.01, batch size as 64, optimizer as SGD with momentum $0.9$ and weight decay $10^{-5}$. In each communication round of averaging stage, each client trains 10 local epochs. In each communication round of post-training, each client trains 3 steps. 

We tune FedProx with best $\mu \in \{0.001,0.01\}$, MOON with best $\mu \in \{1,5\}$ and FedLC with best $\tau \in \{0.1,1,5\}$. We find that MOON with $\mu \in \{1,5\}$ does not work well for FMNIST and SVHN, so we further tune them with $\mu \in \{0.001,0.01,0.1\}$. For FedRS, we use the default setting $\alpha=0.5$ recommended by the authors. We follow FedSoft paper to set the importance estimation interval $\tau=2$. 
All the experiments are run on a machine with 8 * NVIDIA GeForce RTX 3090 GPUs.

\subsection{Training Curves and Communication Cost}
\label{sec:more_curve}
We plot training curves of some experiments in Figure \ref{fig:comm}. In these experiments, five baseline algorithms are trained 200 communication rounds each. FedConcat is trained with various different encoder rounds together with 1000 classifier rounds. It shows that FedConcat can outperform other methods with smaller communication cost and smoother convergence. Generally, the accuracy of FedConcat increases when we train encoder for more rounds. 

On scalability, as long as we keep the cluster number $K$, the communication cost of FedConcat increases linearly with the number of clients, just like FedAvg. In the averaging stage where each cluster performs FedAvg, the communication cost is exactly the same as FedAvg. At the start of the post-training stage, the size of concatenated encoders equals $K$ times single encoder size. During the post-training rounds, each client only communicate the final classifier layer with the server, which equals $K$ times single classifier size. Therefore, by limiting the cluster number $K$, the communication cost of FedConcat is scalable with respect to the number of clients.

\begin{figure*}[h]
    \centering
    \subfloat[CIFAR-10, \#C=2]{\includegraphics[width=0.33\textwidth]{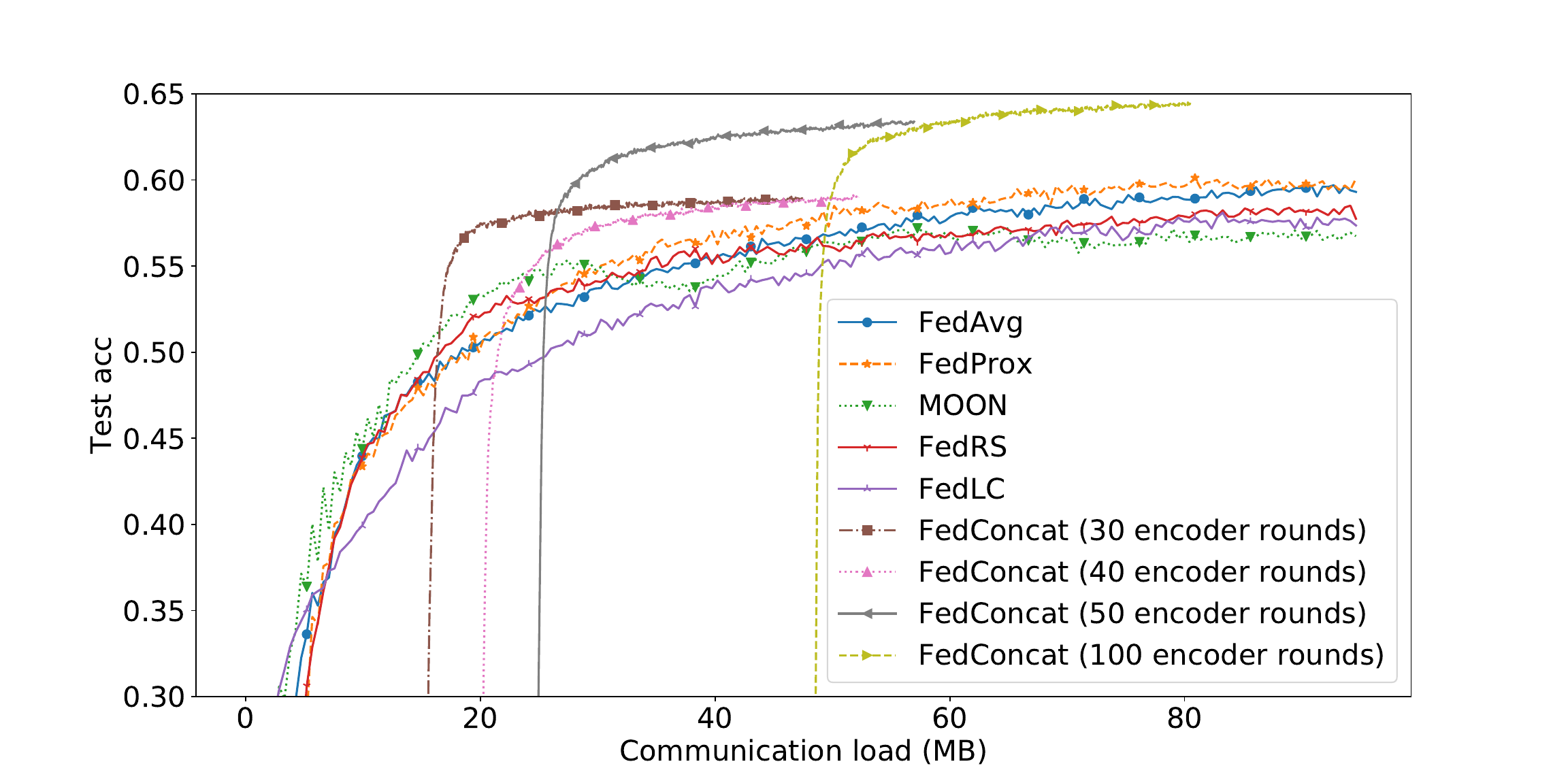}}
    \subfloat[SVHN, \#C=2]{\includegraphics[width=0.33\textwidth]{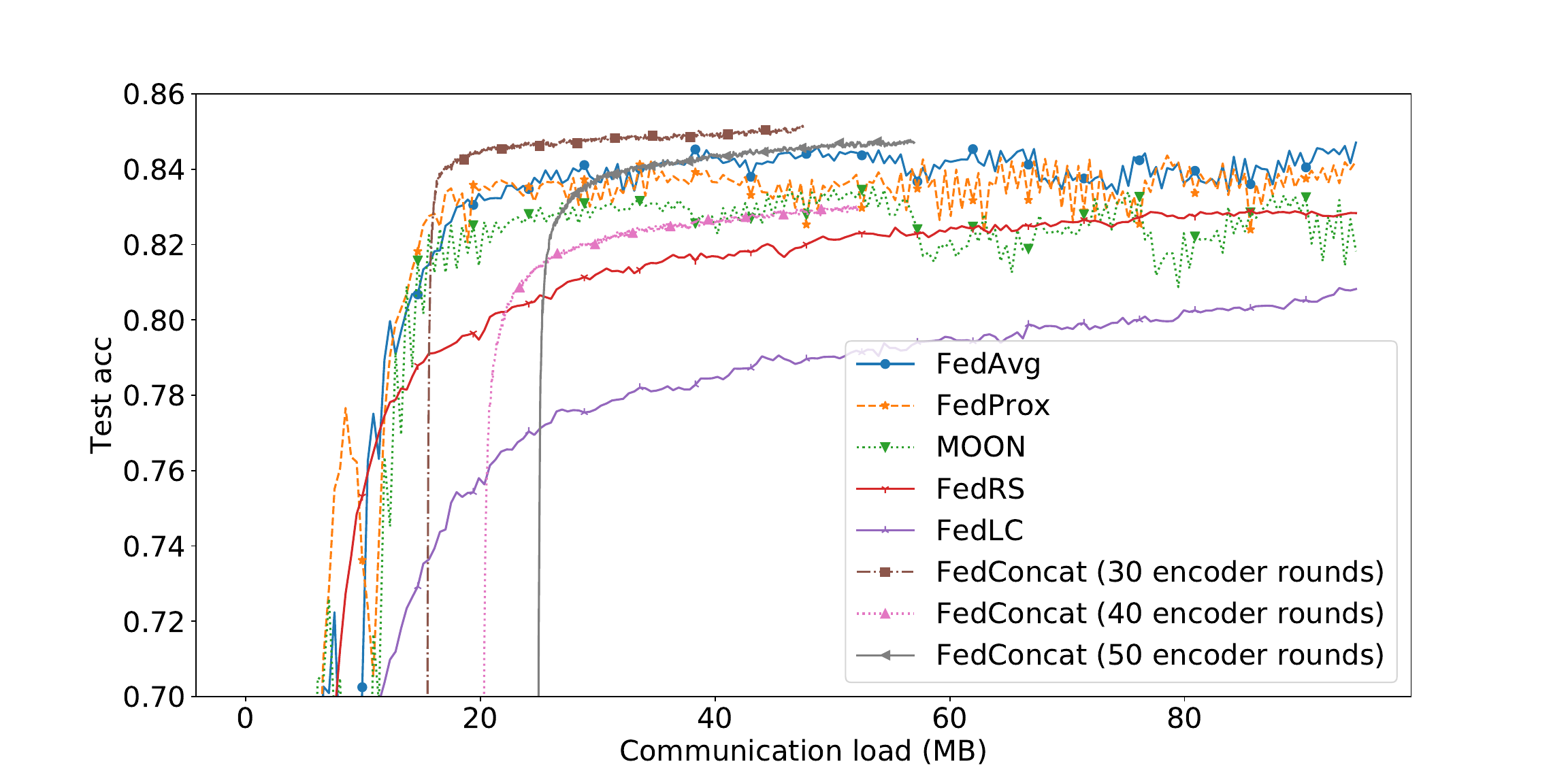}}
    \subfloat[FMNIST, \#C=2]{\includegraphics[width=0.33\textwidth]{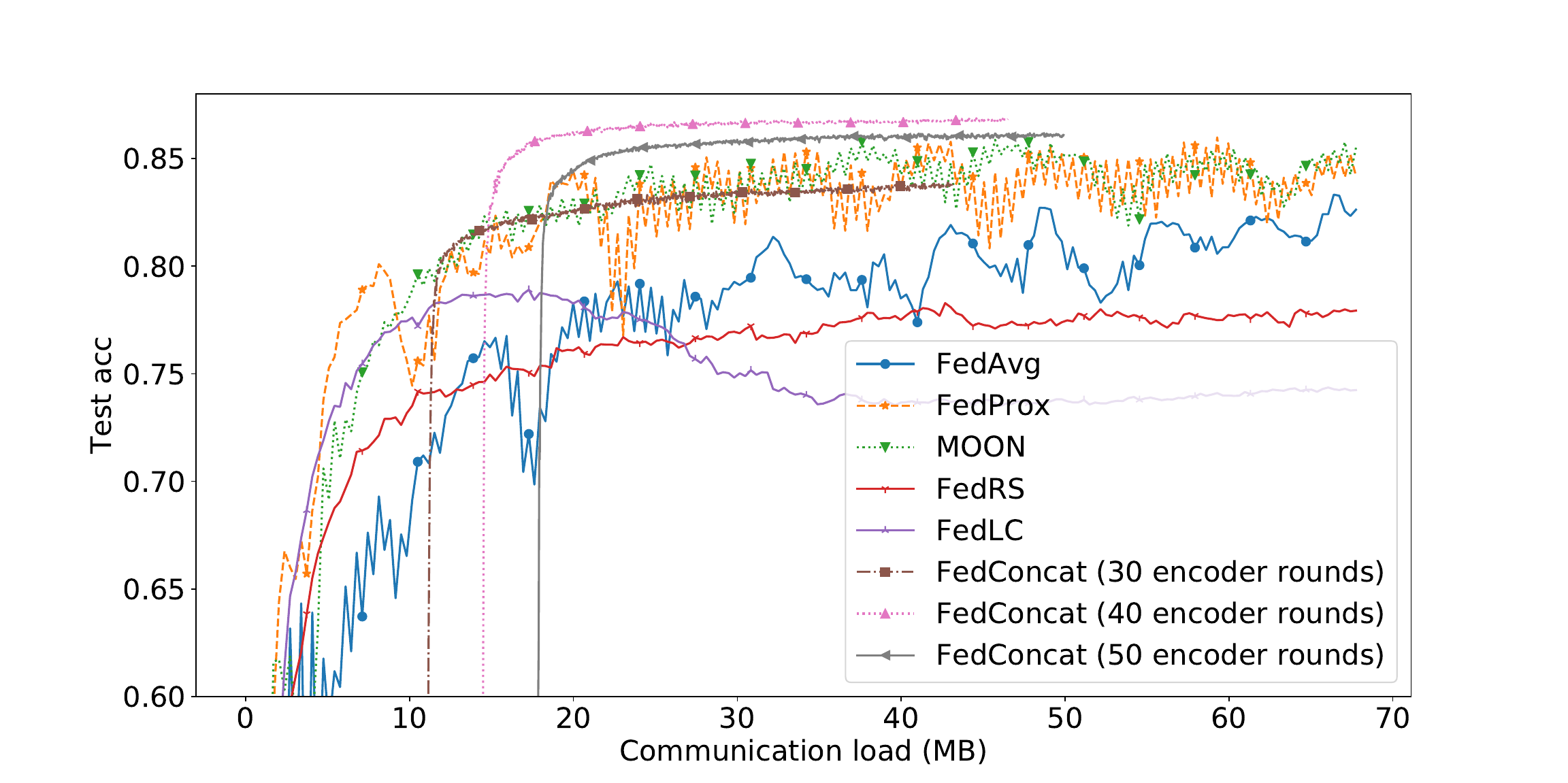}}
    \hfill
    \subfloat[CIFAR-10,$p_k \sim Dir(0.5)$]{\includegraphics[width=0.33\textwidth]{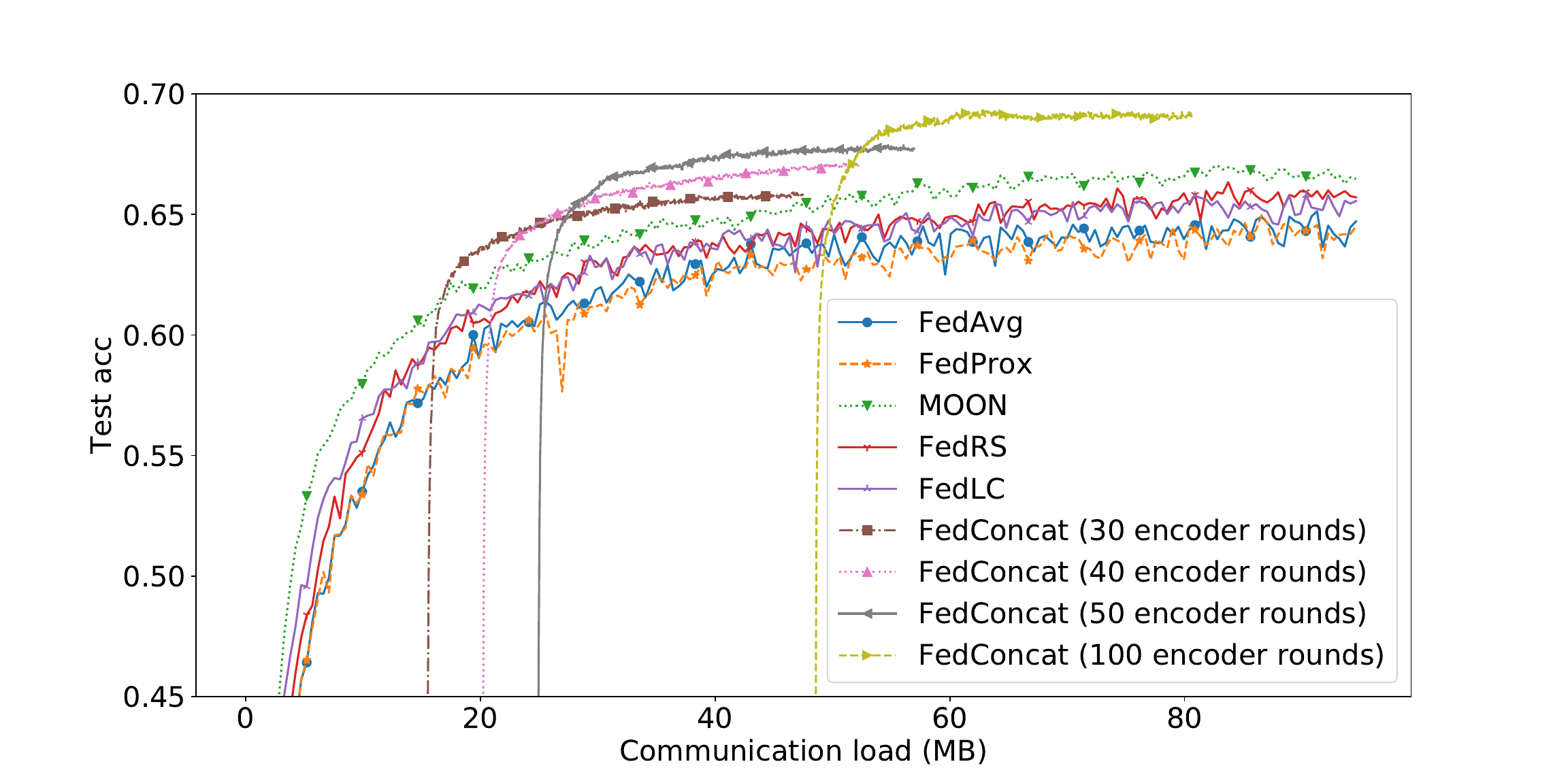}}
    \subfloat[SVHN, $p_k \sim Dir(0.5)$]{\includegraphics[width=0.33\textwidth]{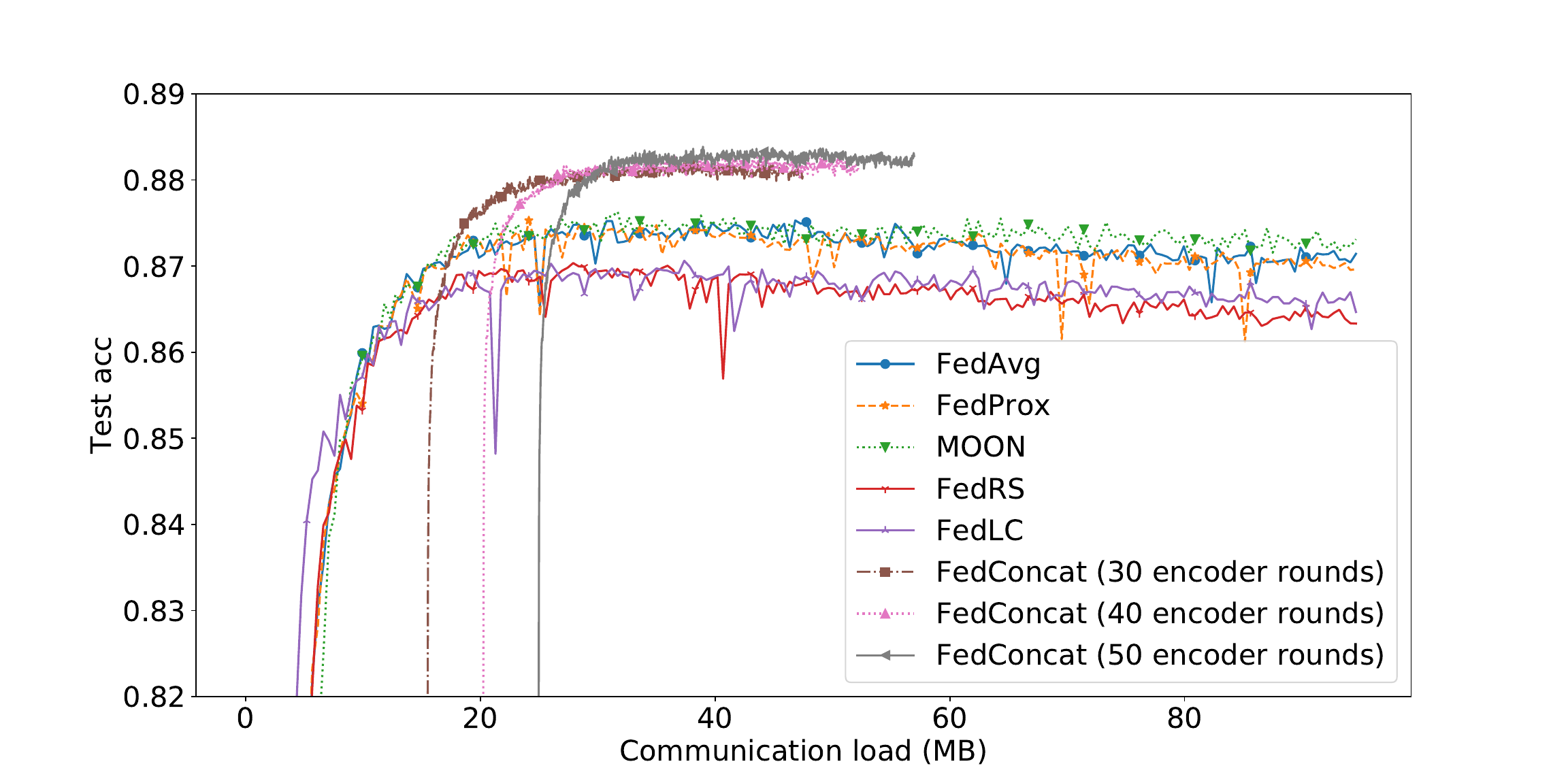}}
    \subfloat[FMNIST, $p_k \sim Dir(0.5)$]{\includegraphics[width=0.33\textwidth]{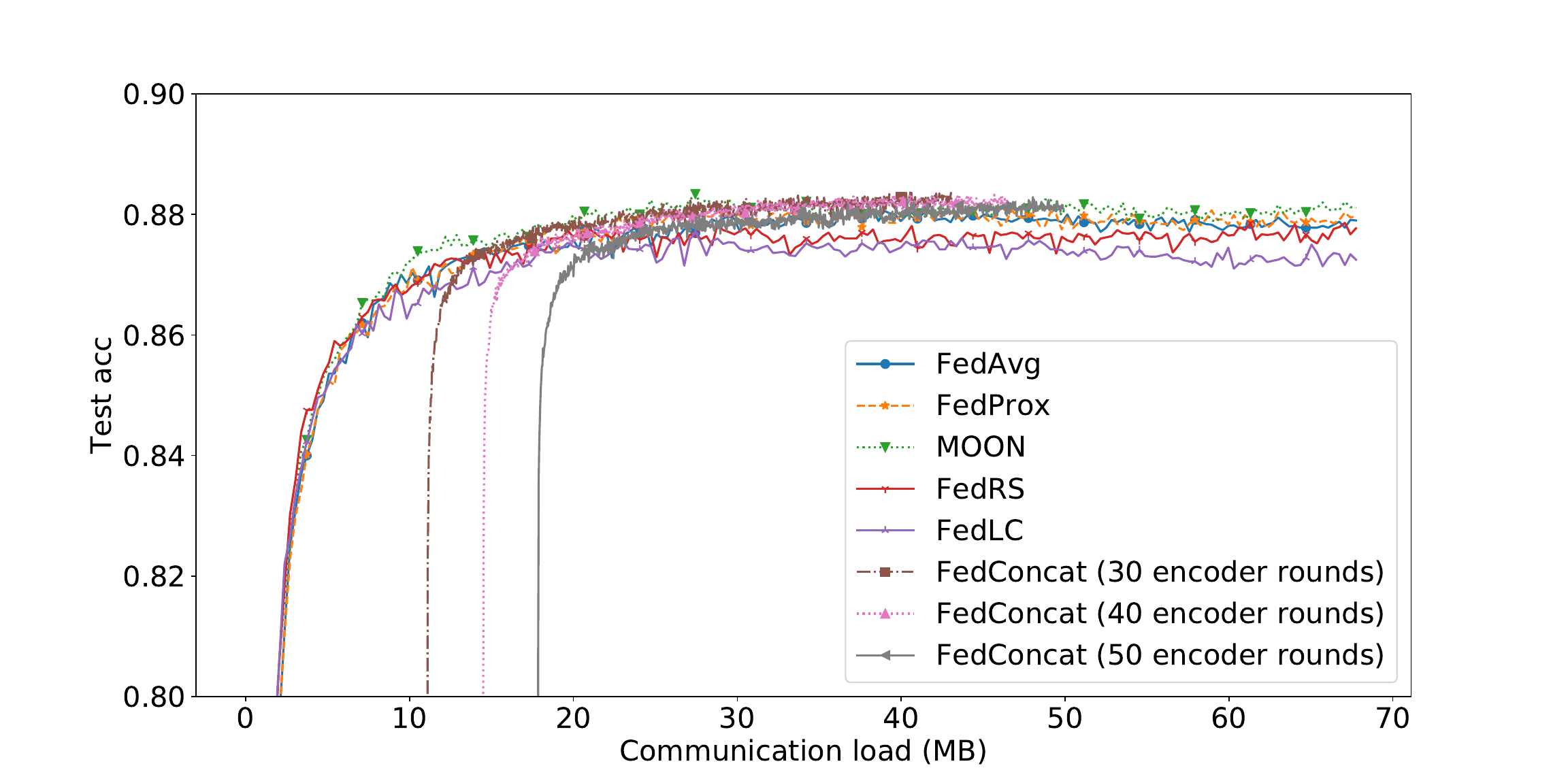}}
    \caption{The training curves under different settings.}
    \label{fig:comm}
\end{figure*}

\subsection{Analyzing Computation Cost of FedConcat and Baselines}
\label{sec:anal_computation}
{Besides the one-time clustering, computation cost of FedConcat differs with FedAvg only in the post-training stage. FedProx and MOON have additional regularization loss term in local training, which leads to additional computation compared with FedAvg. FedRS and FedLC perform slight changes on cross entropy loss to address label skews, so their computation costs are similar to that of FedAvg. We show the training curve with respect to the time in Figure \ref{fig:time}. As we can see, FedConcat can achieve better accuracy in shorter time compared with baselines. }

\begin{figure}[]
    \centering
    \includegraphics[width=\columnwidth]{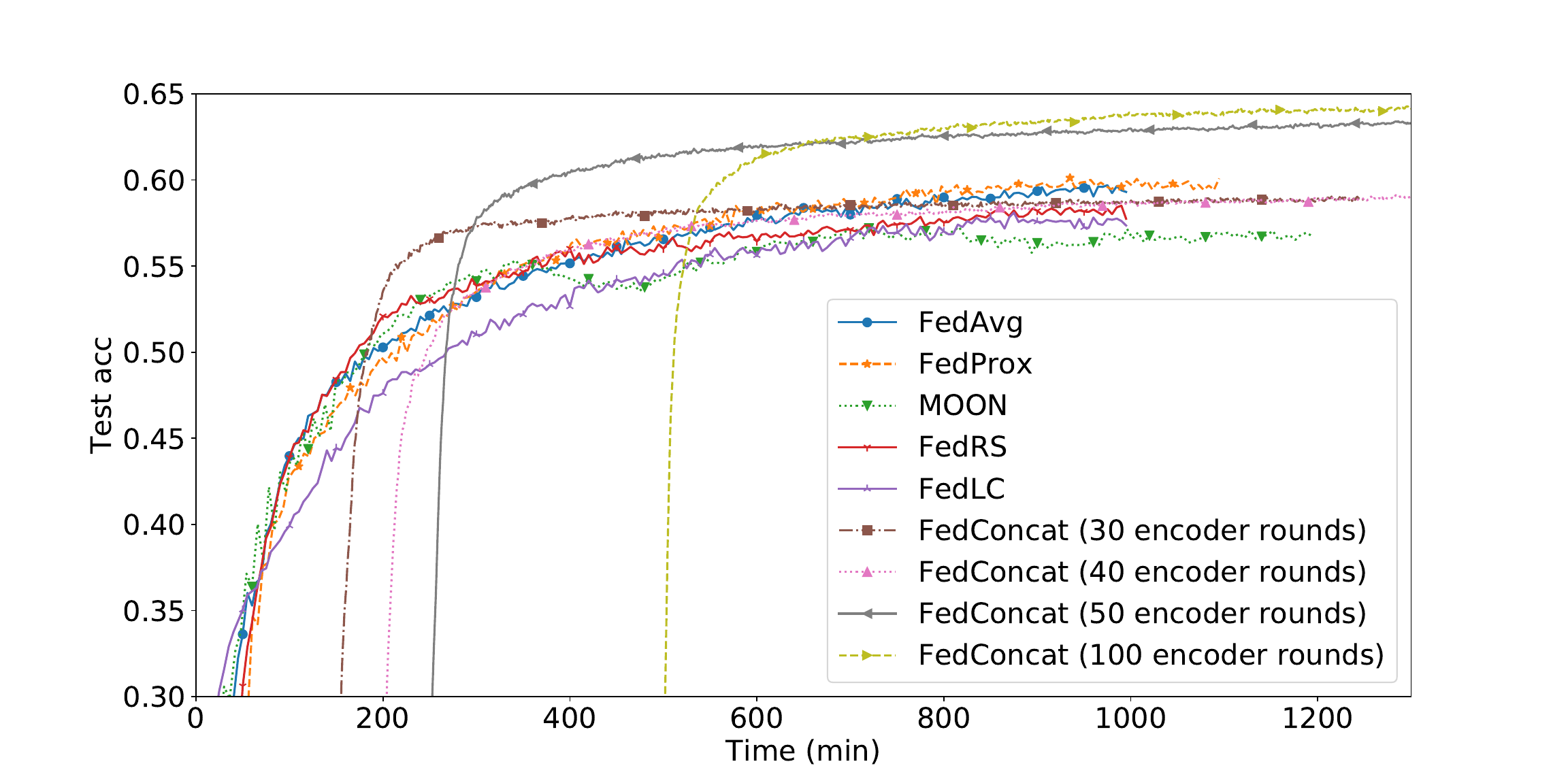}
    \caption{Training curves with respect to time. We divide CIFAR-10 into 40 clients by \#C=2 partition. }
    \vspace{-10pt}
    \label{fig:time}
\end{figure}

\subsection{Results of Baselines Running on Concatenated Models}
\label{sec:more_concatenated_model}
In this section we train the large CNN model (i.e. the model with equivalent size to the final concatenated model of FedConcat) for 50 communication rounds via FedAvg, FedProx, MOON, FedRS and FedLC. In order to control the same communication cost, we train 200 extra encoder rounds for FedConcat. The results are shown in Table \ref{tbl:large}. 
We can observe that FedConcat and FedConcat-ID still outperform other methods in most cases. The improvement of FedConcat is significant, which is higher than 5\% accuracy in many settings.

\begin{table*}[h]
\centering
\caption{Experimental results of FedConcat and FedConcat-ID compared with FedAvg, FedProx, MOON, FedRS and FedLC with same communication cost. The model size of baseline algorithms is the same as the final global model in FedConcat. }

\label{tbl:large}
\resizebox{2.1\columnwidth}{!}{
\begin{tabular}{|c|c|c|c|c|c|c||c|c|}
\hline
Dataset & Partition & FedAvg & FedProx & MOON & FedRS & FedLC & FedConcat & FedConcat-ID \\ \hline
\multirow{4}{*}{CIFAR-10} & $\#C=2$ & 54.2\%$\pm$2.6\% & 52.5\%$\pm$1.7\% & 55.6\%$\pm$4.5\% & 55.4\%$\pm$1.7\% & 52.5\%$\pm$0.9\% & \textbf{64.0\%$\pm$0.8\%} &\textbf{63.5\%$\pm$1.8\%} \\ \cline{2-9}
& $\#C=3$ & 62.3\%$\pm$0.5\% & 62.8\%$\pm$1.3\% & 63.9\%$\pm$3.3\% & 63.9\%$\pm$0.8\% & 61.6\%$\pm$1.0\% & \textbf{67.1\%$\pm$1.0\%}& \textbf{67.2\%$\pm$0.2\%} \\ \cline{2-9}
& $p_k \sim Dir(0.1)$ & 55.7\%$\pm$1.3\% & 55.7\%$\pm$0.9\% & 57.0\%$\pm$3.0\% & 58.0\%$\pm$0.3\% & 49.5\%$\pm$1.6\% & \textbf{63.0\%$\pm$1.7\%}& \textbf{62.2\%$\pm$0.6\%} \\ \cline{2-9}
& $p_k \sim Dir(0.5)$ & 65.3\%$\pm$1.0\% & 65.4\%$\pm$1.0\% & 67.0\%$\pm$3.5\% & \textbf{68.3\%$\pm$0.4\%} & 67.1\%$\pm$0.4\% & 67.9\%$\pm$0.3\% & 68.1\%$\pm$0.5\%  \\ \hline
\multirow{4}{*}{SVHN} & $\#C=2$ & 82.5\%$\pm$0.8\% & 82.0\%$\pm$1.4\% & 82.5\%$\pm$1.1\% & 75.5\%$\pm$2.4\% & 67.1\%$\pm$5.5\% & \textbf{85.7\%$\pm$0.1\%}& \textbf{86.4\%$\pm$0.1\%} \\ \cline{2-9}
& $\#C=3$ & 84.4\%$\pm$0.5\% & 84.6\%$\pm$0.6\% & 84.3\%$\pm$0.5\% & 84.8\%$\pm$0.6\% & 83.8\%$\pm$0.4\% & \textbf{87.4\%$\pm$0.2\%}& \textbf{87.5\%$\pm$0.2\%} \\ \cline{2-9}
& $p_k \sim Dir(0.1)$ & 83.8\%$\pm$0.8\% & 83.8\%$\pm$0.8\% & 82.2\%$\pm$1.9\% & 79.2\%$\pm$0.7\% & 75.2\%$\pm$0.6\% & \textbf{85.8\%$\pm$0.6\%}& \textbf{86.0\%$\pm$0.2\%} \\ \cline{2-9}
& $p_k \sim Dir(0.5)$ & 86.9\%$\pm$0.2\% & 86.9\%$\pm$0.3\% & 87.4\%$\pm$0.2\% & 86.5\%$\pm$0.9\% & 86.3\%$\pm$0.4\% & \textbf{88.4\%$\pm$0.1\%}& \textbf{88.6\%$\pm$0.1\%} \\ \hline
\multirow{4}{*}{FMNIST} & $\#C=2$ & 78.4\%$\pm$5.8\% & 79.4\%$\pm$7.4\% & 79.5\%$\pm$2.5\% & 72.8\%$\pm$9.0\% & 67.8\%$\pm$7.7\% & \textbf{85.1\%$\pm$1.8\%} & \textbf{85.5\%$\pm$1.6\%} \\ \cline{2-9}
& $\#C=3$ & 83.5\%$\pm$1.5\% & 85.2\%$\pm$1.0\% & 84.8\%$\pm$2.2\% & 85.5\%$\pm$0.5\% & 84.3\%$\pm$1.2\% & \textbf{87.9\%$\pm$0.2\%}& \textbf{88.1\%$\pm$0.2\%} \\ \cline{2-9}
& $p_k \sim Dir(0.1)$ & 85.3\%$\pm$0.5\% & 85.1\%$\pm$0.8\% & 84.9\%$\pm$0.3\% & 81.6\%$\pm$0.9\% & 80.1\%$\pm$0.1\% & \textbf{86.8\%$\pm$0.3\%}& \textbf{86.5\%$\pm$0.1\%} \\ \cline{2-9}
& $p_k \sim Dir(0.5)$ & 87.7\%$\pm$0.8\% & 87.7\%$\pm$0.8\% & 87.5\%$\pm$0.3\% &  87.8\%$\pm$0.5\% & 87.6\%$\pm$0.4\% & \textbf{88.3\%$\pm$0.1\%}& \textbf{88.1\%$\pm$0.3\%} \\ \hline
\end{tabular}
}
\end{table*}

\subsection{Experiments on the Number of Clusters} 
\label{sec:exp_num_cluster}
By default, we set cluster number $K=5$. Here we conduct experiments with different $K$ to investigate the effect of number of clusters. By default, we train FedConcat with 100 encoder rounds and 1000 classifier rounds. For the baseline, we train FedAvg with 200 communication rounds. We experiment with $K \in \{1,2,5,10\}$, as well as estimated elbow value of specific label distribution. 

We show results of CIFAR-10 and FMNIST in Figure \ref{fig:cluster}. Small cluster number leads to similar performance of FedAvg, while too large cluster number may cause overfitting problem of each cluster, which leads to unstable training curve and degraded performance. Choosing cluster number as elbow value can generally achieve good accuracy. However, sometimes it is hard to identify elbow value due to smooth total distance decrease, and we may identify a rather large elbow value resulting in unstable training curve. We encounter this problem in one experiment on FMNIST dataset. On such case, one can consider decreasing the cluster number. 

\begin{figure*}[h]
    \centering
    \subfloat[CIFAR-10, \#C=2]{\includegraphics[width=0.33\textwidth]{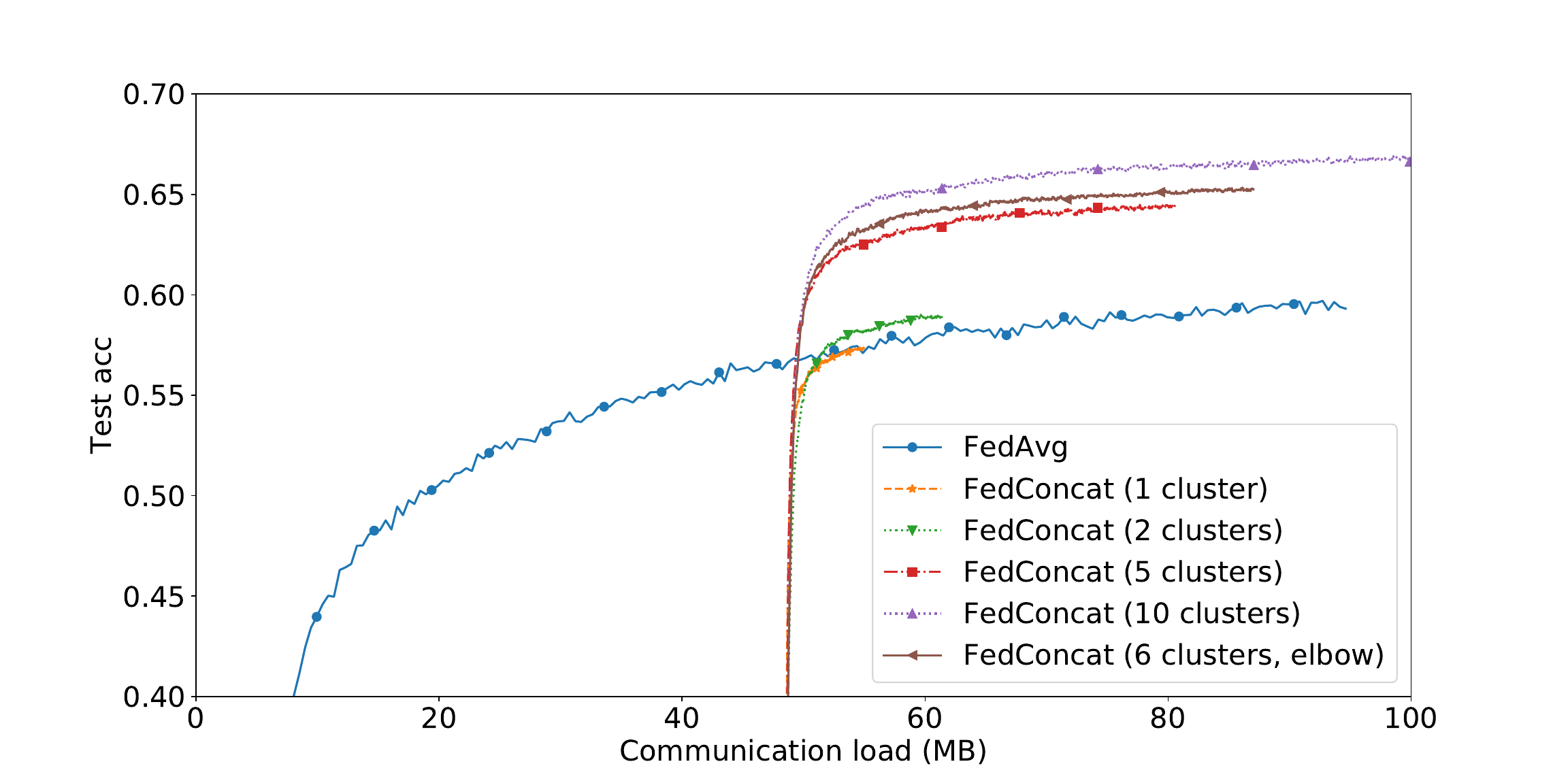}}
    \subfloat[CIFAR-10,$p_k \sim Dir(0.1)$]{\includegraphics[width=0.33\textwidth]{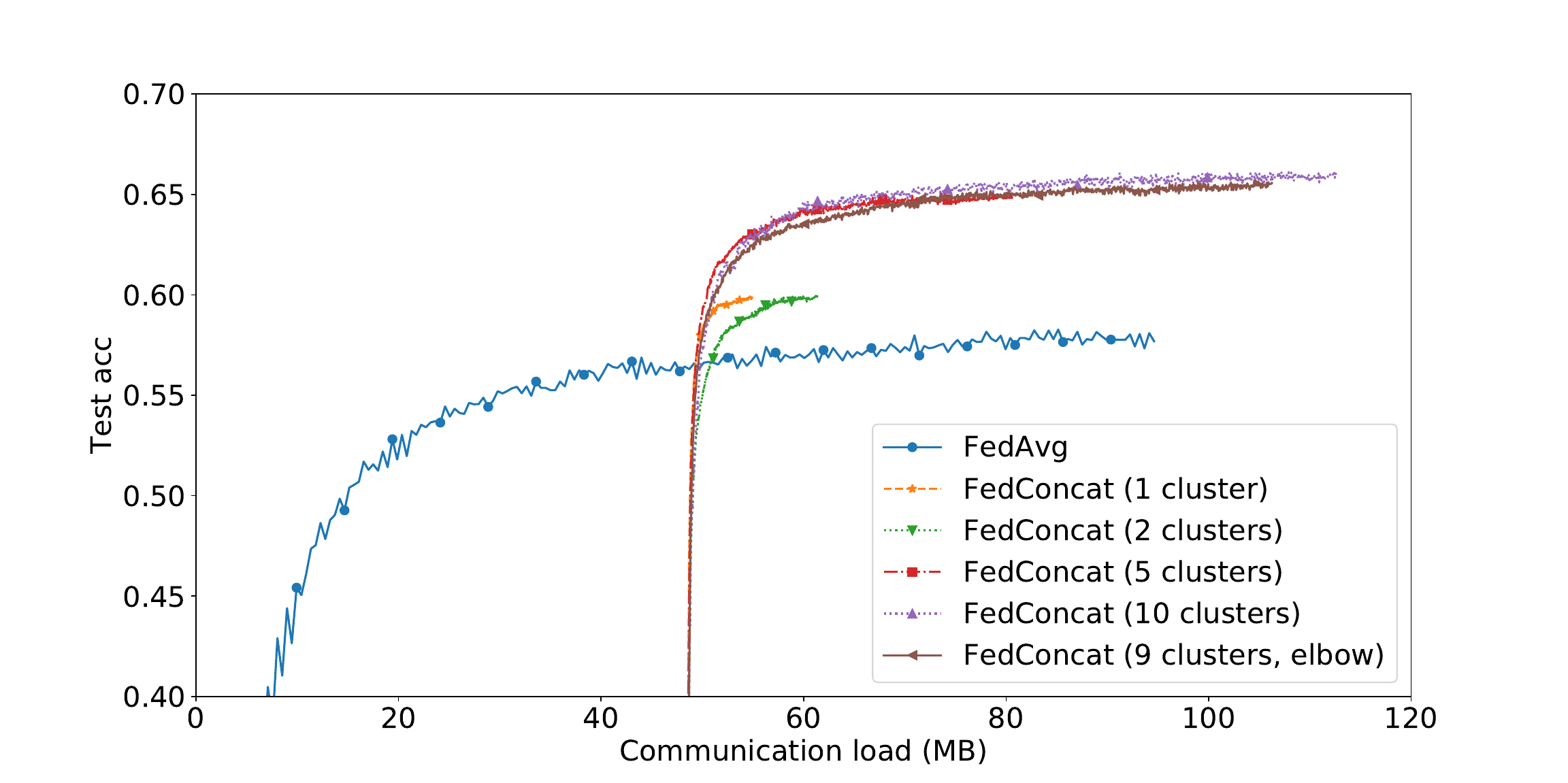}}
    \subfloat[CIFAR-10,$p_k \sim Dir(0.5)$]{\includegraphics[width=0.33\textwidth]{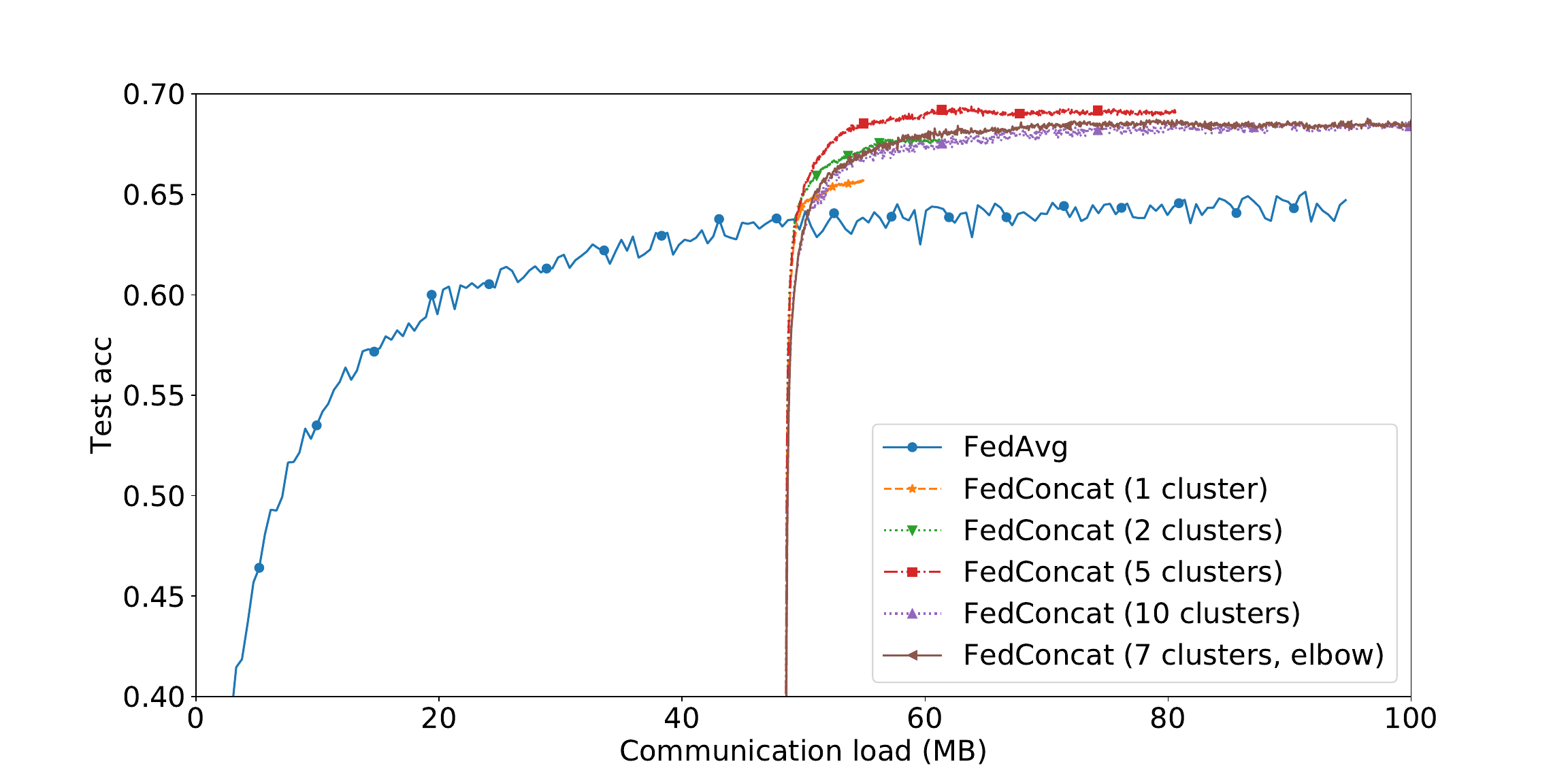}}
    \hfill
    \subfloat[FMNIST, \#C=2]{\includegraphics[width=0.33\textwidth]{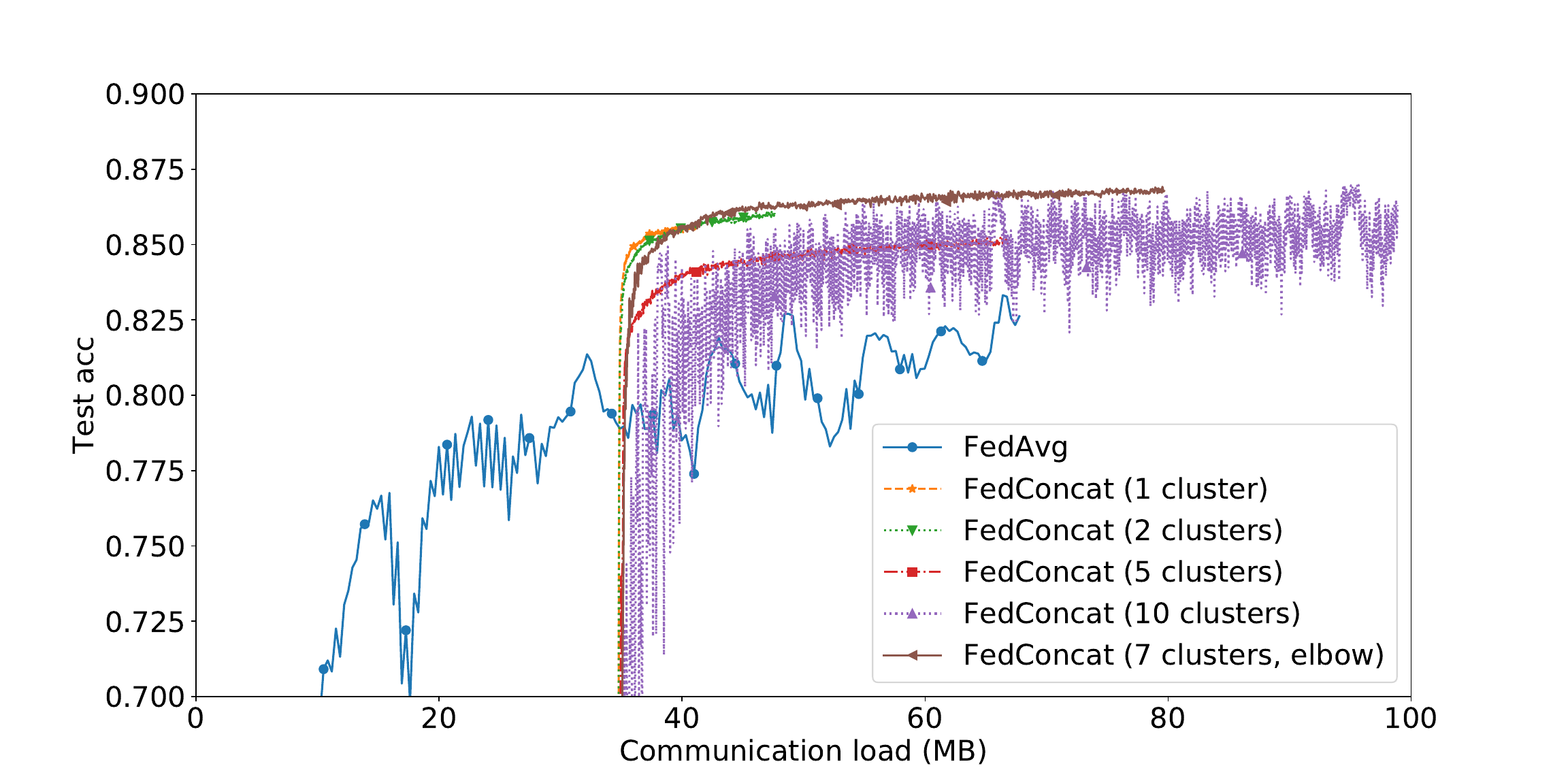}}
    \subfloat[FMNIST, $p_k \sim Dir(0.1)$]{\includegraphics[width=0.33\textwidth]{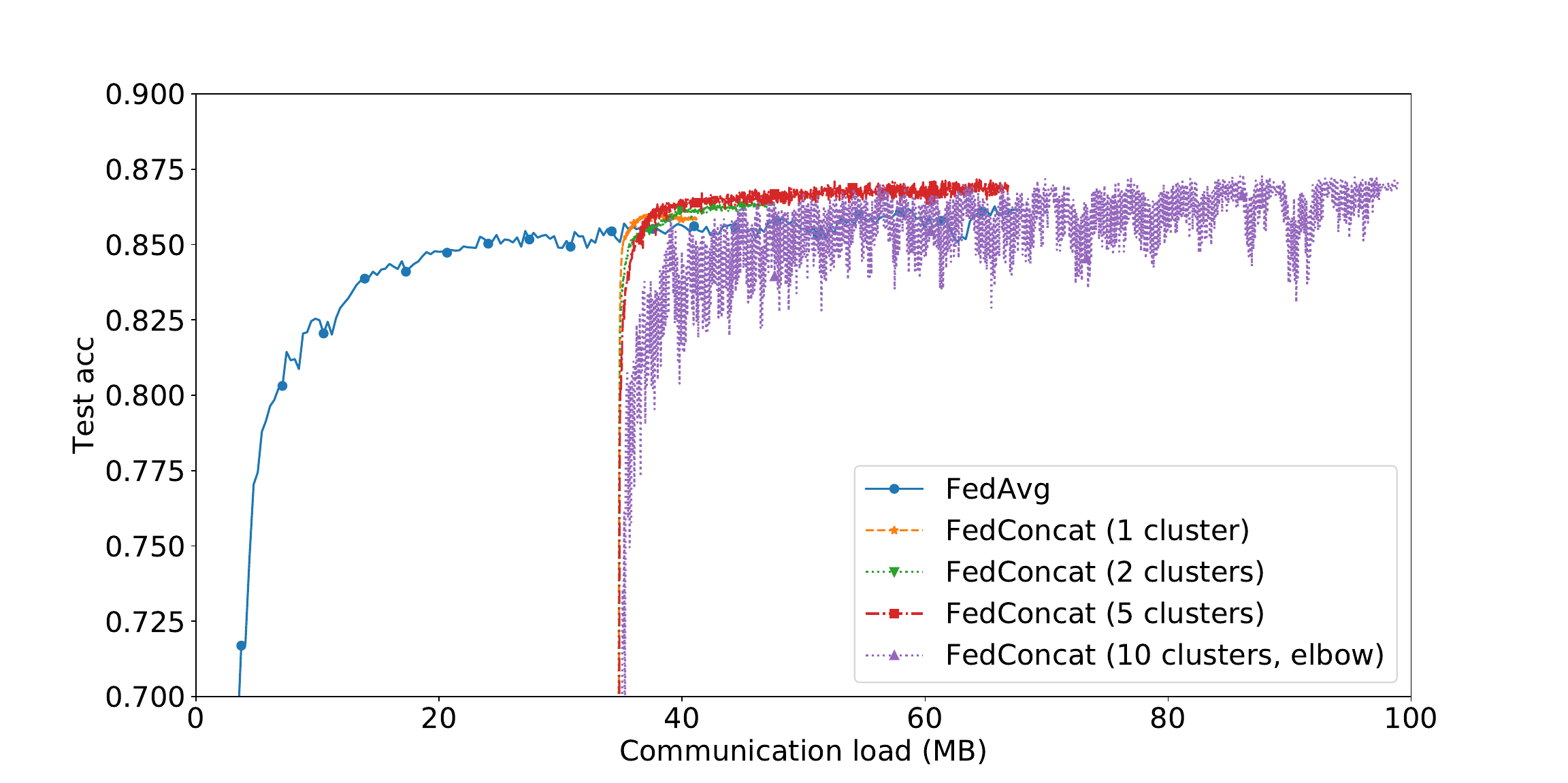}}
    \subfloat[FMNIST, $p_k \sim Dir(0.5)$]{\includegraphics[width=0.33\textwidth]{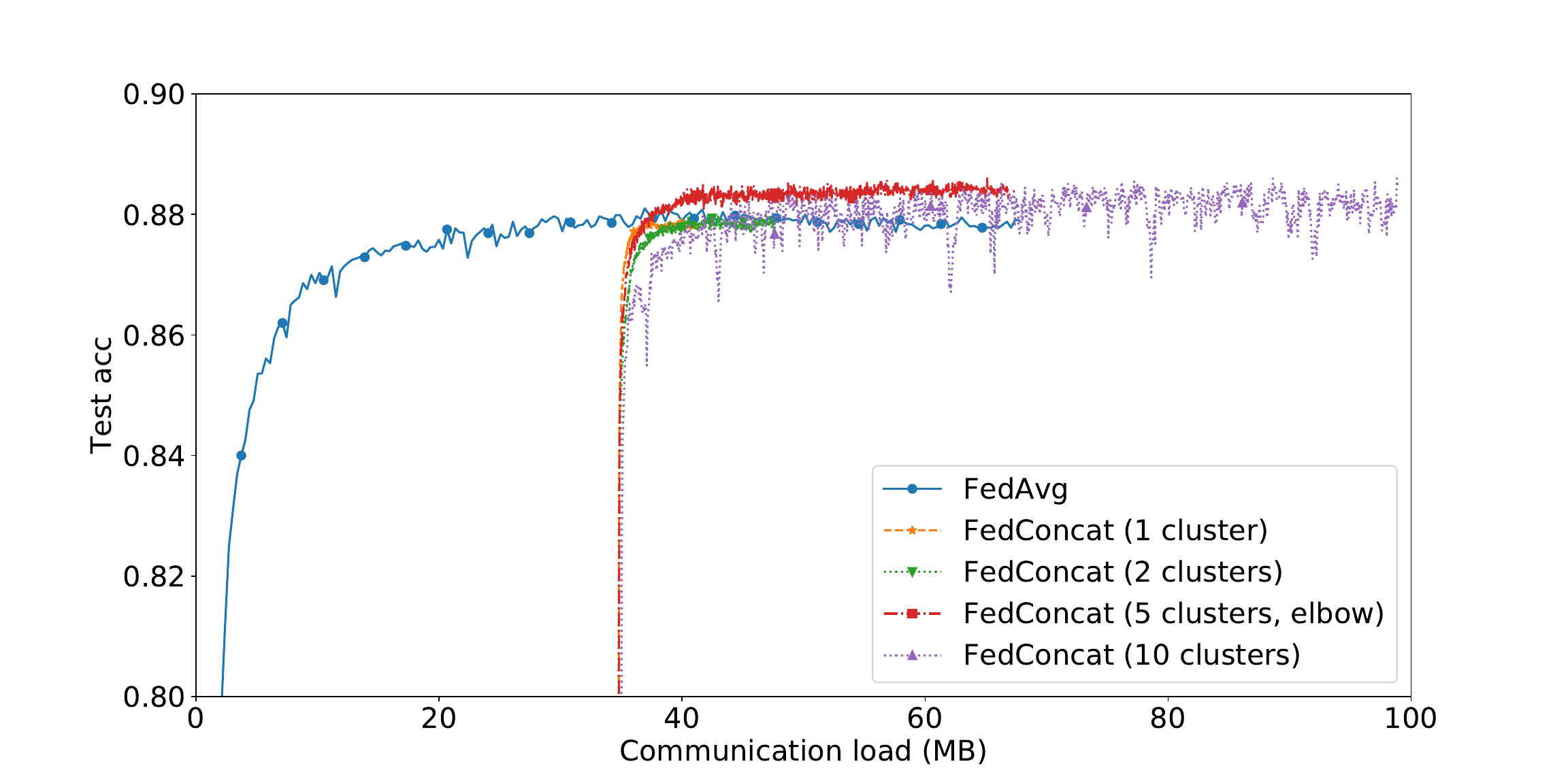}}
    \caption{The training curves of different cluster number on CIFAR-10 and FMNIST. Here elbow means such $K$ is estimated to be about the elbow value. Note that all curves in the figure have actually converged. Some curves, especially those of one cluster, seem not converging because the range of its communication cost is relatively small in the axis. FedConcat with one cluster is different from FedAvg in the post-training stage, where FedConcat only trains the last layer.}
    \label{fig:cluster}
\end{figure*}

\subsection{More Experiments on Clustered FL}
\label{sec:exp_cluster_FL}
{This section supplements Section \ref{sec:main_cfl} in the main paper. We first briefly introduce the compared clustered FL algorithms. FeSEM~\citep{long2023multi} employs the EM algorithm to flexibly form clusters based on the distance between local models and the respective cluster centers. FedSoft \citep{ruan2022fedsoft} proposes soft clustering approach where each client is part of a mixture of clusters rather than being confined to just one. IFCA \citep{Ghosh2020AnEF} enables each client to develop multiple local models and dynamically fine-tune their cluster affiliation, always opting for the model with the least loss. }

{Results on SVHN and FMNIST datasets are shown in Table \ref{tbl:cfl_more}. Combined with Table \ref{tbl:cfl} in main paper, we can see that FedConcat outperforms other clustering strategies in most cases. {FeSEM}~\citep{long2023multi} is a strong baseline and can be applied to our clustering stage. Even in cases where FedConcat cannot beat FedConcat-FeSEM, the difference is no more than 1.5\%. However, considering the extra computation cost of FeSEM and extra communication cost of IFCA \citep{Ghosh2020AnEF} and FedSoft \citep{ruan2022fedsoft}, we can conclude that our proposed clustering strategies are effective and efficient.}

\begin{table*}[ht]
\centering
\caption{Additional results of FedConcat and FedConcat-ID compared with other clustering strategies. These clustering strategies are adapted in FedConcat framework, therefore we denote them as FedConcat-IFCA, FedConcat-FedSoft and FedConcat-FeSEM. }
\label{tbl:cfl_more}
\resizebox{2.1\columnwidth}{!}{
\begin{tabular}{|c|c|c|c|c||c|c|}
\hline
Dataset & Partition & FedConcat-IFCA & FedConcat-FedSoft & FedConcat-FeSEM & FedConcat & FedConcat-ID \\ \hline
\multirow{4}{*}{SVHN} & $\#C=2$ & 83.7\%$\pm$0.6\%  & 82.0\%$\pm$1.4\%  & \textbf{84.6\%$\pm$0.4\%} & 83.4\%$\pm$1.4\% &  83.2\%$\pm$1.9\%  \\ \cline{2-7}
& $\#C=3$ & 85.7\%$\pm$0.6\%  & 84.5\%$\pm$0.2\%  & \textbf{86.4\%$\pm$0.4\%} & 86.0\%$\pm$0.9\% & 86.1\%$\pm$0.5\% \\ \cline{2-7}
& $p_k \sim Dir(0.1)$ & \textbf{83.9\%$\pm$0.6\%}  & 83.0\%$\pm$0.8\%  & \textbf{83.9\%$\pm$1.4\%} & 83.2\%$\pm$0.9\%& 82.9\%$\pm$0.3\%  \\ \cline{2-7}
& $p_k \sim Dir(0.5)$ &  86.7\%$\pm$2.0\% & 86.9\%$\pm$0.7\%  & 87.7\%$\pm$0.3\% & 87.5\%$\pm$0.1\% & \textbf{87.9\%$\pm$0.3\%} \\ \hline
\multirow{4}{*}{FMNIST} & $\#C=2$  & 79.4\%$\pm$8.3\% & 81.9\%$\pm$1.9\% & 84.2\%$\pm$1.6\%& \textbf{84.4\%$\pm$0.6\%} & 83.0\%$\pm$2.0\% \\ \cline{2-7}
& $\#C=3$  & 86.1\%$\pm$0.6\% & 85.8\%$\pm$0.2\% & 86.6\%$\pm$0.1\%& \textbf{87.1\%$\pm$0.2\%}& \textbf{86.6\%$\pm$0.1\%} \\ \cline{2-7}
& $p_k \sim Dir(0.1)$ & 84.6\%$\pm$0.5\% & 85.0\%$\pm$0.6\% & \textbf{85.4\%$\pm$0.5\%}& 84.5\%$\pm$0.1\%& 85.0\%$\pm$0.4\%  \\ \cline{2-7}
& $p_k \sim Dir(0.5)$ & 87.4\%$\pm$0.2\% & 87.2\%$\pm$0.2\% & 87.6\%$\pm$0.2\%& \textbf{87.7\%$\pm$0.1\%} & 87.5\%$\pm$0.2\%  \\ \hline
\end{tabular}
 }
\end{table*}

\subsection{Experiments on Partial Client Participation}
\label{sec:exp_partial}
{In this section, we experiment with client partial participation. We adopt the similar experimental setting as in Section 4.2. In each round, we randomly choose 50\% clients to participate in the training. To maintain the same communication cost, we double the number of rounds for all algorithms. For FedConcat and FedConcat-ID, the partial participation happens after collecting the label distribution (or inferred label distribution) and clustering. The model size of baseline algorithms is the same as the model of one cluster in FedConcat. Results are shown in Table \ref{tbl:sample}. We can conclude that FedConcat and FedConcat-ID are robust to partial client participation and still better than baselines in most cases.}

\begin{table*}[ht]
\centering
\caption{Experimental results of FedConcat and FedConcat-ID compared with FedAvg, FedProx, MOON, FedRS and FedLC with same communication cost under 50\% client participation each round. }
\label{tbl:sample}
\resizebox{2.1\columnwidth}{!}{
\begin{tabular}{|c|c|c|c|c|c|c||c|c|}
\hline
Dataset & Partition & FedAvg & FedProx & MOON & FedRS & FedLC & FedConcat & FedConcat-ID \\ \hline
\multirow{4}{*}{CIFAR-10} & $\#C=2$ & 49.2\%$\pm$3.5\% & 48.1\%$\pm$3.1\% & 48.6\%$\pm$3.9\% & 57.0\%$\pm$1.3\% & 49.1\%$\pm$4.1\% & \textbf{57.0\%$\pm$1.9\%} & \textbf{57.2\%$\pm$2.2\%} \\ \cline{2-9}
& $\#C=3$ & 58.2\%$\pm$1.9\% & 57.9\%$\pm$2.0\% & 58.3\%$\pm$1.6\% & 62.4\%$\pm$1.3\% & 58.8\%$\pm$1.2\% & \textbf{63.0\%$\pm$1.7\%}& \textbf{64.6\%$\pm$0.5\%} \\ \cline{2-9}
& $p_k \sim Dir(0.1)$ & 49.8\%$\pm$4.2\% & 49.4\%$\pm$4.0\% & 50.2\%$\pm$4.1\% & 52.3\%$\pm$1.1\% & 51.1\%$\pm$0.8\% & \textbf{54.4\%$\pm$1.5\%}& \textbf{57.2\%$\pm$1.1\%} \\ \cline{2-9}
& $p_k \sim Dir(0.5)$ & 62.1\%$\pm$0.3\% & 62.3\%$\pm$0.4\% & 62.0\%$\pm$0.7\% & 62.6\%$\pm$1.1\% & 60.2\%$\pm$0.5\% & \textbf{66.6\%$\pm$0.4\%}& \textbf{66.5\%$\pm$0.1\%} \\ \hline
\multirow{4}{*}{SVHN} & $\#C=2$ & \textbf{82.2\%$\pm$1.1\%} & 82.0\%$\pm$1.3\% & 81.9\%$\pm$1.3\% & 80.3\%$\pm$1.5\% & 82.0\%$\pm$1.9\% & 81.7\%$\pm$1.9\% & 80.0\%$\pm$2.4\% \\ \cline{2-9}
& $\#C=3$ & 83.1\%$\pm$2.6\% & 82.9\%$\pm$3.1\% & 83.0\%$\pm$2.5\% & \textbf{85.9\%$\pm$0.5\%} & 82.7\%$\pm$2.6\% & 84.2\%$\pm$1.6\% & 83.7\%$\pm$3.3\% \\ \cline{2-9}
& $p_k \sim Dir(0.1)$ & 81.8\%$\pm$3.2\% & 82.4\%$\pm$2.3\% & 81.4\%$\pm$3.0\% & 75.5\%$\pm$1.8\% & 80.2\%$\pm$3.5\% & \textbf{82.6\%$\pm$0.5\%} & 82.0\%$\pm$2.1\%  \\ \cline{2-9}
& $p_k \sim Dir(0.5)$ & 87.0\%$\pm$0.5\% & 86.9\%$\pm$0.6\% & 87.0\%$\pm$0.3\% & 87.4\%$\pm$0.2\% & 86.9\%$\pm$0.4\% & \textbf{88.1\%$\pm$0.2\%}& \textbf{88.3\%$\pm$0.2\%} \\ \hline
\multirow{4}{*}{FMNIST} & $\#C=2$ & 81.5\%$\pm$1.5\% & 81.7\%$\pm$1.5\% & 81.4\%$\pm$0.6\% & 75.0\%$\pm$6.5\% & 72.3\%$\pm$4.1\% & \textbf{84.4\%$\pm$1.9\%}& \textbf{84.5\%$\pm$1.7\%} \\ \cline{2-9}
& $\#C=3$ & 84.3\%$\pm$1.6\% & 84.2\%$\pm$1.1\% & 84.2\%$\pm$1.5\% & 86.2\%$\pm$0.4\% & 85.6\%$\pm$0.9\% & \textbf{86.8\%$\pm$0.1\%}& \textbf{86.9\%$\pm$0.3\%} \\ \cline{2-9}
& $p_k \sim Dir(0.1)$ & 84.4\%$\pm$1.1\% & 84.1\%$\pm$1.3\% & 84.5\%$\pm$1.4\% & 81.9\%$\pm$0.6\% & 81.1\%$\pm$1.0\% & \textbf{84.8\%$\pm$1.6\%} & \textbf{85.6\%$\pm$0.3\%} \\ \cline{2-9}
& $p_k \sim Dir(0.5)$ & 87.4\%$\pm$0.5\% & 87.2\%$\pm$0.5\% & 87.4\%$\pm$0.5\% & 87.3\%$\pm$0.3\% & 87.6\%$\pm$0.3\% & \textbf{87.9\%$\pm$0.3\%}& \textbf{88.0\%$\pm$0.3\%} \\ \hline
\end{tabular}
 }
\end{table*}

\subsection{More Experiments of Larger Models}
\label{sec:exp_resnet}

\paragraph{Tackling overfitting}



{We observe that the cluster models of FedConcat overfit severely in $p_k \sim Dir(0.1)$ and $p_k \sim Dir(0.5)$, when the weight decay is kept as $10^{-5}$. Since the volume of data each cluster receives in FL settings is small, large models can easily overfit the dataset of each cluster. Comparatively, directly training FedAvg on all clients mitigates the overfitting effect since the quantity of data is higher. The overfitting issue leads to the poor performance of FedConcat.}

{In order to counteract overfitting, we increase the weight decay factor. Our experiments indicate that a weight decay factor among $\{0.001,0.002,0.005\}$ significantly enhances the test accuracy of most algorithms. However, an excessive increase (e.g. $0.01$) leads to a decrease in accuracy. }

\paragraph{Balancing the size of each cluster}
{Apart from overfitting, we notice that the clustering process becomes increasingly imbalanced as the number of classes rises. As the label distribution expands over higher dimensional spaces, each client's label distribution becomes sparser and more isolated, thus posing a challenge to clustering. Clusters with minimal client count tend to be more prone to overfitting. To rectify the issue of unbalanced clusters, we modify the K-means algorithm to manually relocate distant points from majority clusters to minority ones. In particular, if a cluster exceeds $1.2\times$ the average number of members, redistribution takes place until its size falls below this threshold. Results are shown in Table \ref{tbl:bal_cluster}. We can find out that the balanced clustering technique brings significant improvement in extreme non-IID cases (e.g. \#C=2 and \#C=3). In slight non-IID cases like $p_k \sim Dir(0.1)$ or $p_k \sim Dir(0.5)$, normal clustering strategy works better. }

\begin{table*}[ht]
\centering
\caption{Experimental results of applying balanced clustering on CIFAR-100 with ResNet-50. }
\label{tbl:bal_cluster}
\begin{tabular}{|c|c|c||c|c|}
\hline
 Partition & FedConcat (naive) & FedConcat (bal) & FedConcat-ID (naive) & FedConcat-ID (bal) \\ \hline
 $\#C=2$ & 11.6\%$\pm$0.5\% & \textbf{15.9\%$\pm$0.8\%} & 12.4\%$\pm$0.9\% & \textbf{13.5\%$\pm$0.5\%} \\ \hline
$\#C=3$ & 30.0\%$\pm$0.8\% & \textbf{32.2\%$\pm$0.5\%} & 26.7\%$\pm$0.7\% & \textbf{31.0\%$\pm$0.6\%}  \\ \hline

$p_k \sim Dir(0.1)$ & 58.9\%$\pm$0.5\% & \textbf{60.1\%$\pm$0.3\%} & \textbf{62.1\%$\pm$0.4\%} & 57.9\%$\pm$0.4\%  \\ \hline
$p_k \sim Dir(0.5)$ & \textbf{66.4\%$\pm$0.1\%} & 66.3\%$\pm$0.2\% & \textbf{68.1\%$\pm$0.2\%} & 66.2\%$\pm$0.1\%  \\ \hline



\end{tabular}
\end{table*}

\paragraph{Initializing with parameters of cluster classifiers}
{Large models like ResNet-50 take much computation power to train. In the post-training stage of FedConcat framework, we can initialize the large linear classifier with parameters of classifiers from each cluster. The weights are concatenated and the bias are summed up. Such initialization is equal to an ensemble by summing up the logits of these well-trained cluster models, which can be a good start point to train the last classification layer. }

{We show the training curves of FedConcat with balanced clusters on CIFAR-100 in Figure \ref{fig:init-classifier}. As we can see, by initializing with parameters of cluster classifiers in the post-training stage, we can achieve higher accuracy with faster convergence, compared with random initialization.}

\begin{figure*}[h]
    \centering
    \subfloat[\#C=2]{\includegraphics[width=0.25\textwidth]{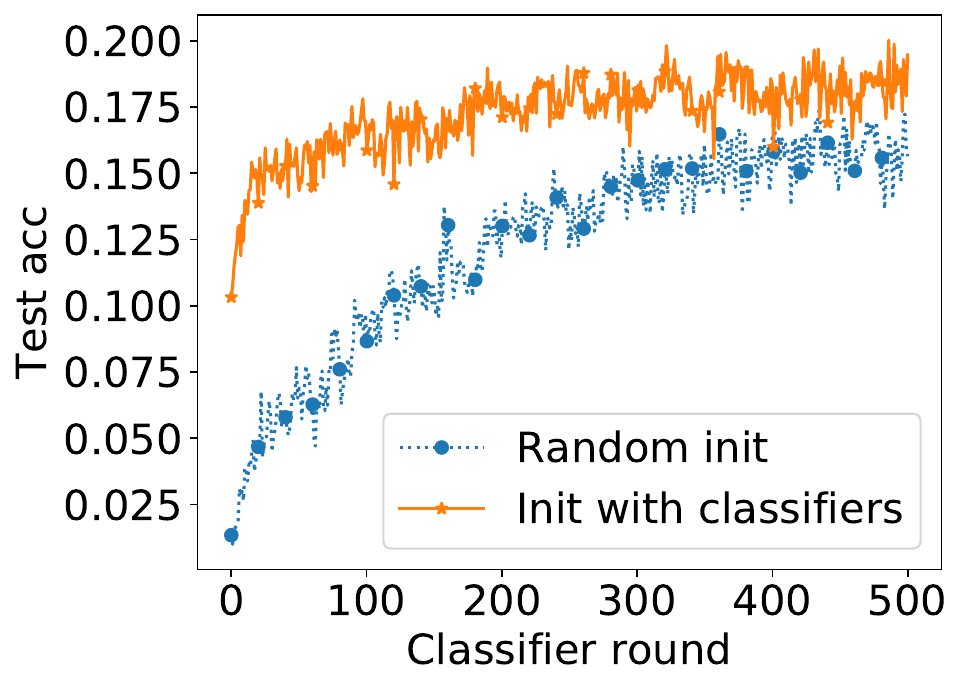}}
    \subfloat[\#C=3]{\includegraphics[width=0.25\textwidth]{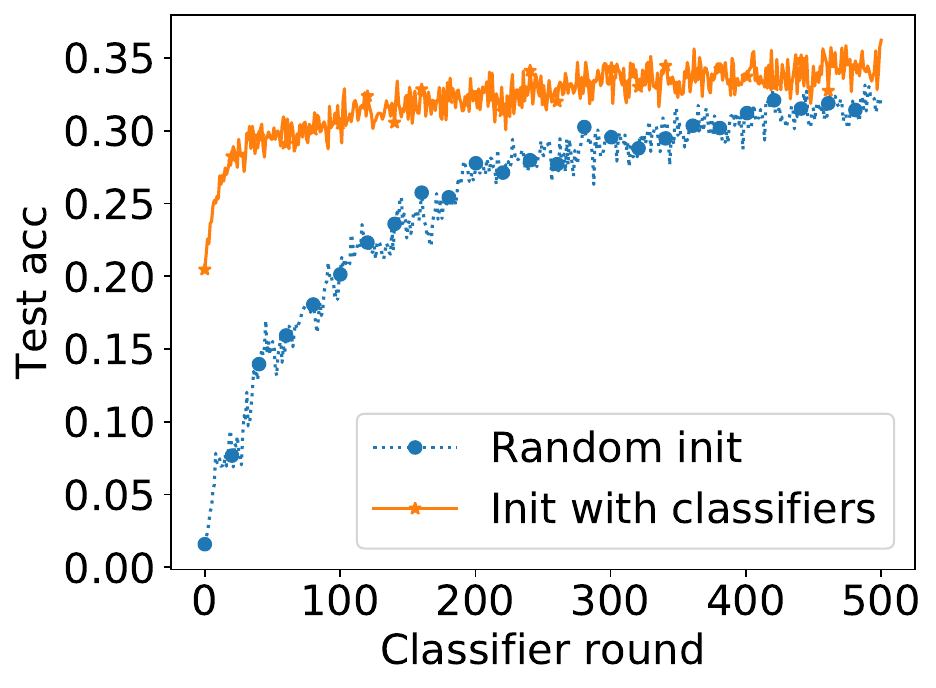}}
    \subfloat[$p_k \sim Dir(0.1)$]{\includegraphics[width=0.25\textwidth]{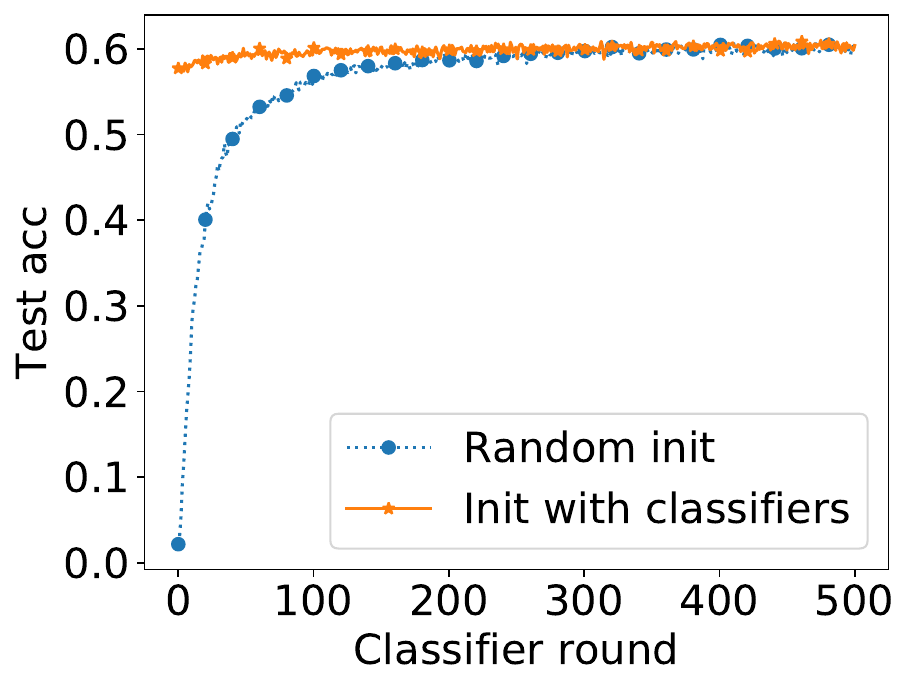}}
    \subfloat[$p_k \sim Dir(0.5)$]{\includegraphics[width=0.25\textwidth]{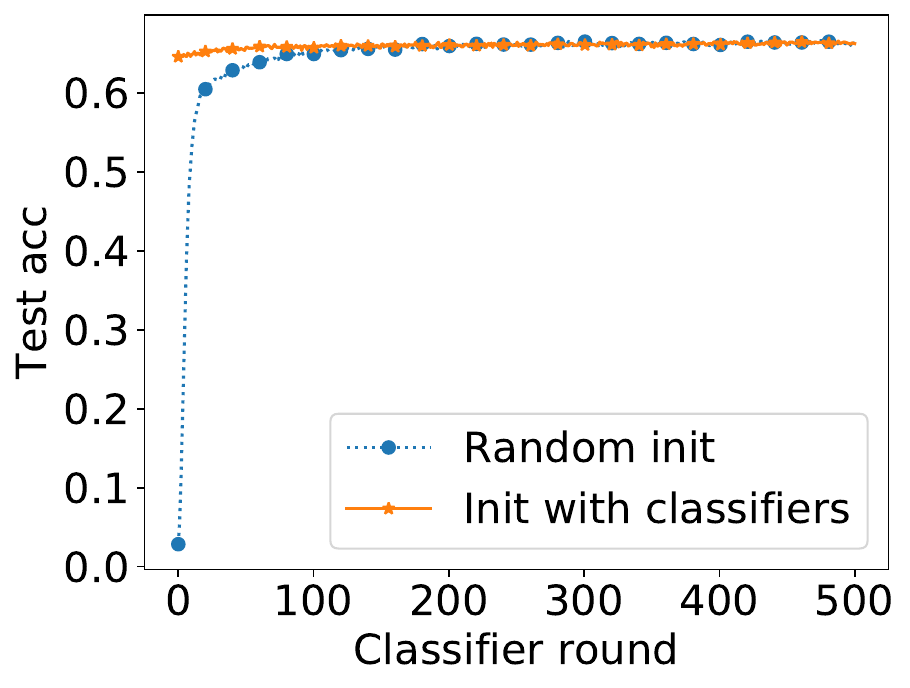}}
    
    \caption{Training curves of random initialization and initializing with cluster classifiers in the post-training stage. We run FedConcat with balanced clusters on CIFAR-100 dataset.}
    \label{fig:init-classifier}
\end{figure*}

\paragraph{VGG-9 model experiments}
We also compare FedConcat and FedConcat-ID with the other baselines with VGG-9 on CIFAR-10. The results are shown in Figure \ref{fig:heavy}.
Similarly, our FedConcat still has significant advantage on VGG-9 model for the extreme non-IID $\#C=2$ case. It can achieve higher accuracy with smooth convergence. FedConcat-ID achieves slightly lower accuracy than FedConcat, but can still outperform five baseline algorithms. 

\begin{figure}[!]
    \centering
    \includegraphics[width=1\columnwidth]{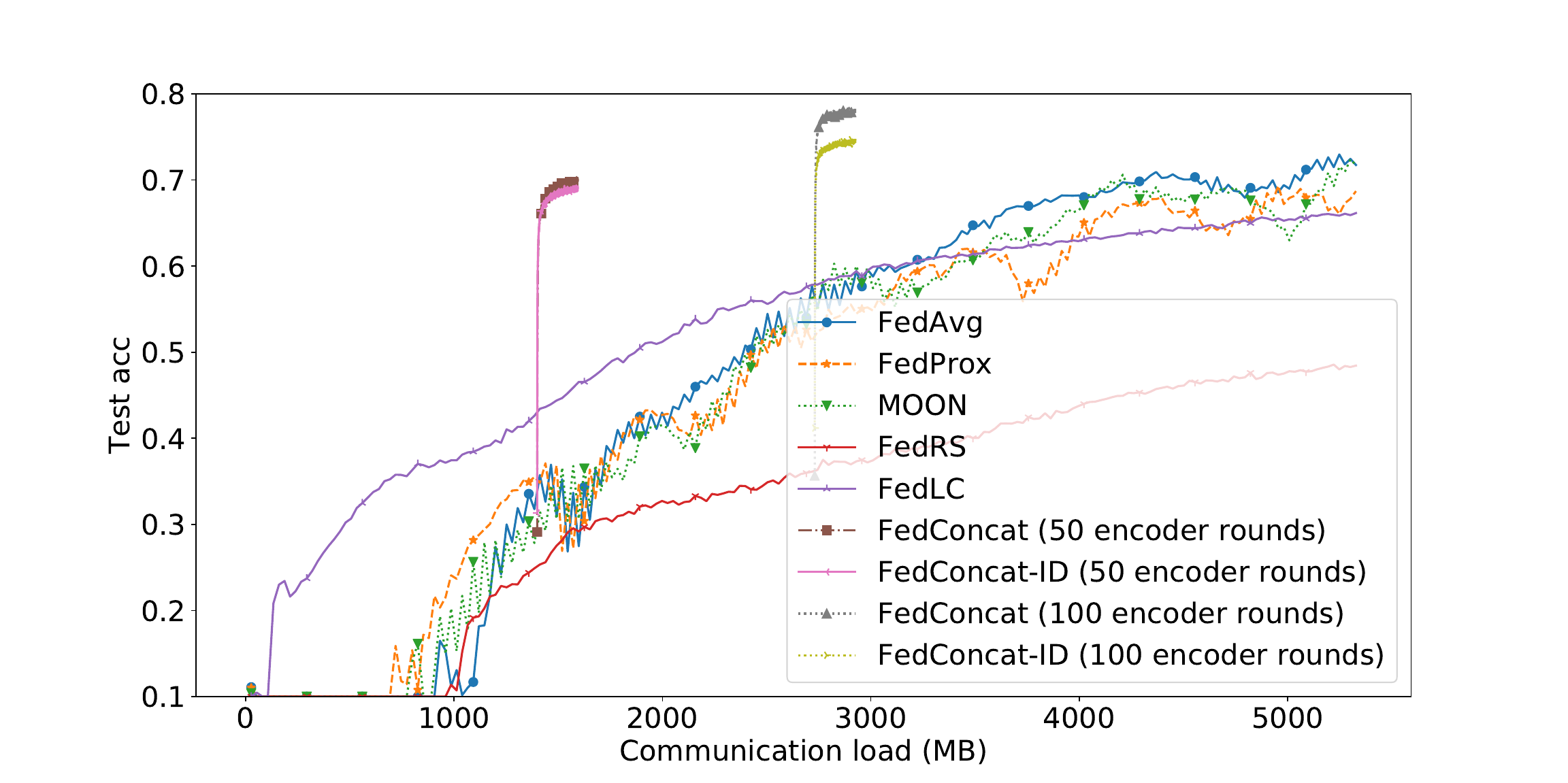}
    \caption{The training curves on CIFAR-10 dataset with VGG-9. The data partition is $\#C=2$.}
    \label{fig:heavy}
\end{figure}

\subsection{More Experiments on Scalability}
\label{sec:more_scalab}
{In this section, we present scalability experiments for the SVHN and FMNIST datasets in Table \ref{tbl:scala}. Using the same settings as those in Section \ref{sec:scalab}, i.e., with a 50\% participation rate in each round, FedConcat consistently outperforms the baselines in both datasets. }

\begin{table*}[h]
\centering
\caption{Scalability of FedConcat and FedConcat-ID compared with baselines on SVHN and FMNIST. }
\label{tbl:scala}
\resizebox{2.1\columnwidth}{!}{
\begin{tabular}{|c|c|c|c|c|c|c|c||c|c|}
\hline
\#Clients & Dataset & Partition & FedAvg & FedProx & MOON & FedRS & FedLC & FedConcat & FedConcat-ID  \\ \hline
\multirow{8}{*}{100}& \multirow{4}{*}{SVHN} & $\#C=2$ & 76.4\% & 77.9\% & 77.2\% & 82.3\% & 81.6\% & \textbf{85.2\%} & \textbf{85.2\%} \\ \cline{3-10}
& & $\#C=3$ & 84.7\% & 84.7\% & 84.4\% & 85.9\% & 84.4\% & \textbf{86.1\%} & \textbf{86.9\%} \\ \cline{3-10}
& & $p_k \sim Dir(0.1)$ & 81.6\% & 82.5\% & 82.4\% & 80.2\% & 79.1\% & \textbf{84.5\%} & \textbf{83.8\%} \\ \cline{3-10}
& & $p_k \sim Dir(0.5)$ & 86.7\% & 86.6\% & 86.6\% & 86.8\% & 86.1\% & \textbf{87.6\%}& \textbf{87.5\%} \\ \cline{2-10}
& \multirow{4}{*}{FMNIST} & $\#C=2$ & 80.5\% & 83.0\% & 82.2\% & 81.5\% & 80.4\% & \textbf{85.2\%}& \textbf{84.1\%} \\ \cline{3-10}
& & $\#C=3$ & 82.8\% & 83.2\% & 83.2\% & 86.0\% & 85.3\% & \textbf{86.4\%}& \textbf{86.7\%} \\ \cline{3-10}
& & $p_k \sim Dir(0.1)$ & 84.2\% & 84.7\% & 84.2\% & 83.6\% & 83.4\% & \textbf{84.8\%} & \textbf{85.6\%} \\ \cline{3-10}
& & $p_k \sim Dir(0.5)$ & 86.8\% & 86.7\% & 86.6\% & 86.5\% & 86.4\% & \textbf{87.2\%}& \textbf{87.3\%} \\ \hline

\multirow{8}{*}{200}& \multirow{4}{*}{SVHN} & $\#C=2$& 80.4\% & 82.7\% & \textbf{82.8\%} & 80.7\% & 77.6\% & 77.9\% & 78.4\%  \\ \cline{3-10}
& & $\#C=3$ & 83.6\% & 83.9\% & 84.0\% & 84.2\% & 82.5\% & \textbf{84.3\%} & \textbf{84.6\%} \\ \cline{3-10}
& & $p_k \sim Dir(0.1)$& 80.9\% & 80.6\% & \textbf{81.0\%} & 80.7\% & 79.3\% & 80.2\% & 79.4\%  \\ \cline{3-10}
& & $p_k \sim Dir(0.5)$ & 85.6\% & 85.8\% & 85.7\% & 85.9\% & 85.9\% & \textbf{86.6\%}& \textbf{86.2\%}  \\ \cline{2-10}
& \multirow{4}{*}{FMNIST} & $\#C=2$ & 79.7\% & 77.8\% & 78.4\% & 78.5\% & 74.7\% & \textbf{81.8\%}& \textbf{81.8\%} \\ \cline{3-10}
& & $\#C=3$ & 82.5\% & 82.7\% & 82.2\% & 83.1\% & 82.4\% & \textbf{85.7\%}& \textbf{84.9\%} \\ \cline{3-10}
& & $p_k \sim Dir(0.1)$ & 80.3\% & 80.2\% & 80.4\% & 80.3\% & 78.6\% & \textbf{83.3\%} & \textbf{83.1\%} \\ \cline{3-10}
& & $p_k \sim Dir(0.5)$ & 85.5\% & 85.6\% & 85.5\% & 84.9\% & 84.7\% & \textbf{86.9\%}& \textbf{86.1\%} \\ \hline
\end{tabular}
 }
\end{table*}

\subsection{Further Experiments on FedConcat-ID}
\label{sec:more_exp_concat_id}

\paragraph{Effectiveness of label inference}
{In addition to our approach, there exist other techniques to infer label distribution from the client model \cite{wang2021addressing, ramakrishna2022inferring}. However, most of them necessitate an auxiliary global dataset to facilitate the inference, which is not applicable in our experimental setting. The initialized bias (IB) method in \citet{ramakrishna2022inferring} deduces label distributions based on the evolution of classifier bias without requiring an auxiliary dataset. Thus, we compare the label inference in FedConcat-ID with the IB method. Although IB-inferred label distributions can be aggregated across various rounds, we utilize its first-round inferred label distributions for clustering, to maintain consistency with our experimental setting.} 

{To validate the effectiveness of label inference in FedConcat-ID, we visualize two examples of the true label distributions, FedConcat-ID inferred label distributions, and IB inferred label distributions in Figure \ref{fig:infer}. We partition the CIFAR-10 dataset into 40 clients by setting \#C=2 and $p_k \sim Dir(0.5)$ respectively, and display the distributions of the first client. Our observation is that the distributions inferred by FedConcat-ID are closely aligned with the true distributions. While IB inferred distributions do indicate the general trend of the true distributions, there exists a significant gap between its inferred values and the ground truth in the first round. The assumptions of small weight products and stable bias in \citet{ramakrishna2022inferring} are less likely to hold true in real-world test scenarios. }

\begin{figure*}[h]
    \centering
    \subfloat[\#C=2]{\includegraphics[width=\columnwidth]{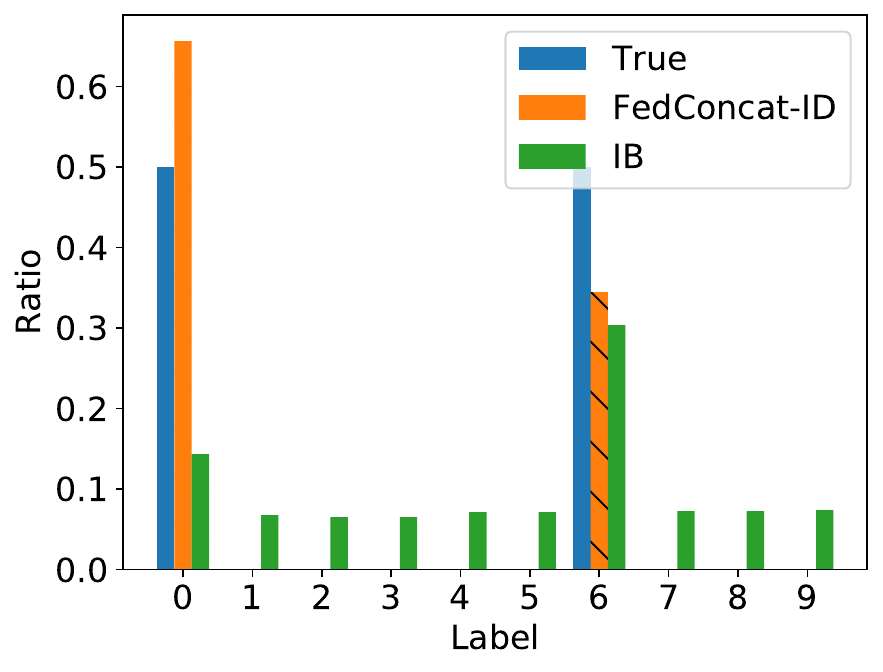}}
    \subfloat[$p_k \sim Dir(0.5)$]{\includegraphics[width=\columnwidth]{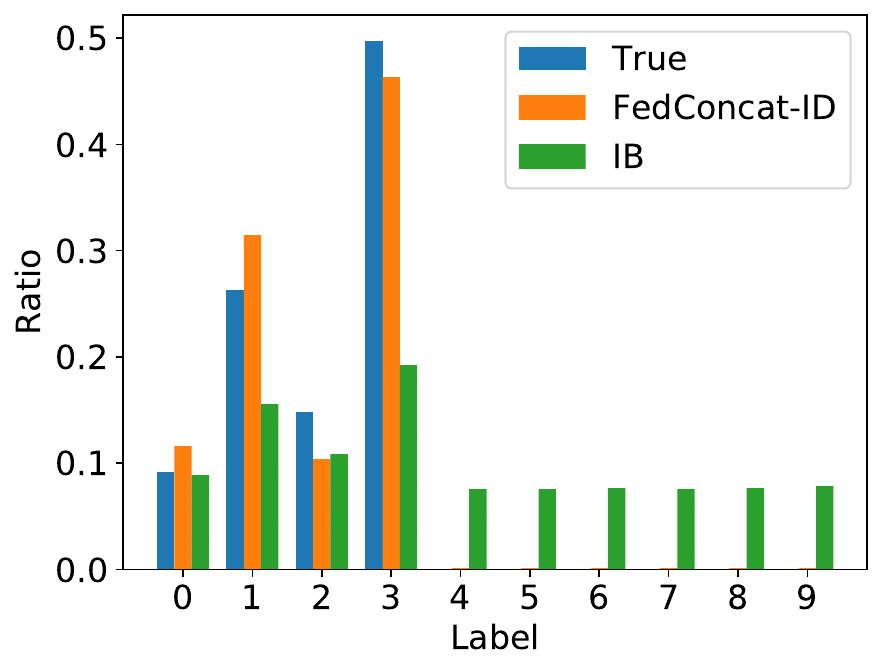}}
    \caption{Comparing FedConcat-ID inferred distributions with true distributions and IB inferred distributions on CIFAR-10.}
    \label{fig:infer}
\end{figure*}

{We also compare with the final model accuracy using clusters based on IB inferred distributions. The results are presented in Table \ref{tbl:ib}. We can see that both label inference methods yield similar final test accuracy. Despite the IB inferred distributions deviating from the true distributions, its predictions can still aid in forming clusters among clients with analogous distributions. However, it's worth noting that the IB inference method is only applicable to neural networks with a bias term in the last layer, while our FedConcat-ID inference method can accommodate neural networks even without a bias term.}

\begin{table}[h]
\centering
\caption{Experimental results of FedConcat-ID compared with IB label inference.}
\label{tbl:ib}
\resizebox{\columnwidth}{!}{
\begin{tabular}{|c|c|c|c|}
\hline
Dataset & Partition & FedConcat-ID & FedConcat-IB \\ \hline
\multirow{4}{*}{CIFAR-10} & $\#C=2$ & \textbf{56.5\%$\pm$2.6\%} & 56.4\%$\pm$1.9\% \\ \cline{2-4}
& $\#C=3$ & \textbf{61.8\%$\pm$0.8\%} & 61.5\%$\pm$1.2\% \\ \cline{2-4}
& $p_k \sim Dir(0.1)$ &  \textbf{56.9\%$\pm$1.4\%} & 55.6\%$\pm$1.0\% \\ \cline{2-4}
& $p_k \sim Dir(0.5)$ &  63.7\%$\pm$0.8\% & \textbf{64.2\%$\pm$0.2\%} \\ \hline
\multirow{4}{*}{SVHN} & $\#C=2$ &  83.2\%$\pm$1.9\% & \textbf{83.8\%$\pm$0.3\%}  \\ \cline{2-4}
& $\#C=3$ & \textbf{86.1\%$\pm$0.5\%} & 86.0\%$\pm$1.1\% \\ \cline{2-4}
& $p_k \sim Dir(0.1)$ & 82.9\%$\pm$0.3\% & \textbf{84.3\%$\pm$0.6\%} \\ \cline{2-4}
& $p_k \sim Dir(0.5)$ & \textbf{87.9\%$\pm$0.3\%} & \textbf{87.9\%$\pm$0.5\%}  \\ \hline
\multirow{4}{*}{FMNIST} & $\#C=2$ & 83.0\%$\pm$2.0\% & \textbf{84.0\%$\pm$1.6\%} \\ \cline{2-4}
& $\#C=3$ & 86.6\%$\pm$0.1\% & \textbf{86.8\%$\pm$0.1\%} \\ \cline{2-4}
& $p_k \sim Dir(0.1)$ & 85.0\%$\pm$0.4\% & \textbf{85.3\%$\pm$0.2\%} \\ \cline{2-4}
& $p_k \sim Dir(0.5)$ & \textbf{87.5\%$\pm$0.2\%} & 87.3\%$\pm$0.3\% \\ \hline
\end{tabular}
 }
\end{table}

\paragraph{Training curves of FedConcat-ID}
We show the training curves of FedConcat-ID compared with five baseline algorithms in Figure \ref{fig:comm_id}. As we can see, FedConcat-ID also outperforms all five baseline algorithms on simple CNN model. When baseline algorithms are trained on model with same size to concatenated final global model (the same setting as Figure \ref{fig:large} in main paper), as shown in Figure \ref{fig:large-id}, FedConcat-ID can also achieve higher accuracy with the same communication cost. 

\begin{figure*}[!]
    \centering
    \subfloat[CIFAR-10, \#C=2]{\includegraphics[width=0.33\textwidth]{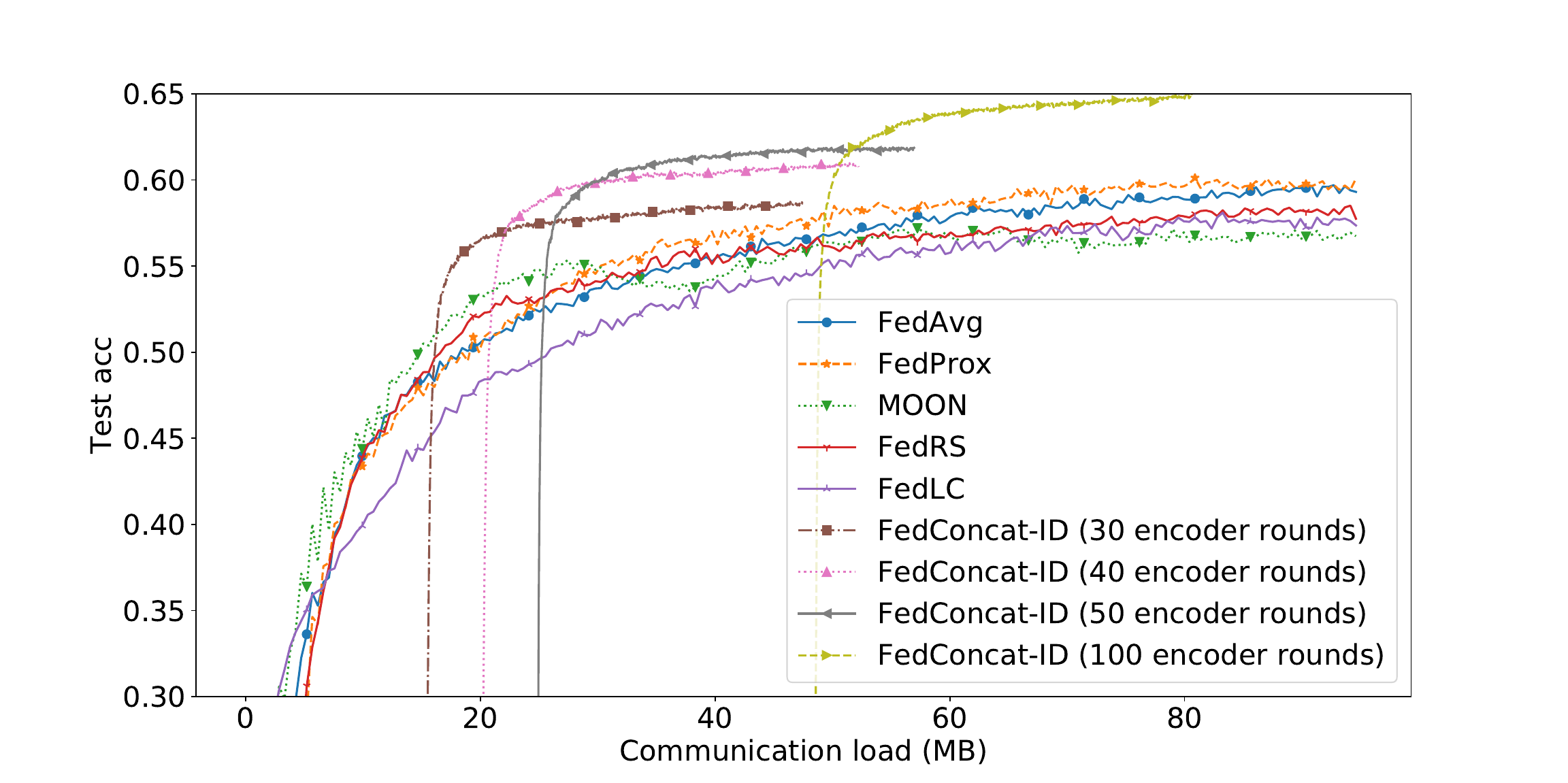}}
    \subfloat[SVHN, \#C=2]{\includegraphics[width=0.33\textwidth]{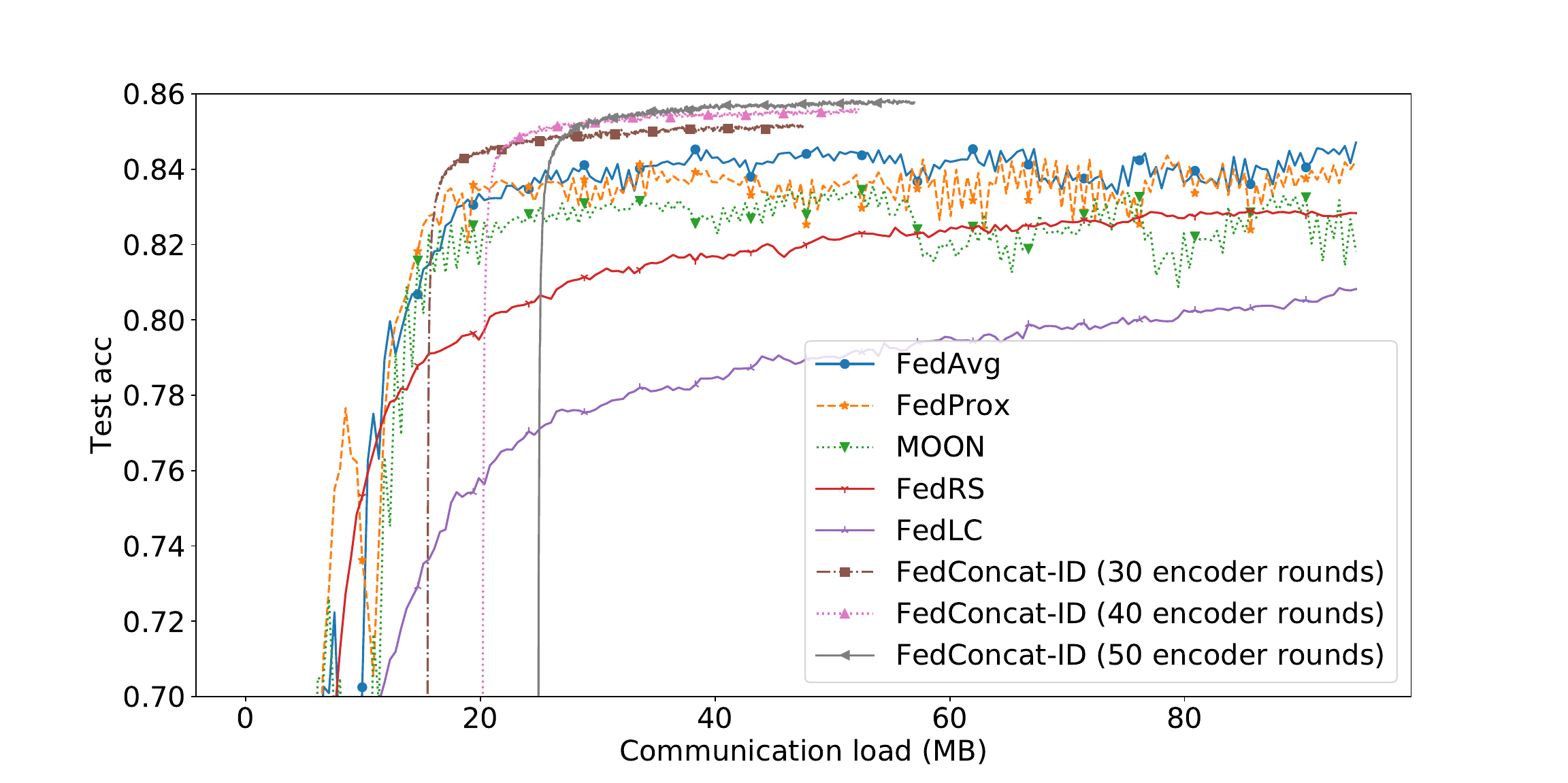}}
    \subfloat[FMNIST, \#C=2]{\includegraphics[width=0.33\textwidth]{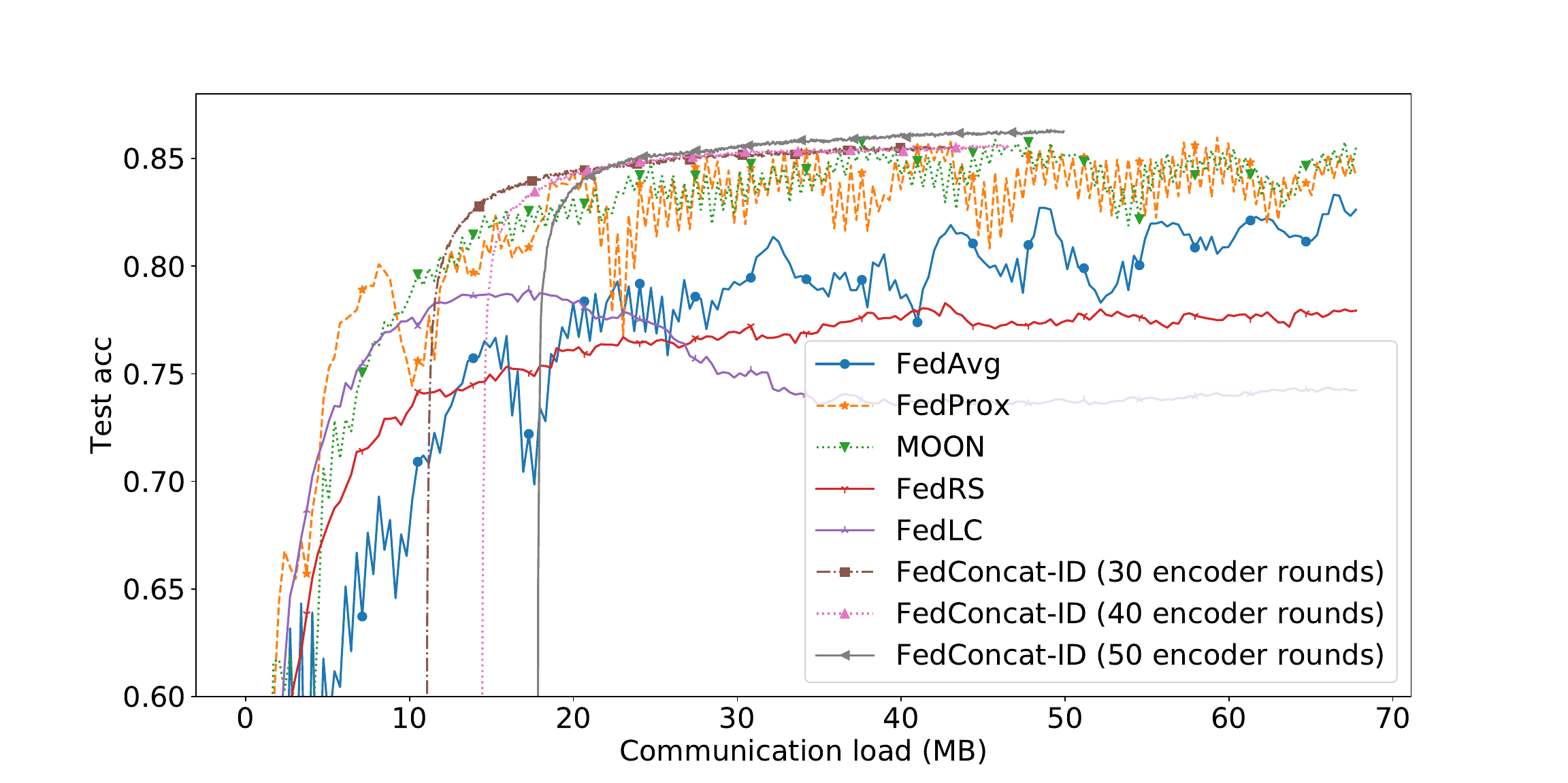}}
    \hfill
    \subfloat[CIFAR-10,$p_k \sim Dir(0.5)$]{\includegraphics[width=0.33\textwidth]{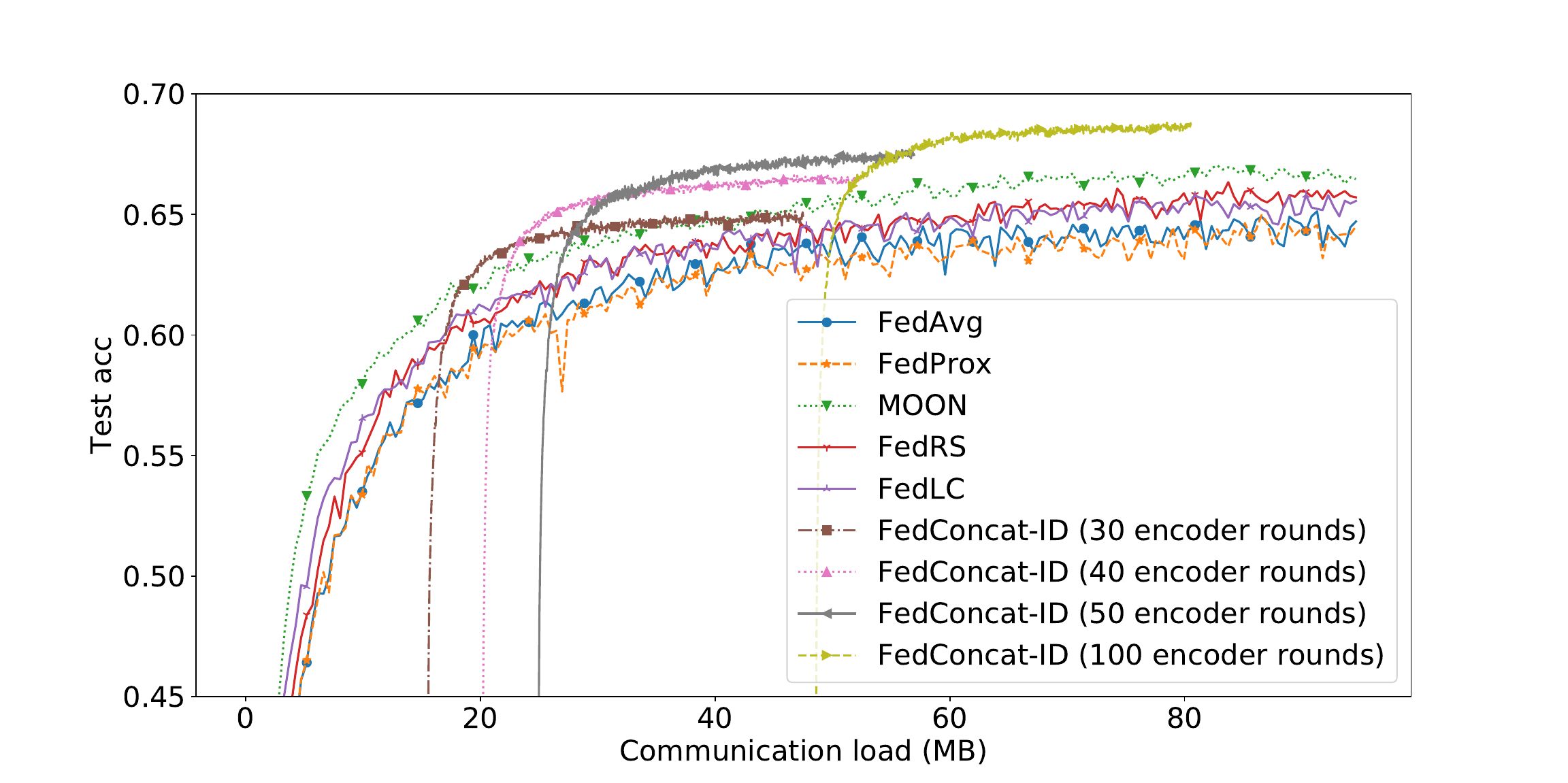}}
    \subfloat[SVHN, $p_k \sim Dir(0.5)$]{\includegraphics[width=0.33\textwidth]{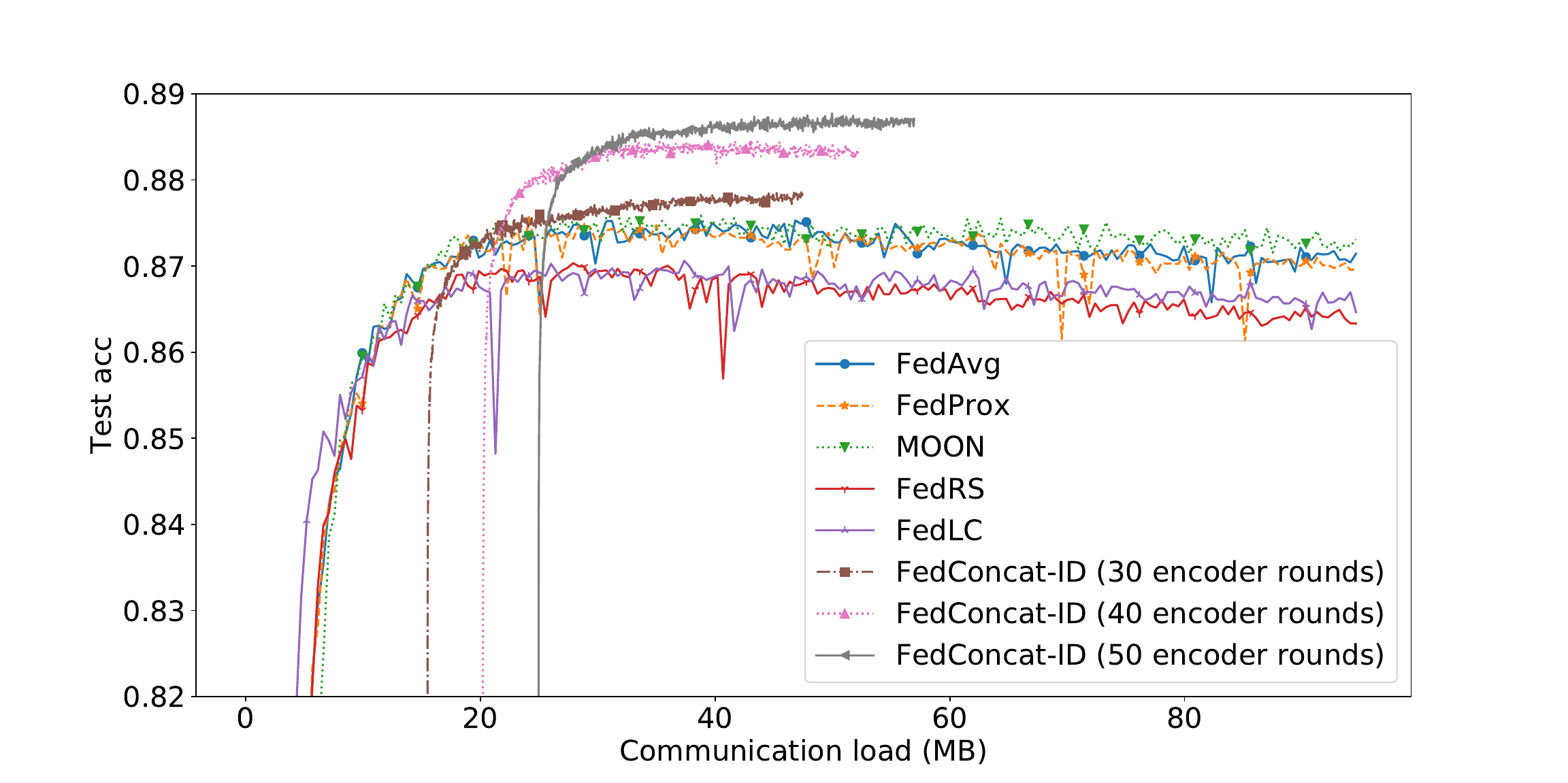}}
    \subfloat[FMNIST, $p_k \sim Dir(0.5)$]{\includegraphics[width=0.33\textwidth]{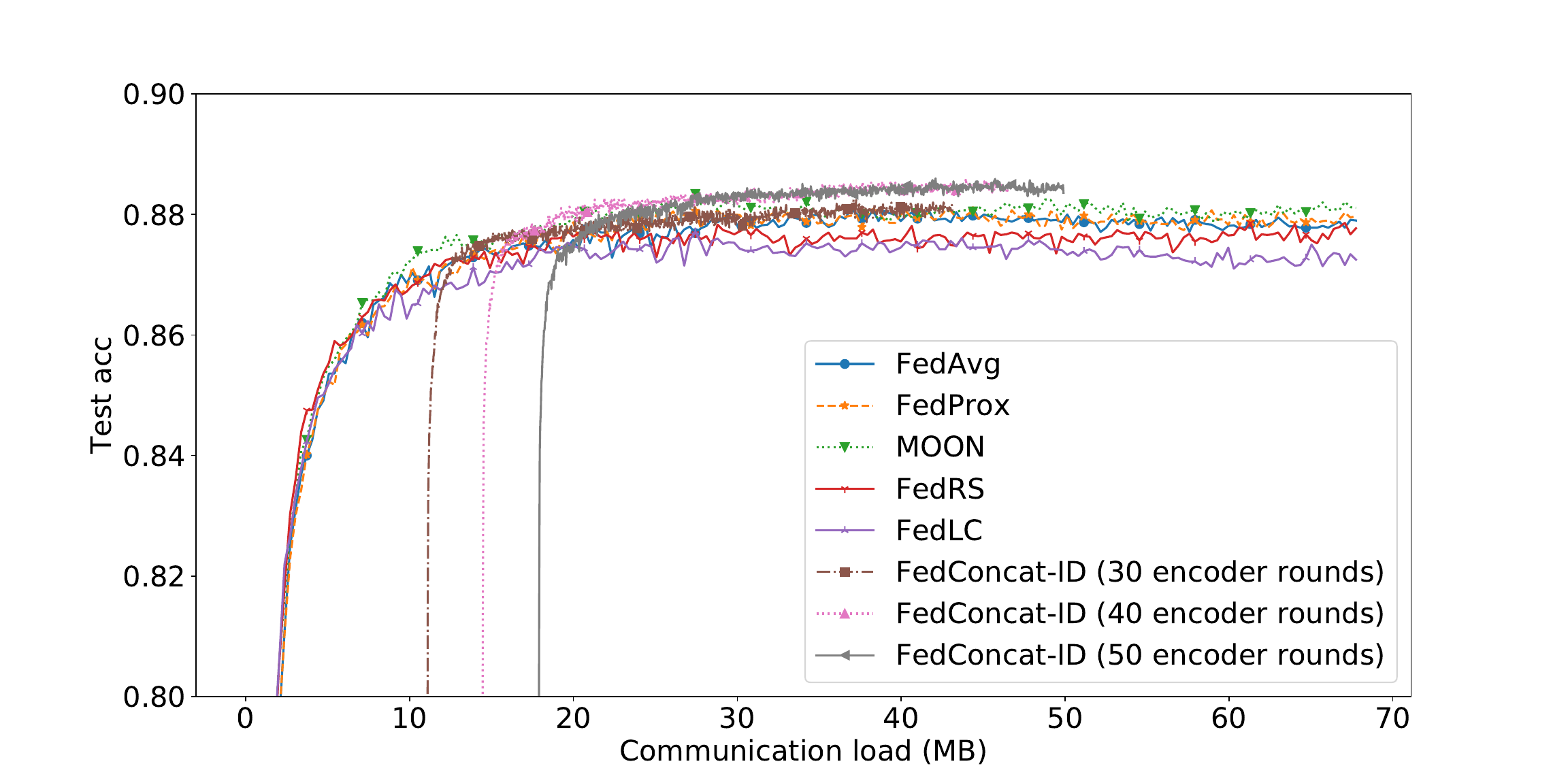}}
    \caption{The training curves of FedConcat-ID. Baseline algorithms use simple CNN model.}
    \label{fig:comm_id}
\end{figure*}

\begin{figure}[]
    \centering
    \includegraphics[width=\columnwidth]{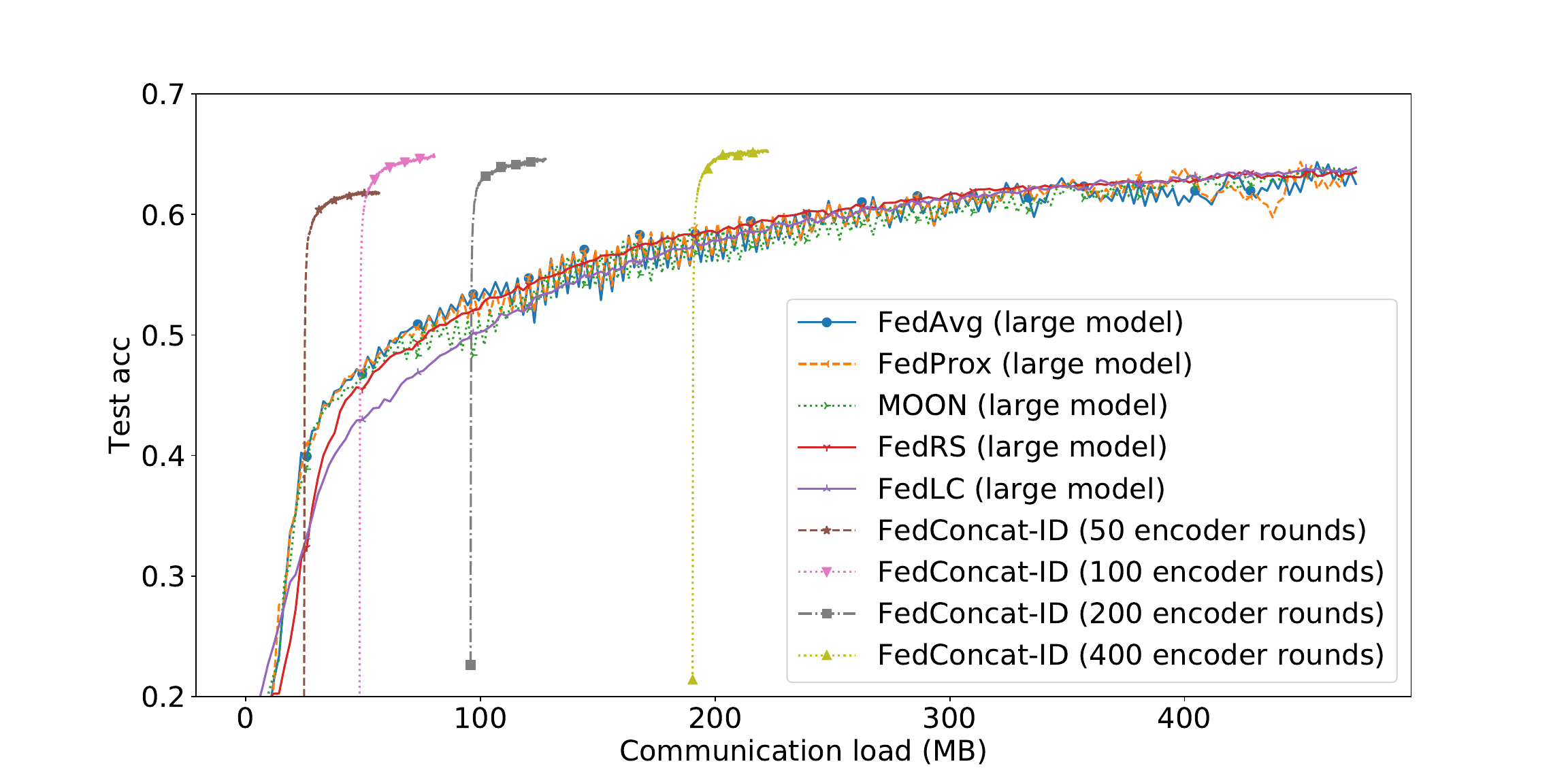}
    \caption{Training curves of FedConcat-ID. Baseline algorithms are trained on model with same size to final global model of FedConcat-ID. Here CIFAR-10 dataset is divided into 40 clients using \#C=2 partition.}
    \vspace{-10pt}
    \label{fig:large-id}
\end{figure}

For the effect of clustering, we compare FedConcat-ID using 5 clusters and without clustering in Figure \ref{fig:cluster-id}. We come to the same conclusion as FedConcat in Figure \ref{fig:wcluster} of main paper. The clustering of FedConcat-ID is effective. Without clustering, the training becomes unstable. The training curves with different number of clusters are shown in Figure \ref{fig:cluster-id2}. Generally by choosing the elbow value as the number of clusters, FedConcat-ID can achieve high accuracy with stable convergence. 

\begin{figure}[]
    \centering
    \includegraphics[width=\columnwidth]{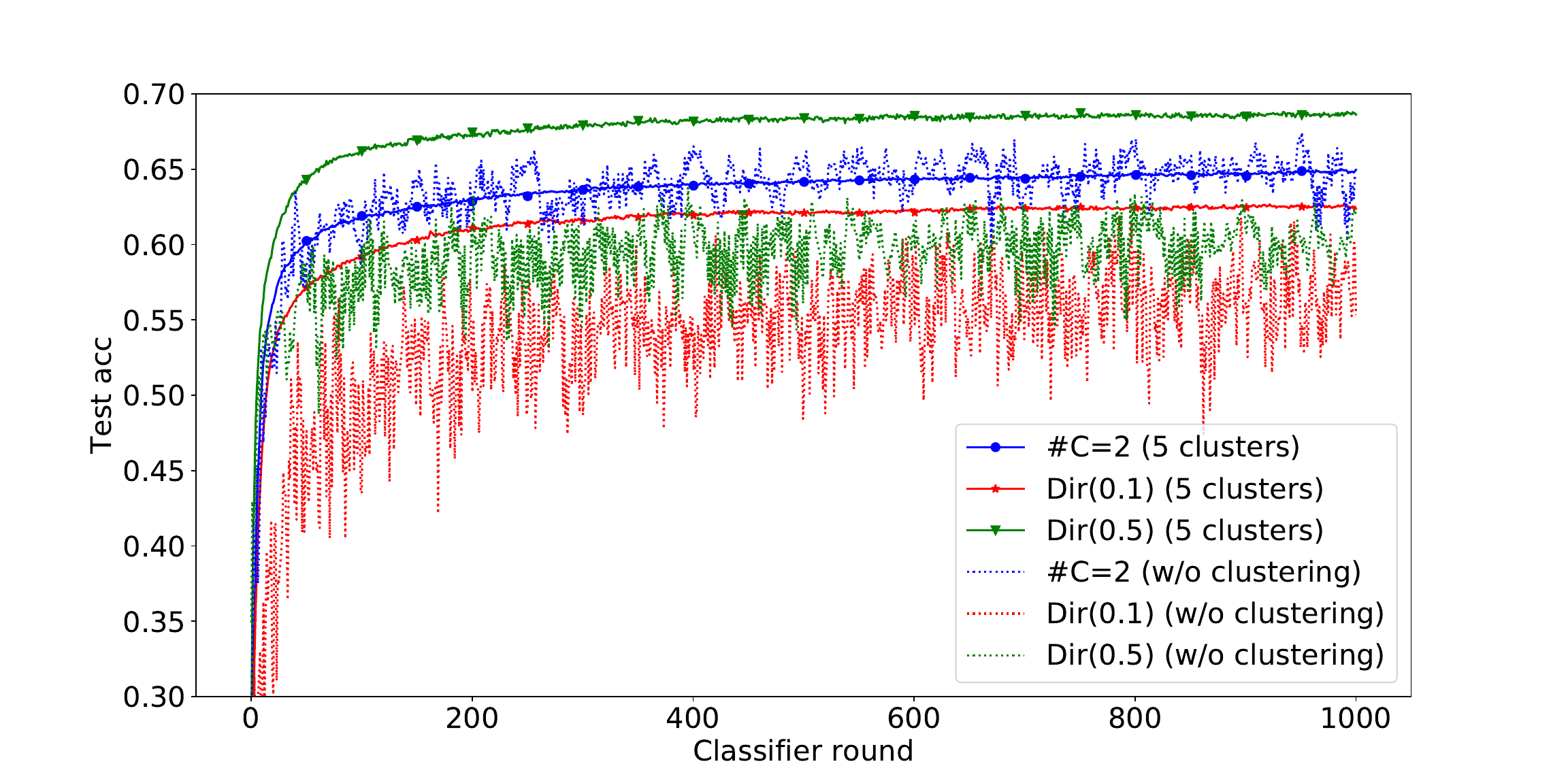}
    \caption{Training curves of FedConcat-ID with clustering versus without clustering on CIFAR-10. Here we divide the whole dataset into 40 clients. }
    \vspace{-10pt}
    \label{fig:cluster-id}
\end{figure}

\begin{figure*}[]
    \centering
    \subfloat[CIFAR-10, \#C=2]{\includegraphics[width=0.33\textwidth]{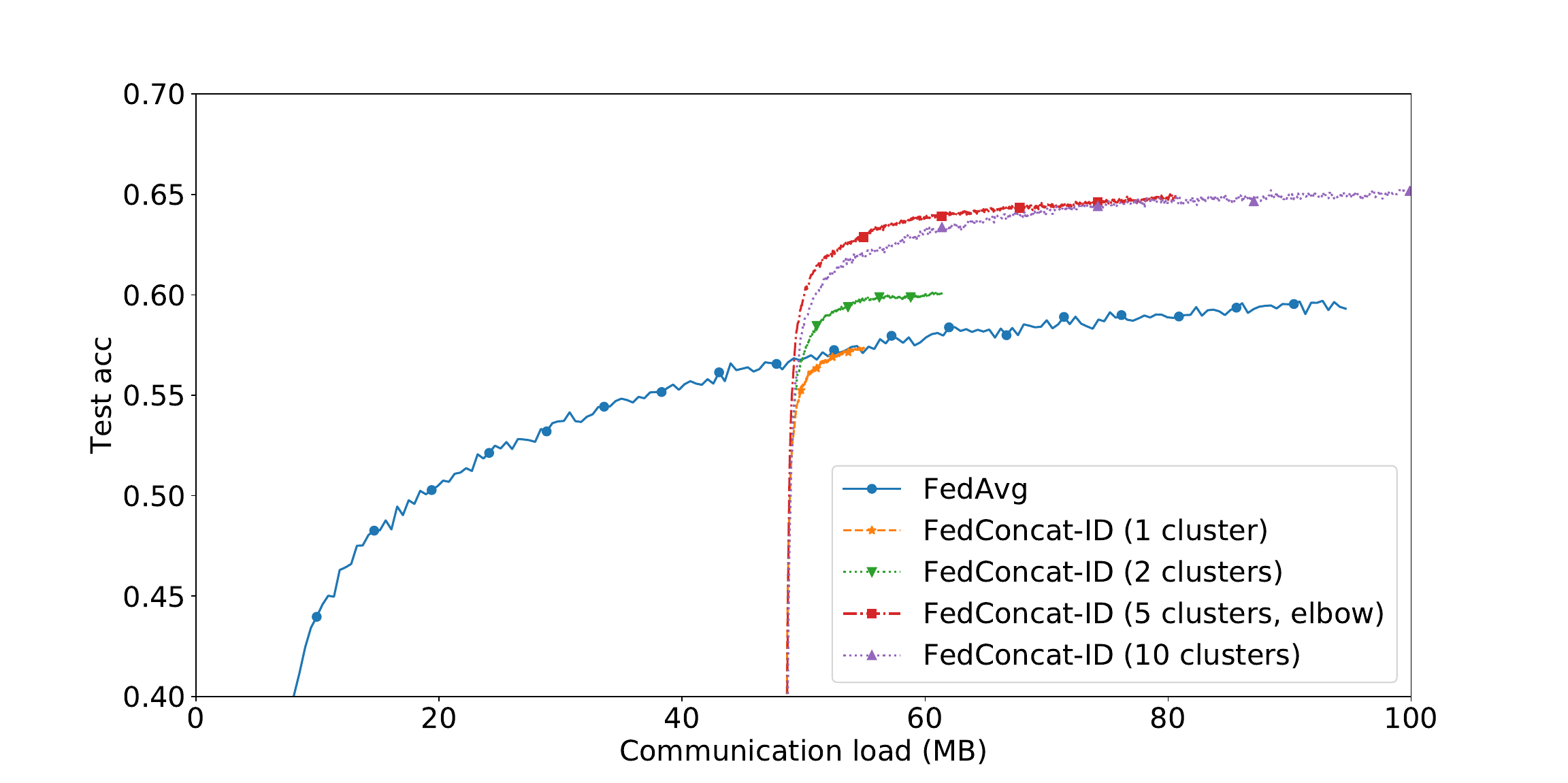}}
    \subfloat[CIFAR-10,$p_k \sim Dir(0.1)$]{\includegraphics[width=0.33\textwidth]{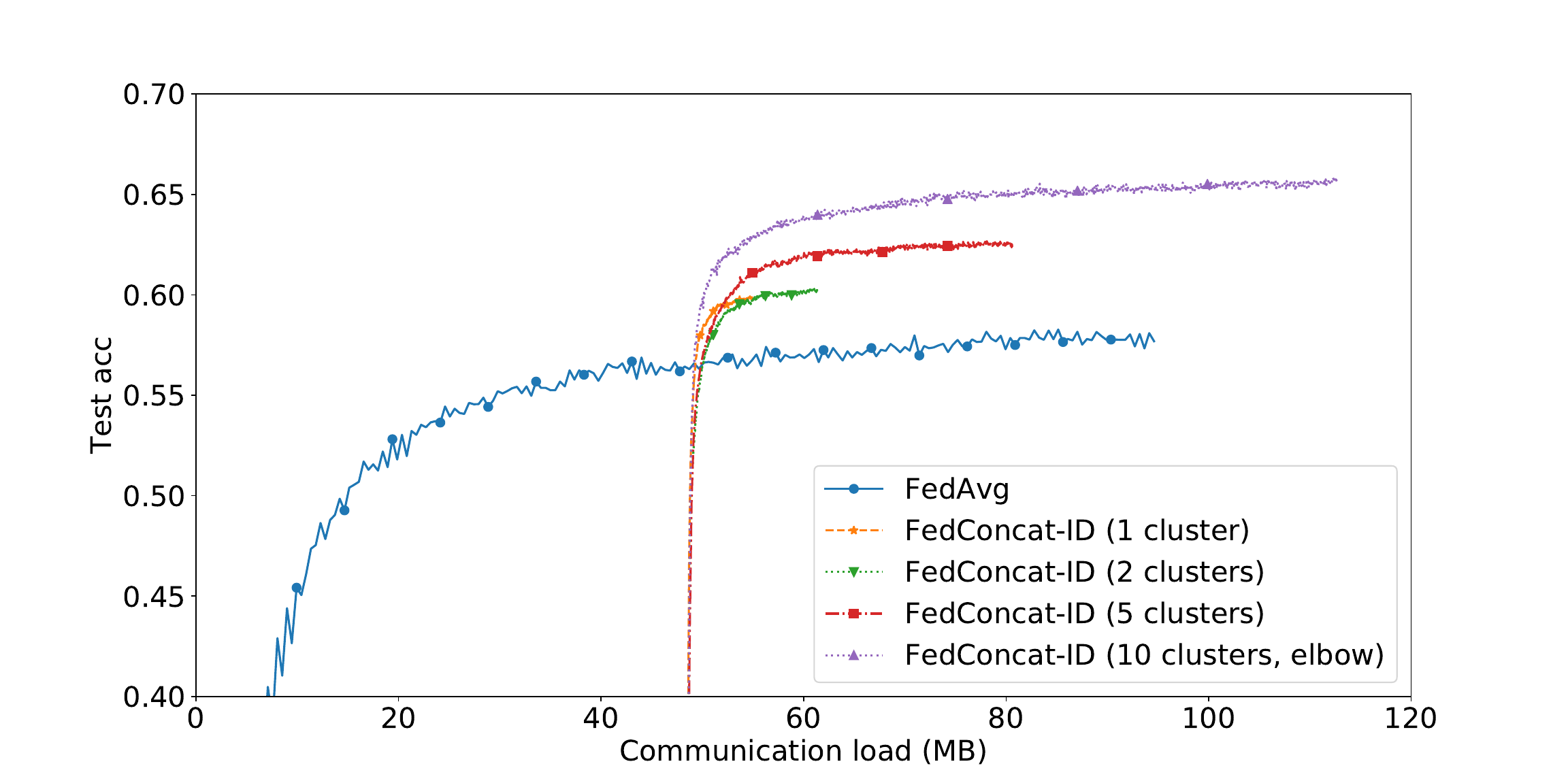}}
    \subfloat[CIFAR-10,$p_k \sim Dir(0.5)$]{\includegraphics[width=0.33\textwidth]{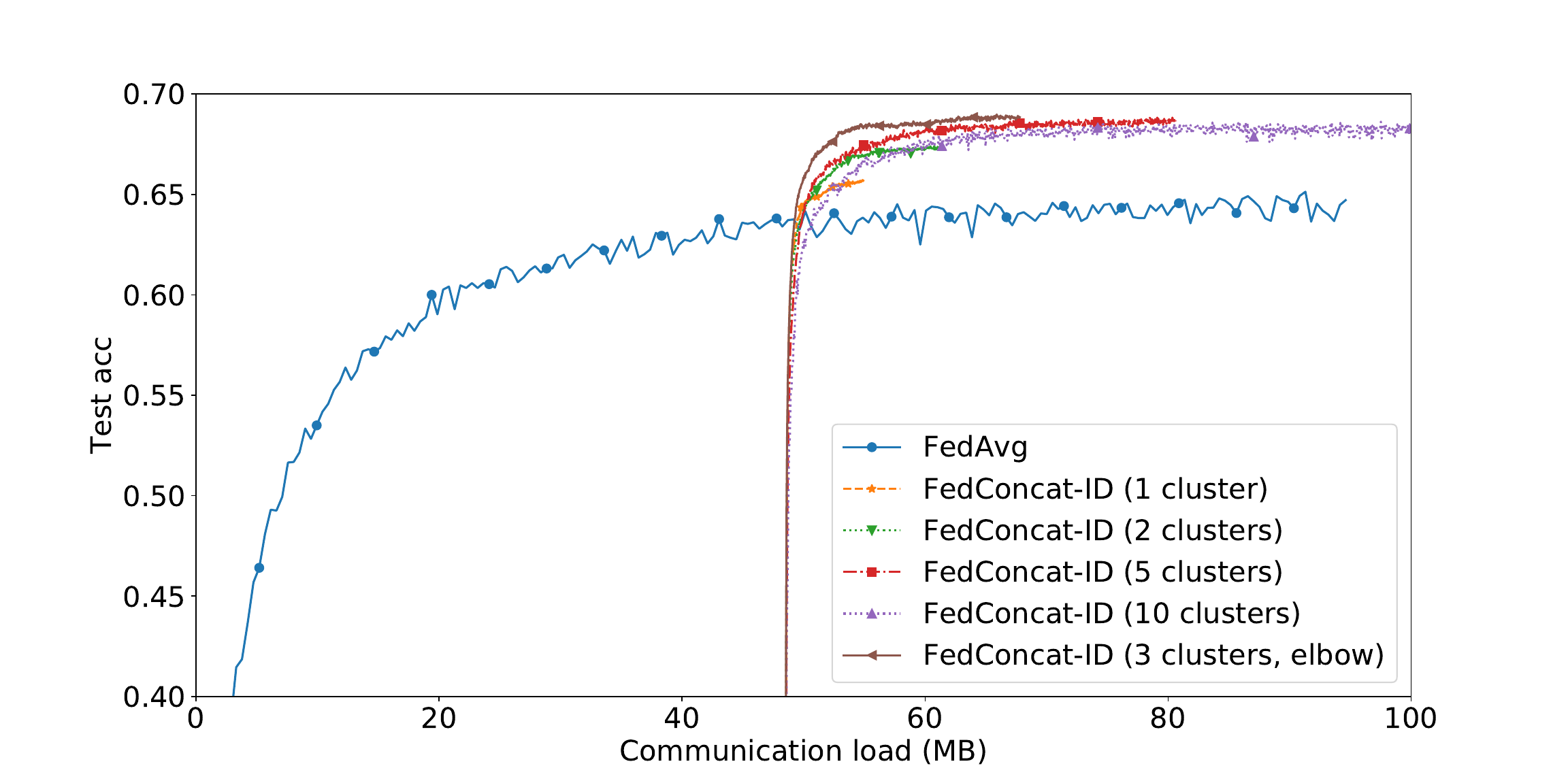}}
    \hfill
    \subfloat[FMNIST, \#C=2]{\includegraphics[width=0.33\textwidth]{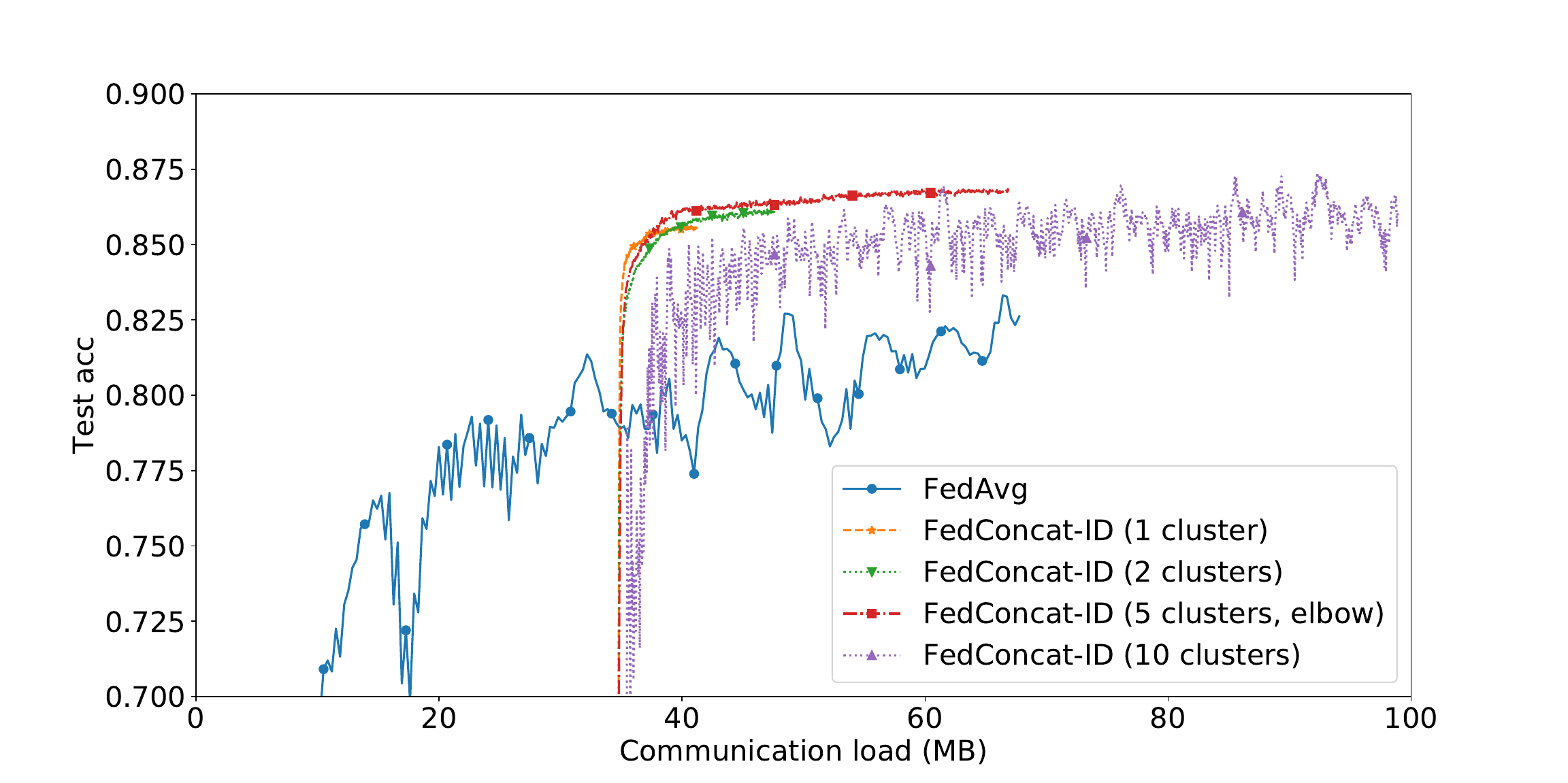}}
    \subfloat[FMNIST, $p_k \sim Dir(0.1)$]{\includegraphics[width=0.33\textwidth]{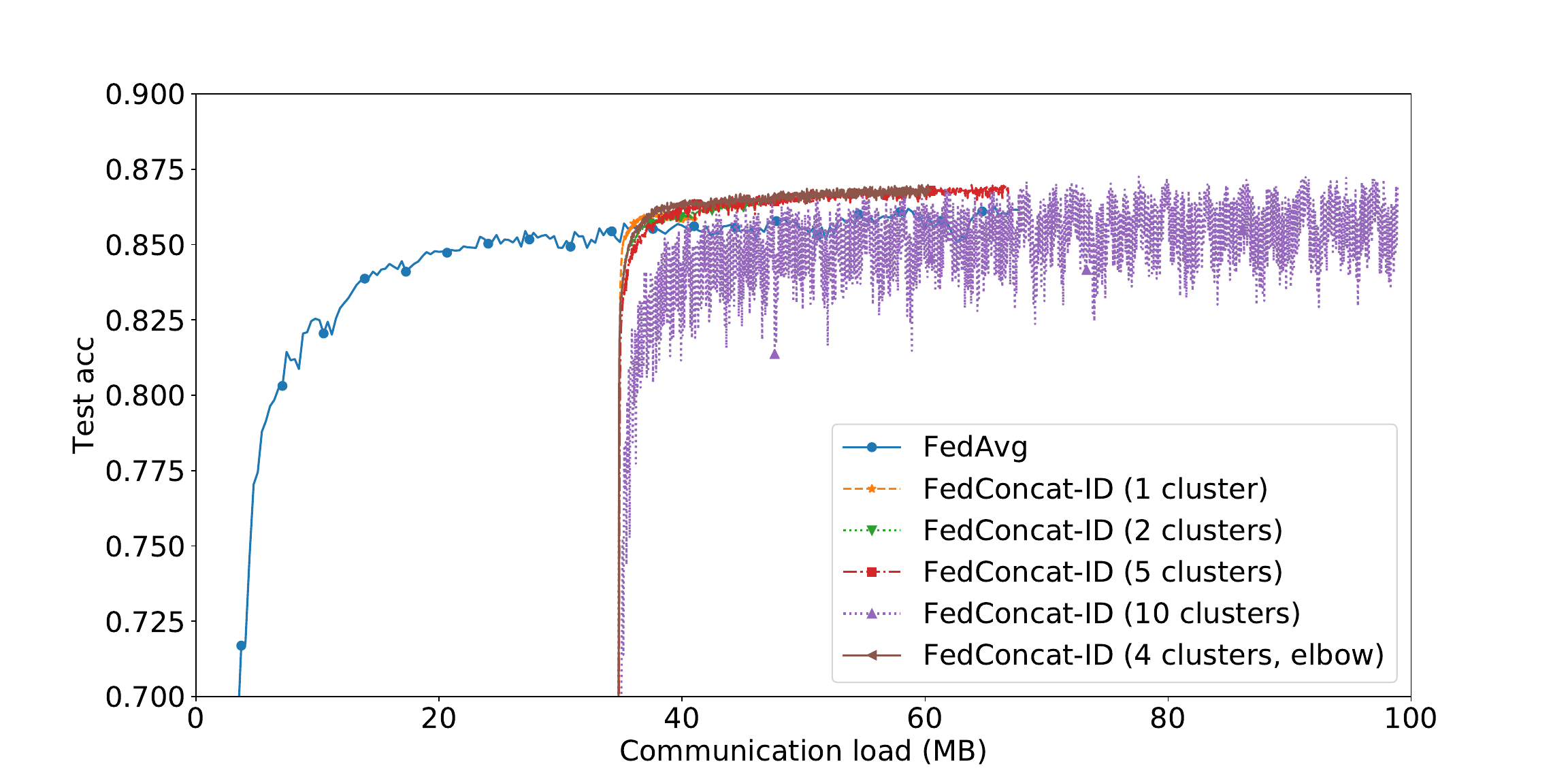}}
    \subfloat[FMNIST, $p_k \sim Dir(0.5)$]{\includegraphics[width=0.33\textwidth]{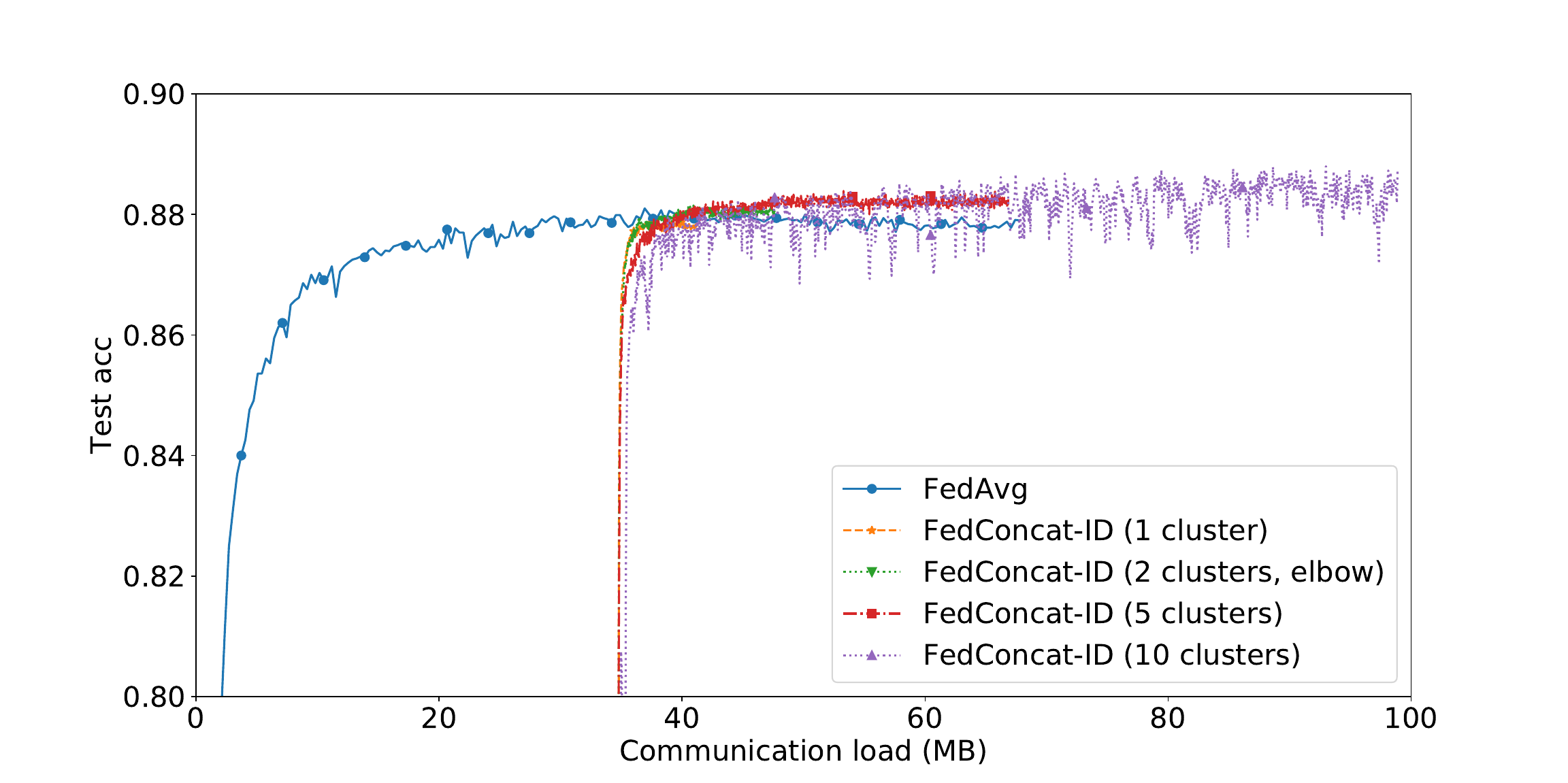}}
    \caption{The training curves of FedConcat-ID with different cluster number. Elbow means such $K$ is estimated to be about the elbow value. }
    \label{fig:cluster-id2}
\end{figure*}

\end{document}